\documentclass{article}

\usepackage[preprint]{neurips_2026}

\usepackage[utf8]{inputenc} 
\usepackage[T1]{fontenc}    
\usepackage{hyperref}       
\usepackage{url}            
\usepackage{booktabs}       
\usepackage{amsfonts}       
\usepackage{nicefrac}       
\usepackage{microtype}      
\usepackage{xcolor}         

\usepackage{graphicx}
\usepackage{wrapfig}        
\usepackage{float}          
\usepackage{placeins}       
\graphicspath{{./}}
\usepackage{subcaption}
\usepackage{amsmath,amssymb}
\usepackage{xspace}
\usepackage{mathtools}
\usepackage{amsthm}

\IfFileExists{dsfont.sty}{\usepackage{dsfont}}{}
\IfFileExists{upgreek.sty}{\usepackage{upgreek}}{}
\providecommand{\mathds}[1]{\ensuremath{\mathchoice%
    {\hbox{$\mathsurround=0pt\rm#1\!\!#1$}}%
    {\hbox{$\mathsurround=0pt\rm#1\!\!#1$}}%
    {\hbox{$\mathsurround=0pt\scriptstyle\rm#1\!\!#1$}}%
    {\hbox{$\mathsurround=0pt\scriptscriptstyle\rm#1\!\!#1$}}}}
\providecommand{\uppi}{\pi}

\usepackage[capitalize,noabbrev]{cleveref}

\usepackage{algorithm}
\usepackage{algorithmic}

\theoremstyle{plain}
\newtheorem{theorem}{Theorem}[section]
\newtheorem{proposition}[theorem]{Proposition}
\newtheorem{lemma}[theorem]{Lemma}

\theoremstyle{definition}
\newtheorem{definition}[theorem]{Definition}

\theoremstyle{remark}

\title{Knowledge-Free Correlated Agreement for Incentivizing Federated Learning}

\author{%
  Leon Witt \\
  SIMIS Shanghai \\
  Shanghai, China \\
  \texttt{leon.witt@simis.cn} \\
  \And
  Togrul Abbasli \\
  Tsinghua University \\
  Beijing, China \\
  \texttt{tgl22@mails.tsinghua.edu.cn} \\
  \And
  Kentaroh Toyoda \\
  Keio University \\
  Yokohama, Japan \\
  \texttt{toyoda@sasase.ics.keio.ac.jp} \\
  \And
  Wojciech Samek \\
  Technical University of Berlin (TU Berlin) \\
  Berlin, Germany \\
  \texttt{wojciech.samek@hhi.fraunhofer.de} \\
  \And
  Lucy Klinger \\
  SIMIS Shanghai \\
  Shanghai, China \\
  \texttt{lucy.klinger@simis.cn} \\
}

\begin{document}

\maketitle

\begin{abstract}
We introduce Knowledge-Free Correlated Agreement (KFCA) to reward client contributions in federated learning (FL) without relying on ground truth, a public test set, or distribution knowledge. Under categorical reports and an honest majority, KFCA is strictly truthful, addressing Correlated Agreement's (CA) label-flipping vulnerability. We evaluate KFCA on federated LLM adapter tuning and a real-world PCB inspection task, showing efficient, real-time reward computation suitable for decentralized and blockchain-based incentive designs.
\end{abstract}

\section{Introduction}
Federated learning (FL) enables multiple clients to collaboratively train a model without sharing raw data \cite{BrendanMcMahan2017}. For FL to become mainstream, a key unsolved issue is how to assess and incentivize client contributions: updates are hard to verify, yet the reward rule must discourage low-effort participation and malicious reporting \cite{SP_MD_ARXIV, SP_MD_IEEE, SP_FL_TECHRXIV, SP_FL_MD, LeonSP}. Classical contribution measures such as Shapley value require a verifiable utility (often via a public test set) and can be computationally expensive, which becomes problematic in settings without clear ground truth. A comprehensive overview of contribution measurement techniques in FL can be found in Appendix~\ref{app:background}.

Multi-task peer prediction (MTPP) mechanisms avoid explicit ground truth by rewarding agreement patterns across clients. A prominent example is Correlated Agreement (CA) \cite{OriginalCA}, but CA has two practical drawbacks for FL: (i) it requires the server to estimate the full report-correlation matrix $\Delta$ from \emph{all} clients' joint reports---incompatible with decentralized FL and adding per-round overhead that is quadratic in the number of clients and linear in the number of reports per client---and (ii) it admits profitable informed attacks such as coordinated label flipping, in which a client inverts their labels and still receives full reward.

We introduce Knowledge-Free Correlated Agreement (KFCA), a simple scoring rule for MTPP based on Dasgupta and Ghosh \cite{DasguptaGhosh2013}. Under a categorical-world condition (common in single-truth classification and Neural Network parametrization, enforceable via preprocessing), KFCA is strongly truthful, avoids CA's label-flipping equilibrium, and supports real-time reward computation without aggregating all reports. This is especially relevant for federated Large Language Model (LLM) adapter fine-tuning (e.g., low-rank adaptation (LoRA) \cite{hu2022lora} / weight-decomposed low-rank adaptation (DoRA) \cite{liu2024dora} with federated averaging (FedAvg) \cite{BrendanMcMahan2017}), where contributions are hard to evaluate ex post and a ``ground-truth'' contribution score is typically unavailable. Specifically, the contributions of this work are:




\begin{enumerate}
\item A multi-task peer-prediction reward mechanism for federated learning that is knowledge-free (no global $\Delta$ estimation or report aggregation), applicable across a wide range of FL tasks, and capable of real-time reward computation---making it suitable for blockchain-based decentralized incentivized FL.

\item Under a categorical-world condition on the induced correlation structure---which we show holds in most FL contexts, including settings with heavily non-IID data distributions (Appendix~\ref{app:non_iid_categorical})---KFCA is strongly truthful: as long as fewer than $50\%$ of clients deviate, honest, effortful reporting strictly maximises expected reward, eliminating CA's label-flipping vulnerability and matching the standard Byzantine-fault-tolerance threshold for any peer-prediction mechanism without ground truth (Appendix~\ref{app:permutation_indistinguishability}).

\item Empirically, KFCA achieves orders-of-magnitude lower reward-computation cost than Shapley-value estimators and can be applied where no ground truth is available. We demonstrate it's feasibility on state-of-the-art federated LLM adapter tuning with LoRA/DoRA and real-world Printed Circuit Board (PCB) quality inspection.
\end{enumerate}
\section{System Setting}

In FL, the challenge is to incentivize clients to participate, given that their contributions are not easily verifiable. Unlike methods such as the Shapley value, which determine an explicit ground-truth contribution by verifying each contribution given a performance metric (e.g., accuracy) on a test set, MTPP aims to elicit latent information from multiple clients through a scoring method based on correlations of clients' reports without the need to know the ground-truth. In this model, clients are assigned multiple tasks. An example of a task would be to classify an image or fine-tune an LLM on a domain-specific data shard, where a client has to invest \textit{effort} (e.g., training a classifier/LLM) to receive a \textit{signal} (e.g., class of the image). An optimal MTPP mechanism should incentivize clients (i) to train their AI model and then (ii) honestly communicate the signals, leading to a Bayesian Nash equilibrium in which the aforementioned behavior is most rewarded, ensuring no other strategy is more lucrative.

Formally, consider an FL system with clients \(N=\{1,\dots,n\}\) collaborating on tasks \(M=\{1,\dots,m\}\). Each task \(k\in M\) has an unobserved latent truth \(Y^k\in[L]=\{1,\dots,L\}\) (e.g., a correct class label or optimal model update), drawn independently and identically from a categorical prior distribution over \([L]\). All clients assigned to task \(k\) share the same latent truth \(Y^k\) but cannot observe it directly. The mechanism designer's goal is to construct a payment rule that incentivizes each client to exert effort to obtain informative signals correlated with \(Y^k\) and to report these signals truthfully, so that truthful, effortful reporting forms a Bayesian Nash equilibrium.

Figure~\ref{fig:MTPP_algorithm} visualizes MTPP, and Table~\ref{tab:fl_mapping} maps it to the two FL instantiations evaluated here: KFCA-D (clients scored against a shared public dataset) and KFCA-QP (scored against the signs of their quantized parameter updates, when no public dataset is available); illustrated in Figure~\ref{fig:public_test} (Section~\ref{KFCA}).

\begin{figure}[t]
\centering
\begin{minipage}[c]{0.5\textwidth}
    \centering
    \includegraphics[width=\linewidth]{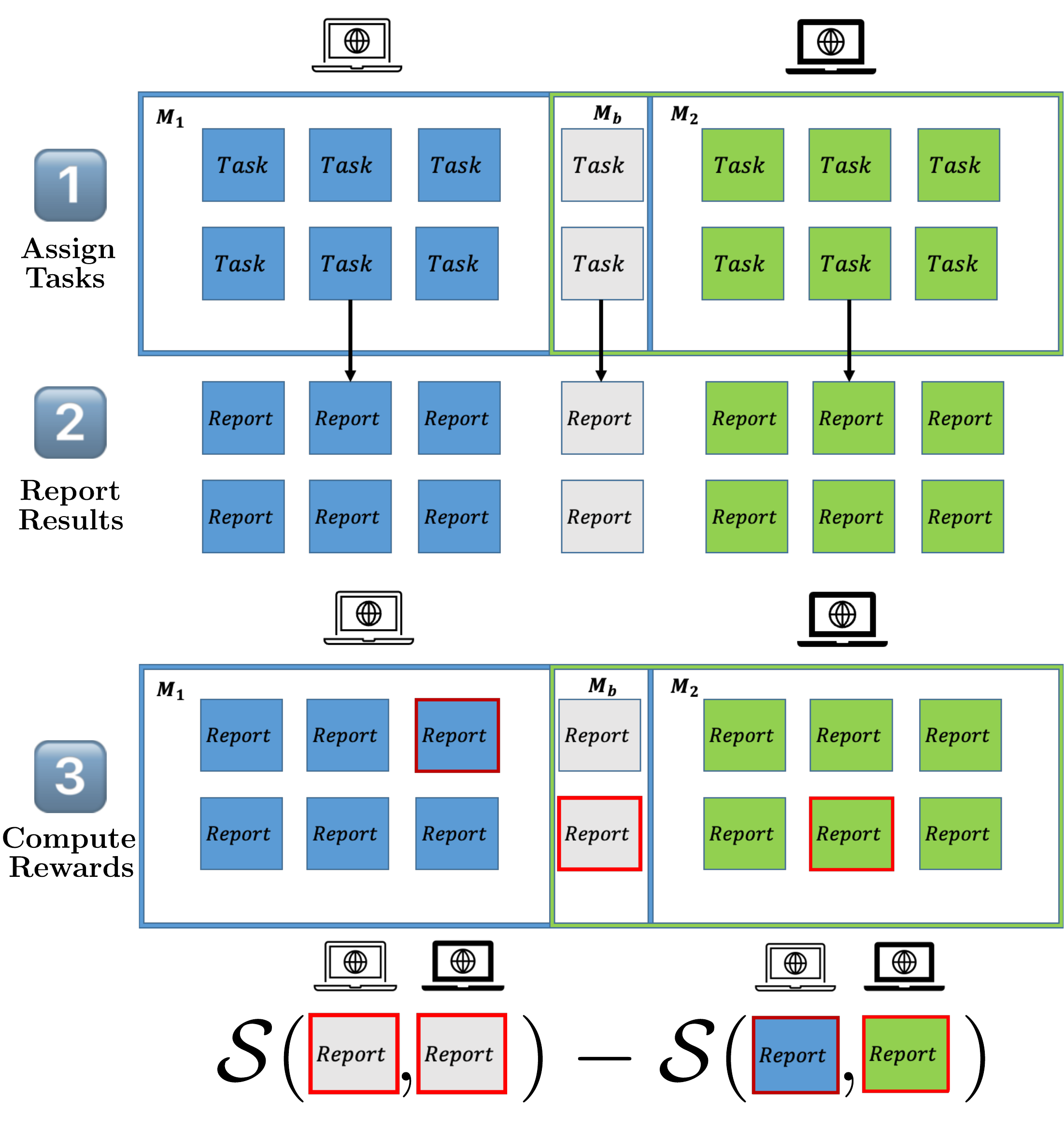}
    \captionof{figure}{Illustration of the multi-task peer-prediction mechanism. }
    \label{fig:MTPP_algorithm}
\end{minipage}\hfill
\begin{minipage}[c]{0.45\textwidth}
    \centering
    \small
    \setlength{\tabcolsep}{4pt}
    \renewcommand{\arraystretch}{1.2}
    \begin{tabular}{@{}p{2cm}p{4.2cm}@{}}
        \toprule
        \textbf{MTPP} & \textbf{ FL (KFCA-D / KFCA-QP)} \\
        \midrule
        Task $k$ & Classify a test-set sample / decide update sign at parameter $p$ of a model update $\Delta\theta$  \\
        Latent truth $Y^k$ & Correct class label / optimal update direction at $p$ \\
        Signal $Z_i^k$ & Local classification of an image / $\mathrm{sign}(\Delta\theta_i[p]) \in \{-1,+1\}$ \\
        Effort $e_i^k\!=\!1$ & Train local model / fine-tune LLM adapter (LoRA/DoRA) \\
        No effort $e_i^k\!=\!0$ & Random prediction / zero or random update \\
        Report $R_i^k$ & Submitted label / submitted sign \\
        \bottomrule
    \end{tabular}
    \captionof{table}{MTPP mapped to examples of KFCA-D (public test set) and KFCA-QP (sign-quantized parameter updates) instantiations.}
    \label{tab:fl_mapping}
\end{minipage}
\end{figure}

\subsection{Preliminary}

\paragraph{Assumptions:}
We assume:
(i) payments can be made to clients;
(ii) clients are risk-neutral, meaning their utility equals expected payment minus effort cost;  
(iii) effort is binary: \( e_i^k \in \{0,1\} \), where \( e_i^k = 1 \) indicates the agent exerts effort and receives an informative signal, and \( e_i^k = 0 \) corresponds to no effort; and
(iv) tasks are ex-ante identical: drawn independently from a common prior distribution over latent truths.
See Appendix~\ref{app:MTTPassumptions} for formal definitions.

\paragraph{Reporting Strategy.}
After (possibly) exerting effort, client $i$ submits a report $R_i^k \in [L]$ that may be a (randomized) function of its private signal $Z_i^k$, inducing the chain $Y^k \to Z_i^k \to R_i^k$. Formally a strategy is a conditional distribution $F_i(r\mid a) := \mathbb{P}(R_i^k = r \mid Z_i^k = a)$ with $\sum_r F_i(r\mid a)=1$. Because expected payments are linear in $F_i$, every randomized strategy's expected reward is a convex combination of deterministic rewards; we therefore restrict attention to deterministic strategies $F_i(r\mid a) = \mathds{1}\{r = f_i(a)\}$ for some $f_i : [L]\to [L]$. The standard \emph{informed} vs.\ \emph{uninformed} classification of $f_i$ is recalled in Appendix~\ref{app:MTTPassumptions}.

\paragraph{Truthful Strategy.}
The strategy of interest is the one in which a client reports its private signal as-is. We write it as:
\[
f^\star(a) = a, \quad \forall a \in [L].
\]
The formal optimality criterion (\emph{informed truthfulness}) is stated in Definition~\ref{def:informed_truthful} once the expected reward $E$ has been introduced.

\subsubsection{Multi-task Peer Prediction}

Consider two clients assigned to a set of tasks indexed by \( k \in [M] = \{1,\dots,m\} \), with \( m \ge 3 \).  
Let \([L]\) denote the discrete label space.  
We partition the task set \(M\) into three disjoint subsets:  
a set of \emph{bonus tasks} \( M_b \subset M \),  
and two disjoint \emph{penalty task sets} \( M_1, M_2 \subset M \setminus M_b \) for clients 1 and 2, respectively.
While our analysis focuses on client pairs, the mechanism extends naturally to multiple clients via random pairing on shared tasks.

The MTPP mechanism is the workhorse on which CA and KFCA both build. It defines how, given any score matrix, clients are paired and rewarded so that the resulting expected reward $E$ has a clean algebraic form (Eq.~\ref{eq:expected_reward_delta}).
\begin{definition}[Multi-task Peer Prediction (MTPP) Mechanism]
\label{def:MTPP}
Given a score matrix \( \mathcal{S} : [L] \times [L] \to \mathbb{R} \), the mechanism proceeds as follows:
\begin{enumerate}
    \item Assign each client at least two tasks, with at least one task in common with a peer.
    \item For each shared task \(k\), let clients report \( r_1^k \), \( r_2^k \).  
          Randomly select a subset \( M_b \) of bonus tasks and partition the remaining tasks into two disjoint sets \( M_1 \) and \( M_2 \).
    \item For each \( k \in M_b \), draw \( p_1 \in M_1 \), \( p_2 \in M_2 \) uniformly at random.  
          The payment for task \(k\) is:
          \[
          \mathcal{S}(r_1^k, r_2^k) - \mathcal{S}(r_1^{p_1}, r_2^{p_2}).
          \]
          Each client’s total payment is the sum over all their assigned bonus tasks.
\end{enumerate}

\paragraph{Expected Reward.}
Let \( F_1(r_1 \mid a) \) and \( F_2(r_2 \mid b) \) be (possibly randomized) reporting strategies, given private signals \( Z_1 = a \), \( Z_2 = b \).  
Assuming independent penalty-task signals, the expected reward under MTPP becomes:
\begin{equation}
\label{eq:expected_reward_delta}
\begin{aligned}
E(F_1, F_2)
&= \sum_{a,b \in [L]} \Delta(a,b) &\quad \cdot \sum_{r_1,r_2 \in [L]} F_1(r_1 \mid a)\, F_2(r_2 \mid b)\, \mathcal{S}(r_1, r_2)
\end{aligned}
\end{equation}
where \( \Delta(a,b) \) captures the correlation structure of signal pairs beyond chance.
\end{definition}

The expected reward decomposes through a single object — the \emph{delta matrix} — which encodes how often two clients' signals co-occur beyond what independence would predict. Every truthfulness statement in this paper is ultimately a statement about the sign pattern of $\Delta$.
\begin{definition}[Delta Matrix]
\label{def:delta}
The \emph{delta matrix} \( \Delta \in \mathbb{R}^{L \times L} \) quantifies excess correlation in client signal reports:
\[
\Delta(a,b)
:= \mathbb{P}(Z_1 = a, Z_2 = b)
- \mathbb{P}(Z_1 = a)\, \mathbb{P}(Z_2 = b),
\]
for \( a,b \in [L] \), ~where~
$\sum_b \Delta(a,b) = \sum_a \Delta(a,b) = 0$.
\end{definition}

With $E$ now defined (Eq.~\eqref{eq:expected_reward_delta}), we can state the optimality criterion that any reasonable peer-prediction mechanism should satisfy: truthful reporting must be at least as rewarding as any other strategy.
\begin{definition}[Informed Truthfulness]
\label{def:informed_truthful}
A peer-prediction mechanism is \emph{informed-truthful} if
\[
E(f^\star, f^\star) \ge E(f_1, f_2)
\]
for all reporting functions \( f_1, f_2 : [L] \to [L] \), where \( f^\star(a) = a \) is the truthful reporting function.
\end{definition}

Using the delta matrix, \citet{OriginalCA} extended the mechanism of \citet{DasguptaGhosh2013} to the multi-task setting, introducing the \emph{Correlated Agreement (CA)} mechanism.

CA is the canonical knowledge-heavy benchmark we will compare against: it requires the server to know $\Delta$, and rewards report pairs whose joint frequency is above-chance.
\begin{definition}[Correlated Agreement (CA)]
\label{def:CA}
The \emph{Correlated Agreement} mechanism defines the score matrix \( \mathcal{S}_{\mathrm{CA}} : [L] \times [L] \to \{0,1\} \) based on delta matrix \( \Delta \) as
\[
\mathcal{S}_{\mathrm{CA}}(a, b) := \operatorname{sign}(\Delta(a, b))= 
\begin{cases}
1 & \text{if } \Delta(a, b) > 0, \\
0 & \text{otherwise}.
\end{cases}
\]
\end{definition}

\begin{theorem}[Informed Truthfulness of CA]
\label{thm:CA_truthful}
Suppose clients are homogeneous, tasks are ex-ante identical, and clients cannot distinguish between bonus and penalty tasks. Then  Correlated Agreement mechanism is \emph{informed-truthful}.
\end{theorem}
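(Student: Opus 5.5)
The plan is to work directly with the expected-reward formula \eqref{eq:expected_reward_delta} and reduce informed truthfulness to a pointwise inequality over the entries of $\Delta$. The hypotheses do two jobs here. Clients cannot distinguish bonus from penalty tasks and tasks are ex-ante identical, so each client applies the same reporting map $f_i$ on bonus and penalty tasks. Penalty tasks $p_1 \neq p_2$ are distinct and independently drawn, so their signals are independent with the same marginals as on bonus tasks. Subtracting the penalty term from the bonus term therefore gives exactly
\[
E(f_1,f_2)=\sum_{a,b}\Delta(a,b)\,\mathcal{S}(f_1(a),f_2(b)),
\]
and homogeneity ensures that a single $\Delta$ (Definition~\ref{def:delta}) governs every client pair. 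By the linearity remark in the Preliminaries, it suffices to treat deterministic $f_1,f_2$.

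The key step is a pointwise bound. Since $\mathcal{S}_{\mathrm{CA}}$ takes values in $\{0,1\}$, for every pair $(a,b)$ we have
\[
\Delta(a,b)\,\mathcal{S}_{\mathrm{CA}}(f_1(a),f_2(b)) \le \max\{\Delta(a,b),0\} = \Delta(a,b)\,\mathds{1}\{\Delta(a,b)>0\}.
\]
The right-hand side is exactly $\Delta(a,b)\,\mathcal{S}_{\mathrm{CA}}(a,b)$, which is the truthful summand $\Delta(a,b)\,\mathcal{S}_{\mathrm{CA}}(f^\star(a),f^\star(b))$. Summing over all $(a,b)$ gives
\[
E(f_1,f_2)\le\sum_{a,b}\max\{\Delta(a,b),0\}=E(f^\star,f^\star),
\]
which is Definition~\ref{def:informed_truthful}. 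Uninformed strategies are covered by the same bound, and in fact give zero, since if $f_1$ is constant the row sums $\sum_a\Delta(a,b)=0$ kill the sum.

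The main obstacle is justifying the reduction in the first paragraph. I would write out the expectation of a single bonus-task payment. The term $\mathbb{E}[\mathcal{S}(r_1^k,r_2^k)]$ uses the joint distribution $\mathbb{P}(Z_1=a,Z_2=b)$. The penalty term $\mathbb{E}[\mathcal{S}(r_1^{p_1},r_2^{p_2})]$ uses the product of marginals: $p_1$ and $p_2$ lie in the disjoint sets $M_1$ and $M_2$, so they are different tasks, their latent truths are i.i.d., and their signals are independent. Subtracting the two gives the $\Delta$-weighted form. The indistinguishability assumption is what prevents a client from using different maps on bonus and penalty tasks, which would break this cancellation.
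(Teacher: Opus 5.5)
Your proposal is correct and follows essentially the same route as the paper. Both reduce to $E(f_1,f_2)=\sum_{a,b}\Delta(a,b)\,\mathcal{S}_{\mathrm{CA}}(f_1(a),f_2(b))$, bound each summand by $\max\{\Delta(a,b),0\}$ to get $E(f_1,f_2)\le\sum_{\Delta(a,b)>0}\Delta(a,b)=E(f^\star,f^\star)$, and note that uninformed strategies earn zero because $\sum_a\Delta(a,b)=0$. Your derivation of the $\Delta$-weighted formula from the bonus/penalty structure and the indistinguishability assumption is the argument the paper gives separately in its expected-reward appendix.
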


The proof is provided in Appendix~\ref{app:truthfulproof}.

\subsection{Limitations of CA}
\label{sec:ca_limitations}

CA has two practical drawbacks that motivate KFCA. First, computing the score matrix requires the server to estimate $\Delta$ from \emph{all} clients' joint reports---an $\mathcal{O}(n^2(m+L^2))$ per-round operation that forces centralization and rules out streaming or on-chain payouts (Appendix~\ref{app:ca_knowledge}). Second, CA scores depend only on the sign pattern of $\Delta$, so a coordinated label permutation preserves all above-chance correlations and earns the same reward as truthful reporting. Concretely, in the binary case
\[
E_{\mathrm{CA}}(f^\star, f^\star) \;=\; E_{\mathrm{CA}}(f_{\mathrm{flip}}, f_{\mathrm{flip}}) \;=\; 0.5,
\]
i.e., truthful and label-flipped reporting are indistinguishable under CA (worked derivation in Appendix~\ref{app:ca_labelflip_example}).

\begin{figure}[t]
\centering
\begin{subfigure}[t]{0.42\linewidth}
  \centering
  \includegraphics[width=\linewidth]{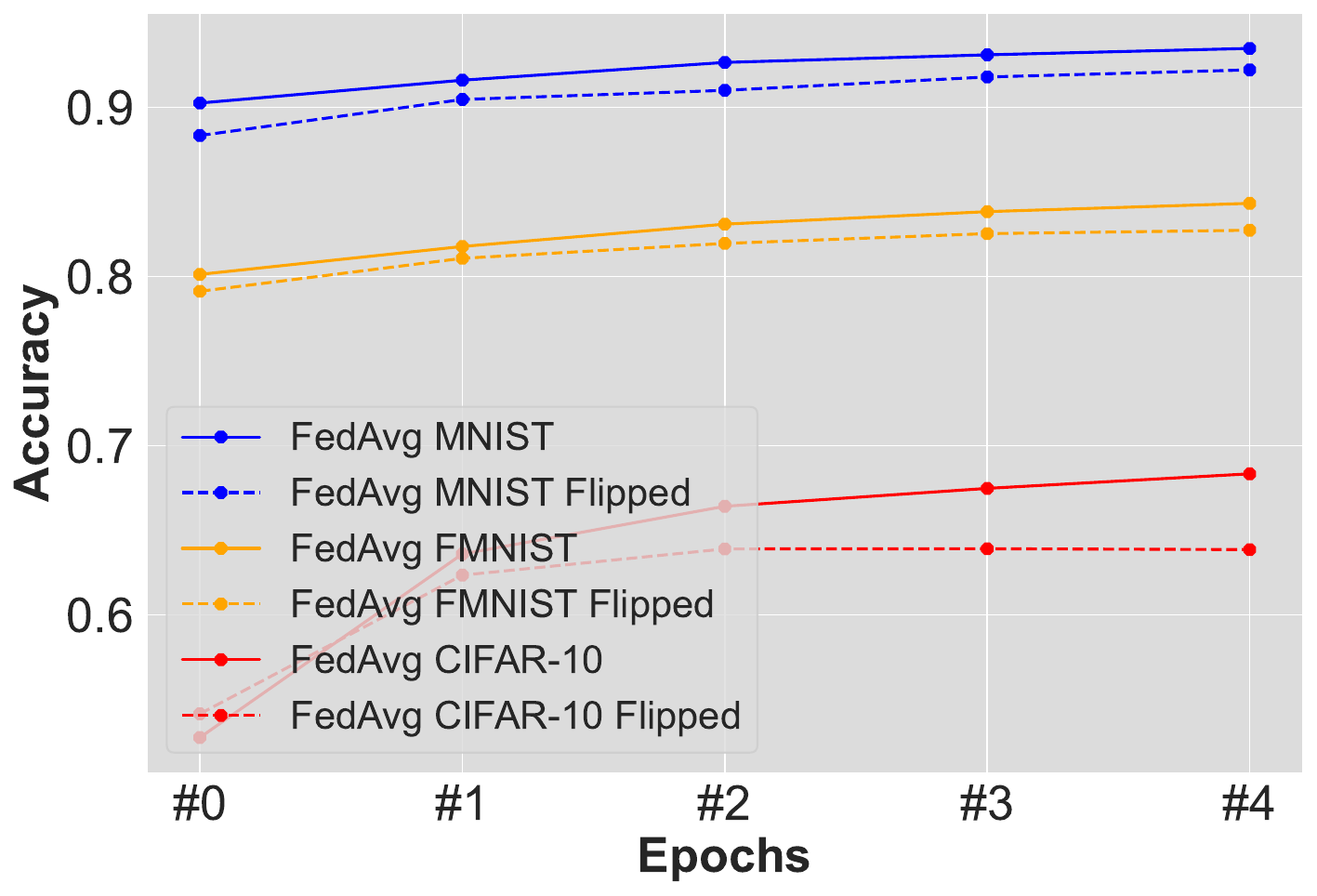}
  \caption{Global accuracy}
  \label{fig:noisy_acc}
\end{subfigure}\hspace{2em}
\begin{subfigure}[t]{0.42\linewidth}
  \centering
  \includegraphics[width=\linewidth]{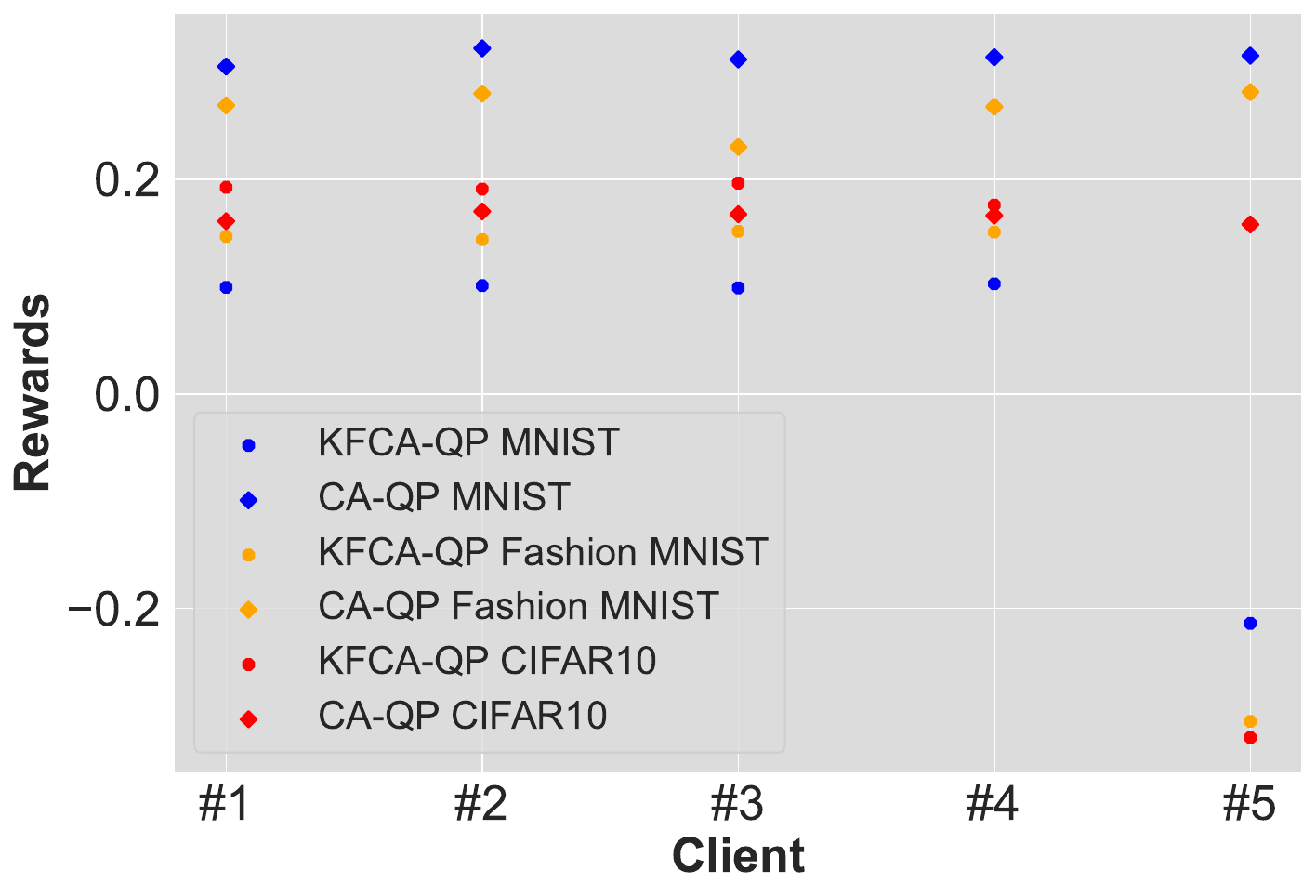}
  \caption{Rewards}
  \label{fig:noisy_acc_ds}
\end{subfigure}
\caption{KFCA-QP vs.\ CA-QP under a sign-flip attack: (a) global test accuracy over FL rounds, (b) per-client rewards. CA-QP fails to penalize the malicious client (\#5); KFCA-QP correctly assigns it a lower reward.}
\label{fig:labelflipping}
\end{figure}

Figure~\ref{fig:labelflipping} confirms this empirically: a single malicious client performing coordinated label flipping degrades the global model while still receiving full reward under CA. The underlying issue is that CA rewards statistical correlation rather than semantic correctness, so any attacker who preserves marginal distributions while disrupting truthful alignment escapes detection. To address this, we seek a mechanism $\mathcal{M}_{\mathrm{KFCA}}$ that is \emph{knowledge-free} (does not require estimating $\Delta$) and \emph{strictly truthful}, meaning $E(f^\star, f^\star) > E(f_1, f_2)$ for all $f_1, f_2$.

We next strengthen CA's weak truthfulness to \emph{strict} truthfulness under an additional sign-pattern condition on $\Delta$.
\section{Knowledge-Free Correlated Agreement}
\label{KFCA}

\begin{figure}[H]
    \centering
    \includegraphics[width=\linewidth]{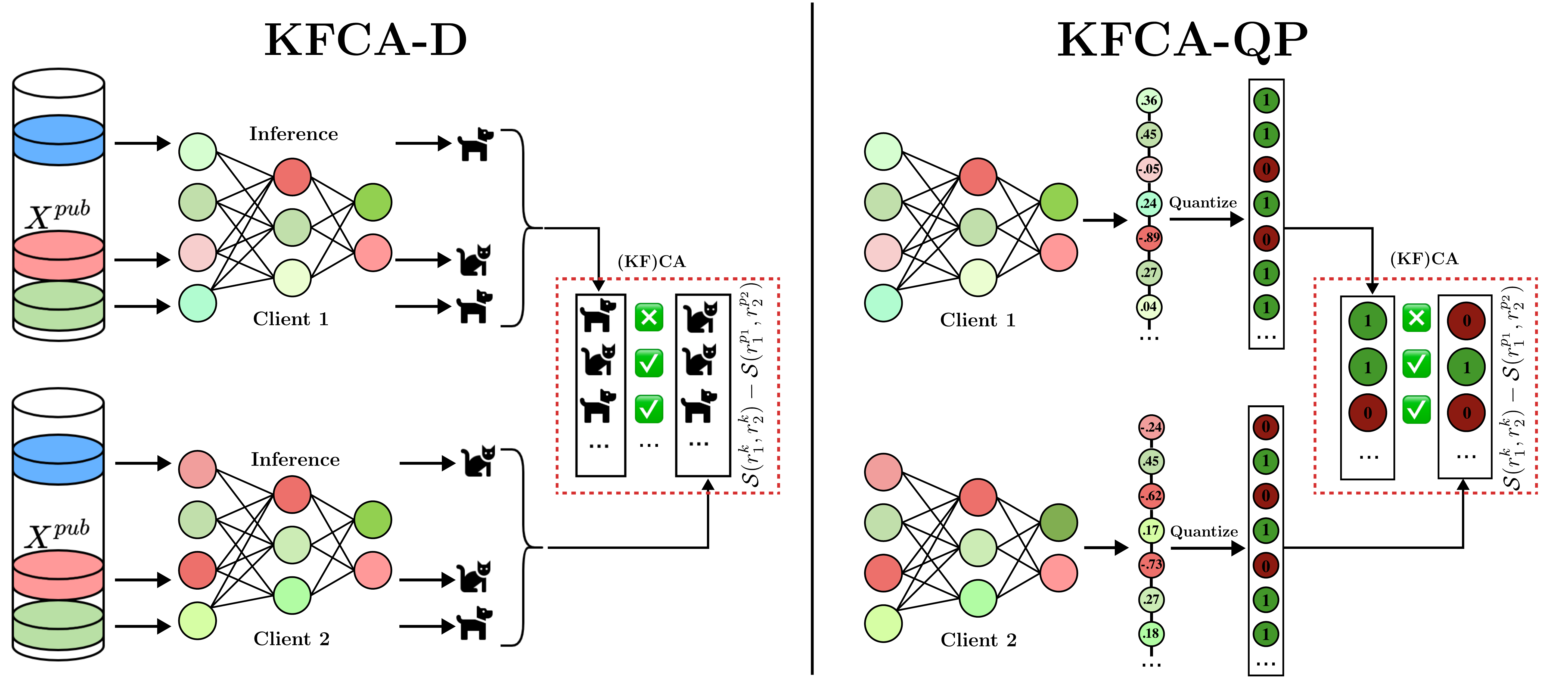}
    \caption{KFCA on a public test set (KFCA-D) and on quantized parameter updates (KFCA-QP).}
    \label{fig:public_test}
\end{figure}

We propose the \emph{Knowledge-Free Correlated Agreement} (KFCA) mechanism \( \mathcal{M}_{\mathrm{KFCA}} \), a variant of the Correlated Agreement (CA) method~\cite{DasguptaGhosh2013} tailored for FL settings where the central server has no access to clients’ signal distributions. KFCA removes the need to estimate the signal correlation matrix and instead leverages a structural assumption on signal alignment to ensure strong incentive guarantees.

We consider a classification setup where each task \( k \in M \) has a latent label \( Y^k \in [L] \), and each client \( i \) receives a private signal \( Z_i^k \in [L] \) drawn from:
\[
\mathbb{P}(Z_i^k = a \mid Y^k = y) =
\begin{cases}
P_i(a \mid y), & \text{if } e_i^k = 1, \\
Q_i(a), & \text{if } e_i^k = 0,
\end{cases}
\]
where \( e_i^k \in \{0,1\} \) indicates whether the client exerted effort. When effort is exerted, \( P_i(\cdot \mid y) \) is an informative, label-dependent distribution; otherwise, \( Q_i \) is an uninformative baseline independent of \( y \).Under effort, we assume signals are conditionally independent given the ground truth:
\[
\mathbb{P}(Z_1^k = a, Z_2^k = b \mid Y^k = y) = P_1(a \mid y)\, P_2(b \mid y).
\]
The marginal (unconditional) distribution of signals is then:
\[
\mathbb{P}(Z_1 = a, Z_2 = b) = \sum_{y} \mathbb{P}(Y = y)\, P_1(a \mid y)\, P_2(b \mid y).
\]
In many classification tasks with meaningful label semantics (e.g., digit or object recognition), signal distributions are structured such that clients are more likely to agree when their signals are accurate. This motivates the following assumption on the joint correlation structure.

\begin{definition}[Categorical-World Condition]
\label{def:categorical}
The environment satisfies the \emph{categorical-world condition} if the signal correlation matrix \( \Delta \in \mathbb{R}^{L \times L} \), defined as
\[
\Delta(a,b)
= \mathbb{P}(Z_1 = a, Z_2 = b) - \mathbb{P}(Z_1 = a)\, \mathbb{P}(Z_2 = b),
\]
satisfies the sign pattern:
\[
\Delta(a,a) > 0, \quad \Delta(a,b) < 0 \quad \text{for all } a \ne b.
\]
By construction, we have
$\sum_b \Delta(a,b) = \sum_a \Delta(a,b) = 0$.
\end{definition}
This condition captures environments where signals are accurate and conditionally independent: matching labels are positively correlated, while mismatches are anti-correlated. It rules out systematic confusion between distinct labels and ensures that agreement reflects genuine alignment on the latent truth.
KFCA leverages this structure to reward label agreement directly, without estimating \( \Delta \). Under the categorical-world condition, agreement implies information, enabling KFCA to retain the truthfulness guarantees of CA in a decentralized, knowledge-free setting.

\paragraph{Realism under non-IID client data.}
Non-IID and the categorical-world condition act on different layers: non-IID specifies the \emph{training-data} partition, while the categorical-world condition is a sign condition on the \emph{post-training} joint signal distribution. In the binary case ($L{=}2$) it reduces to a per-client inequality $\alpha_i := \mathbb{P}(\widetilde{Z}_i \ne Y) < \tfrac{1}{2}$---each client's quantized update need only be \emph{better than random}, easy to satisfy since clients descend the same loss from the same checkpoint; for general $L$, non-IID rescales $|\Delta|$ but preserves $\mathrm{sign}(\Delta)$. A per-scenario mapping and a Dirichlet sweep (concentration $\alpha_{\mathrm{dir}}\in\{0.1,\dots,100\}$, severe label skew to near-IID) verifying the condition on MNIST and AG~News are in Appendix~\ref{app:non_iid_categorical}.

\begin{definition}[Knowledge-Free Correlated Agreement]
\label{def:KFCA}
Under the categorical-world condition, the \emph{Knowledge-Free Correlated Agreement} (KFCA) mechanism uses scoring rule
\[
\mathcal{S}_{\mathrm{KFCA}}(r_1, r_2) := \mathds{1}\{r_1 = r_2\},
\]
which rewards clients only when their reports match. KFCA is a special case of the multi-task peer-prediction mechanism (Definition~\ref{def:MTPP}) instantiated with this score.
\end{definition}

\paragraph{Expected Reward.}
Let \( f_1, f_2 : [L] \to [L] \) be deterministic reporting strategies. Substituting the KFCA score into the MTPP expected reward formula (Eq.~\eqref{eq:expected_reward_delta}) gives:
\begin{equation}
\label{eq:expected_reward_KFCA}
E(f_1, f_2) = \sum_{a,b \in [L]} \Delta(a, b)\, \mathds{1}\{f_1(a) = f_2(b)\}.
\end{equation}
This measures the total correlation between signals whose mapped reports agree under strategies \( f_1 \) and \( f_2 \).

\begin{proposition}[Strict Truthfulness under Categorical Reports]
\label{prop:strong_truthfulness}
Suppose the delta matrix \( \Delta \in \mathbb{R}^{L \times L} \) satisfies the categorical-world condition (Definition~\ref{def:categorical}). Then, under the KFCA mechanism, the truthful reporting strategy \( f^\star(a) = a \) achieves the highest expected reward among all deterministic strategies, $E(f^\star, f^\star) = \max_{f_1, f_2} E(f_1, f_2)$, with equality only when $f_1 = f_2 = \pi$ for some bijection $\pi : [L] \to [L]$; this shared-permutation residual is removed under the honest-majority assumption (Appendix~\ref{app:permutation_indistinguishability}, Proposition~\ref{prop:robustness}).
\end{proposition}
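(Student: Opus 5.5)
The plan is to work directly with the closed form \eqref{eq:expected_reward_KFCA} and split the sum into diagonal and off-diagonal pairs. This reduces the claim to a sign argument on the entries of $\Delta$ that the categorical-world condition (Definition~\ref{def:categorical}) controls. Fix arbitrary deterministic $f_1,f_2:[L]\to[L]$ and write
\[
E(f_1,f_2)=\sum_{a\in[L]}\Delta(a,a)\,\mathds{1}\{f_1(a)=f_2(a)\}+\sum_{a\neq b}\Delta(a,b)\,\mathds{1}\{f_1(a)=f_2(b)\}.
\]
For the truthful pair $f_1=f_2=f^\star$, every diagonal indicator equals $1$ and every off-diagonal indicator equals $0$. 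Hence $E(f^\star,f^\star)=\sum_a\Delta(a,a)$, which is strictly positive by the condition.

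\textbf{Upper bound.} Each off-diagonal term is a strictly negative number times an indicator, so the second sum is $\le 0$. Each diagonal term is a strictly positive number times an indicator, so the first sum is $\le\sum_a\Delta(a,a)$. Together these give $E(f_1,f_2)\le E(f^\star,f^\star)$ for all deterministic $f_1,f_2$, which is the claimed maximality. By the linearity remark in the Reporting Strategy paragraph, randomized strategies are convex combinations of deterministic ones, so the same bound extends to them.

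\textbf{Equality characterization.} Equality requires both inequalities above to be tight. Because $\Delta(a,a)>0$ strictly, tightness of the diagonal bound forces $f_1(a)=f_2(a)$ for every $a$, so $f_1=f_2=:f$. Because $\Delta(a,b)<0$ strictly for $a\neq b$, tightness of the off-diagonal bound forces $\mathds{1}\{f(a)=f(b)\}=0$ for all $a\neq b$. Thus $f$ is injective on the finite set $[L]$ and hence a bijection $\pi$. Conversely, for any bijection $\pi$ the diagonal indicators are all $1$ and the off-diagonal ones all $0$, so $E(\pi,\pi)=\sum_a\Delta(a,a)=E(f^\star,f^\star)$. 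The maximizers are therefore exactly the shared permutations, as stated. The remaining permutation residual is not addressed here; it is deferred to Proposition~\ref{prop:robustness}.

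I expect no serious obstacle, since the argument is a direct sign bookkeeping. The one point needing care is that strict inequalities in the categorical-world condition are essential to the equality characterization. With only weak inequalities, non-injective or mismatched strategies could tie, because a zero entry of $\Delta$ contributes nothing either way. I will state explicitly where strictness is used. I will also note that $L\ge 2$ is implicit: for $L=1$ the off-diagonal set is empty and the statement is vacuous.
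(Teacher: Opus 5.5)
Your proposal is correct and uses essentially the same sign argument as the paper. The paper phrases it through the preimage sets $A_r=f_1^{-1}(r)$ and $B_r=f_2^{-1}(r)$, arguing that the maximum must capture every positive diagonal entry of $\Delta$ and avoid every negative off-diagonal one, which forces $f_1=f_2$ to be a bijection; your direct diagonal/off-diagonal split reaches the same conclusion and states the bound $\sum_a\Delta(a,a)$ and the equality conditions more explicitly.
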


\begin{proof}
For each report label \( r \in [L] \), define pre-image sets:
\[
A_r := \{ a \in [L] : f_1(a) = r \}, \quad
B_r := \{ b \in [L] : f_2(b) = r \}.
\]
By Equation~\eqref{eq:expected_reward_KFCA}, the expected reward becomes:
\[E(f_1, f_2) = \sum_{r \in [L]} \sum_{a \in A_r} \sum_{b \in B_r} \Delta(a, b).\]

Under the categorical-world condition, \( \Delta(a,a) > 0 \) and \( \Delta(a,b) < 0 \) for all \( a \ne b \). Hence, to maximize the reward, the strategy must:
(i). Include all positive diagonal terms \( \Delta(a,a) \),
(ii). Exclude all negative off-diagonal terms \( \Delta(a,b) \) with \( a \ne b \).
This is only possible if for each \( r \), the sets \( A_r \) and \( B_r \) are either empty or both equal to \( \{a\} \) for some \( a \). In other words, \( f_1 = f_2 = \sigma \) for some bijection \( \sigma: [L] \to [L] \). Any deviation introduces off-diagonal penalties or omits diagonal rewards, leading to strictly lower total.
Therefore, the truthful strategy (or any shared permutation) uniquely maximizes expected reward.
\end{proof}
\begin{proposition}[Robustness to Malicious Clients]
\label{prop:robustness}
In the binary label setting with uniform class prior and symmetric noise, the KFCA mechanism tolerates any fraction less than \(50\%\) of malicious clients, while still strictly incentivizing truthful reporting for the honest ones.
\end{proposition}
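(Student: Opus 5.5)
We prove the statement in the binary, symmetric setting, where everything reduces to a single scalar. Let $Y\in\{1,2\}$ be uniform. Each honest, effortful client observes $Z_i$ with $\mathbb{P}(Z_i\neq Y)=\alpha<\tfrac12$, symmetrically in both classes. A malicious client applies the flip strategy $f_{\mathrm{flip}}$, which is the only non-trivial deterministic informed deviation; constant strategies are uninformed.

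Let a fraction $\beta<\tfrac12$ of the population be malicious, and let client $i$ be paired uniformly at random with a peer on shared tasks (the random pairing of Section 2). The plan is to write client $i$'s expected KFCA reward as the mixture
\[
E_i \;=\; (1-\beta)\,E(f_i, f^\star) + \beta\,E(f_i,f_{\mathrm{flip}}),
\]
using the formula in Eq.~\eqref{eq:expected_reward_KFCA}. We then compare $E_i$ for $f_i=f^\star$ against every alternative $f_i$.

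\textbf{Step 1: compute $\Delta$.} In this model $\mathbb{P}(Z_1=Z_2)=\alpha^2+(1-\alpha)^2$, and the marginals are uniform. Hence
\[
\Delta(a,a)=\tfrac12\big(\alpha^2+(1-\alpha)^2\big)-\tfrac14=\tfrac14(1-2\alpha)^2=: \delta>0,
\qquad \Delta(a,b)=-\delta \text{ for } a\neq b .
\]
So the categorical-world condition (Definition~\ref{def:categorical}) holds exactly when $\alpha\neq\tfrac12$, which is guaranteed by $\alpha<\tfrac12$.

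\textbf{Step 2: tabulate the reward.} A peer who reports $g(Z)$ acts on the signal as a peer with transformed signal. Eq.~\eqref{eq:expected_reward_KFCA} then gives:
\begin{itemize}
\item $E(f^\star,f^\star)=E(f_{\mathrm{flip}},f_{\mathrm{flip}})=2\delta$;
\item $E(f^\star,f_{\mathrm{flip}})=E(f_{\mathrm{flip}},f^\star)=-2\delta$;
\item $E(c,\cdot)=0$ for any constant strategy $c$, since the row or column sums of $\Delta$ vanish (Definition~\ref{def:delta}).
\end{itemize}

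\textbf{Step 3: compare strategies.} Substituting the table into the mixture:
\begin{itemize}
\item truthful reporting earns $E_i(f^\star)=2\delta(1-2\beta)$;
\item flipping earns $E_i(f_{\mathrm{flip}})=-2\delta(1-2\beta)$;
\item constant reporting earns $0$.
\end{itemize}
For $\beta<\tfrac12$ and $\delta>0$, truthful reporting is therefore strictly best. The margin $2\delta(1-2\beta)$ is also strictly positive, so it can cover an effort cost $c_e<2\delta(1-2\beta)$; no effort yields the $Q_i$ signal and hence $\Delta\equiv 0$ against any peer. At $\beta=\tfrac12$ the truthful and flipped strategies tie, which shows the threshold is tight. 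This is consistent with the permutation-indistinguishability remark that accompanies Proposition~\ref{prop:strong_truthfulness}: the residual equilibrium $f_1=f_2=\pi$ from that proposition is broken because honest clients form the majority.

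\textbf{Main obstacle.} The delicate point is justifying the mixture decomposition. It requires that client $i$ is paired with a peer drawn uniformly from the population on bonus tasks and on penalty tasks alike, and that the identity of the peer is independent of the signals. Under those conditions, linearity of Eq.~\eqref{eq:expected_reward_delta} in the peer's strategy $F_2$ gives the mixture directly. I would state this explicitly as part of the random-pairing assumption.

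A second subtlety is that malicious clients need not all flip. The worst case for honest clients is the one that minimizes $E_i(f^\star)$. Because $E(f^\star,g)\ge -2\delta$ for every $g$, flipping is the adversary's most damaging choice, so the bound $2\delta(1-2\beta)$ holds for arbitrary malicious behaviour.
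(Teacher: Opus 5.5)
Your proof is correct and takes essentially the same route as the paper's. Both condition on the random peer being honest (probability $1-\lambda$) or a flipper (probability $\lambda$), and your honest payoff $2\delta(1-2\beta)$ is exactly the paper's $(1-2\lambda)\bigl[\tfrac12-2\alpha(1-\alpha)\bigr]=(1-2\lambda)\cdot\tfrac12(1-2\alpha)^2$; you compute it through $\Delta$, while the paper subtracts a uniform-marginal penalty of $\tfrac12$ from the bonus match probability.

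Your version is somewhat more complete, because it explicitly ranks truthful reporting above the flip and constant deviations, whereas the paper only shows the honest payoff is positive iff $\lambda<\tfrac12$. One caveat on your worst-case remark: like the paper's model, it assumes malicious clients see signals of the same quality $\alpha$ and only change their reporting map. A flipper with more accurate signals would push the tolerable fraction below $\tfrac12$.
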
 
The proof is provided in Appendix~\ref{app:robustnessproof}.

\subsection{Advantages of KFCA}
Under the categorical-world model, KFCA offers three key advantages. First, it eliminates the need to estimate the delta matrix, removing the dependence on full report access and improving privacy. Second, it enables efficient real-time reward computation, which is crucial for large models and applications such as smart contracts \cite{LeonSP}. Third, KFCA enforces strong truthfulness, avoiding the label-flipping vulnerabilities of CA and ensuring incentive-aligned participation.

\subsection{Beyond Ideal Categorical Cases}
\label{sec:kfca-fl}
While KFCA is provably truthful under the categorical-world condition (Definition~\ref{def:categorical}), real-world federated learning (FL) sometimes involves continuous, high-dimensional, or heterogeneous signals. These complexities may violate the ideal sign structure of the empirical correlation matrix \( \Delta \), making direct application of KFCA infeasible.
To address this, we introduce a transformation pipeline that maps arbitrary client signals into categorical reports aligned with a shared latent truth. Under the assumptions that (i) signals are conditionally independent given the truth and (ii) each signal is informative, there exists a mapping
$Z_i \mapsto \widetilde{Z}_i \in [L']$
such that the induced matrix \( \widetilde{\Delta} \) satisfies
\[
\operatorname{sign}(\widetilde{\Delta}(a,b)) =
\begin{cases}
> 0 & \text{if } a = b, \\
< 0 & \text{if } a \ne b.
\end{cases}
\]
This restores the conditions under which KFCA remains truthful and knowledge-free, even when the original signals are not categorical.

Concrete instantiations cover classification (relabel via \emph{maximum a posteriori} (MAP) inference) and optimization-based FL (sign-quantize parameter updates); the underlying three-stage pipeline---signal-space transformation, latent-truth alignment, and categorical regularization---generalizes beyond these. Full details, including the per-scenario non-IID analysis, are in Appendices~\ref{app:categoricalpipeline} and~\ref{app:non_iid_categorical}.

\section{Experiments}
We evaluate KFCA along two axes: (i) \emph{efficiency} (scalability and runtime) and (ii) \emph{reward fidelity} (agreement with exact Shapley value). We then demonstrate applicability in settings where Shapley-style ground truth is unavailable: a real-world PCB inspection task and federated LLM adapter tuning. Implementation details and extended setups are deferred to Appendix~\ref{app:experiments_details}.

\subsection{Scalability of KFCA}
KFCA's per-round cost is $\mathcal{O}(n p m)$---linear in clients $n$ for a fixed number of peers per client $p$ ($m$ = report length: test-set size for KFCA-D, parameter count for KFCA-QP), since each pair is a sign-comparison sweep. CA, by contrast, costs $\mathcal{O}(n^2(m+L^2))$ (Section~\ref{sec:ca_limitations}): quadratic in $n$ from re-estimating $\hat{\Delta}$ across all $\binom{n}{2}$ pairs. Fewer peers give faster computation at the cost of standard error $\sigma/\sqrt{p}$ ($\sigma$: stdev of comparison scores). Wall-clock measurements (Apple~Silicon M2~Pro) are in Appendix~\ref{app:scalability}, Fig.~\ref{fig:time}.

\subsection{KFCA vs. Shapley value: Experimental setup}

Due to the heavy computation of the original Shapley value (Appendix~\ref{app:shapley_value}), we train a simple CNN model that has a total of 21,840 parameters on the MNIST dataset \cite{MH_MNIST}. We split the dataset into five equal silos. The KFCA algorithms are evaluated under five different i.i.d. and non-i.i.d. FL cases with 10 clients, similar to \cite{wei2020efficient}:
We consider five standard FL heterogeneity/noise settings (i.i.d., label skew, size skew, label noise, feature noise); full case definitions are in Appendix~\ref{app:experiments_details}.

\paragraph{Variance \& Computational Overhead.}

\begin{figure}[!htbp]
    \centering
    \begin{subfigure}[c]{0.49\textwidth}
        \centering
        \includegraphics[width=\linewidth]{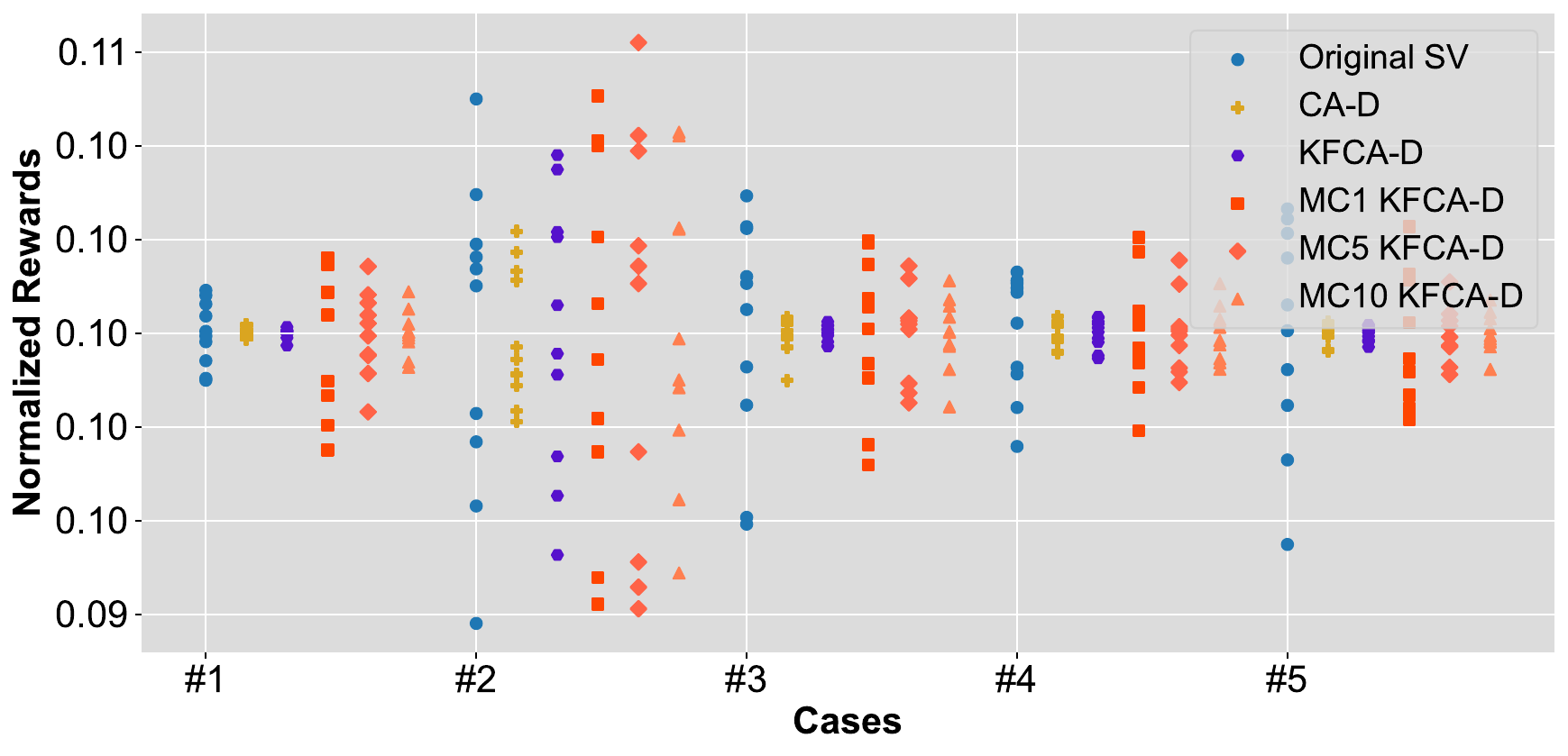}\\[0.3em]
        \includegraphics[width=\linewidth]{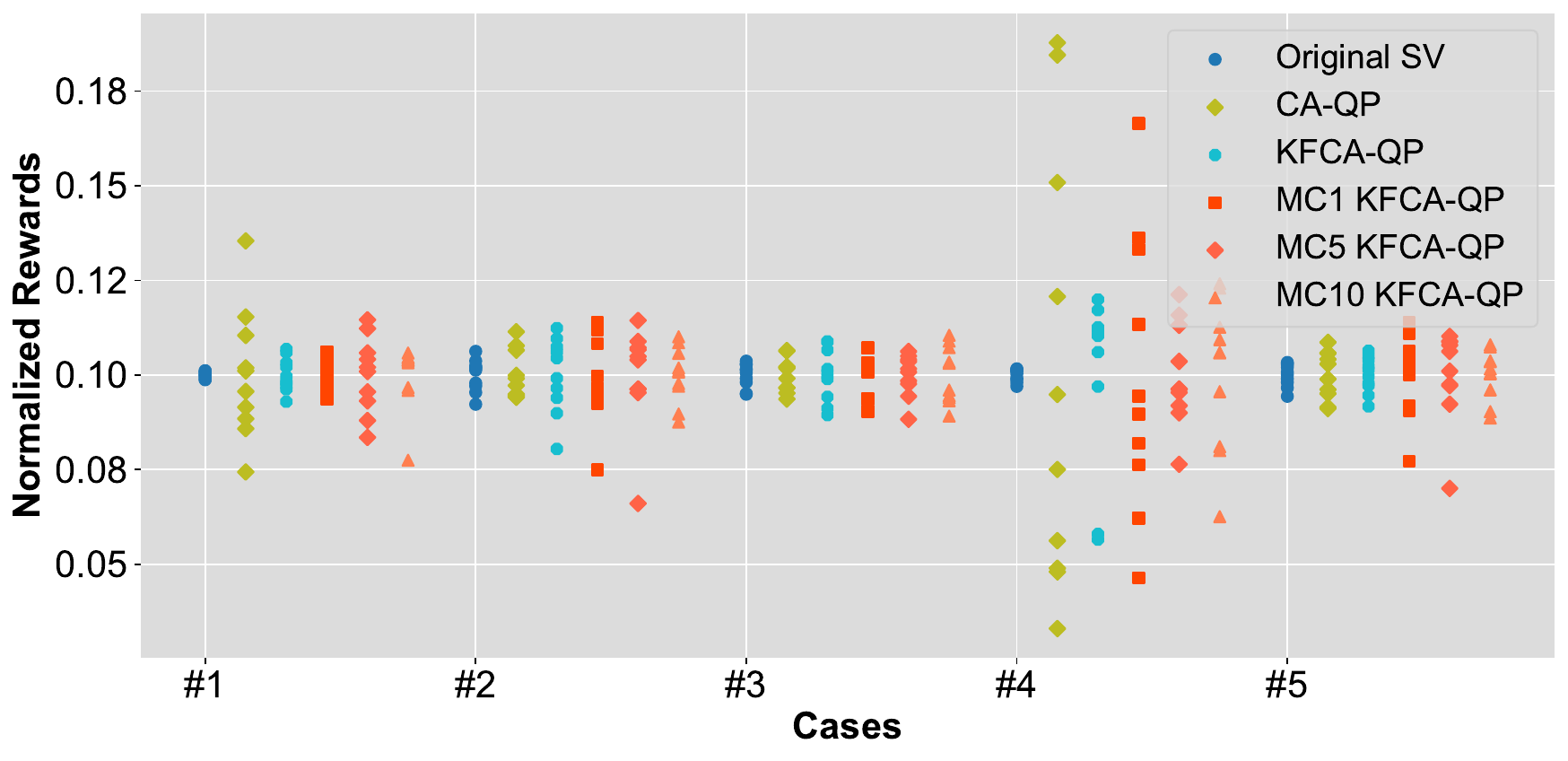}
        \caption{}\label{fig:sv_flca_comparison}
    \end{subfigure}
    \begin{subfigure}[c]{0.5\textwidth}
        \centering
        \includegraphics[width=0.78\linewidth]{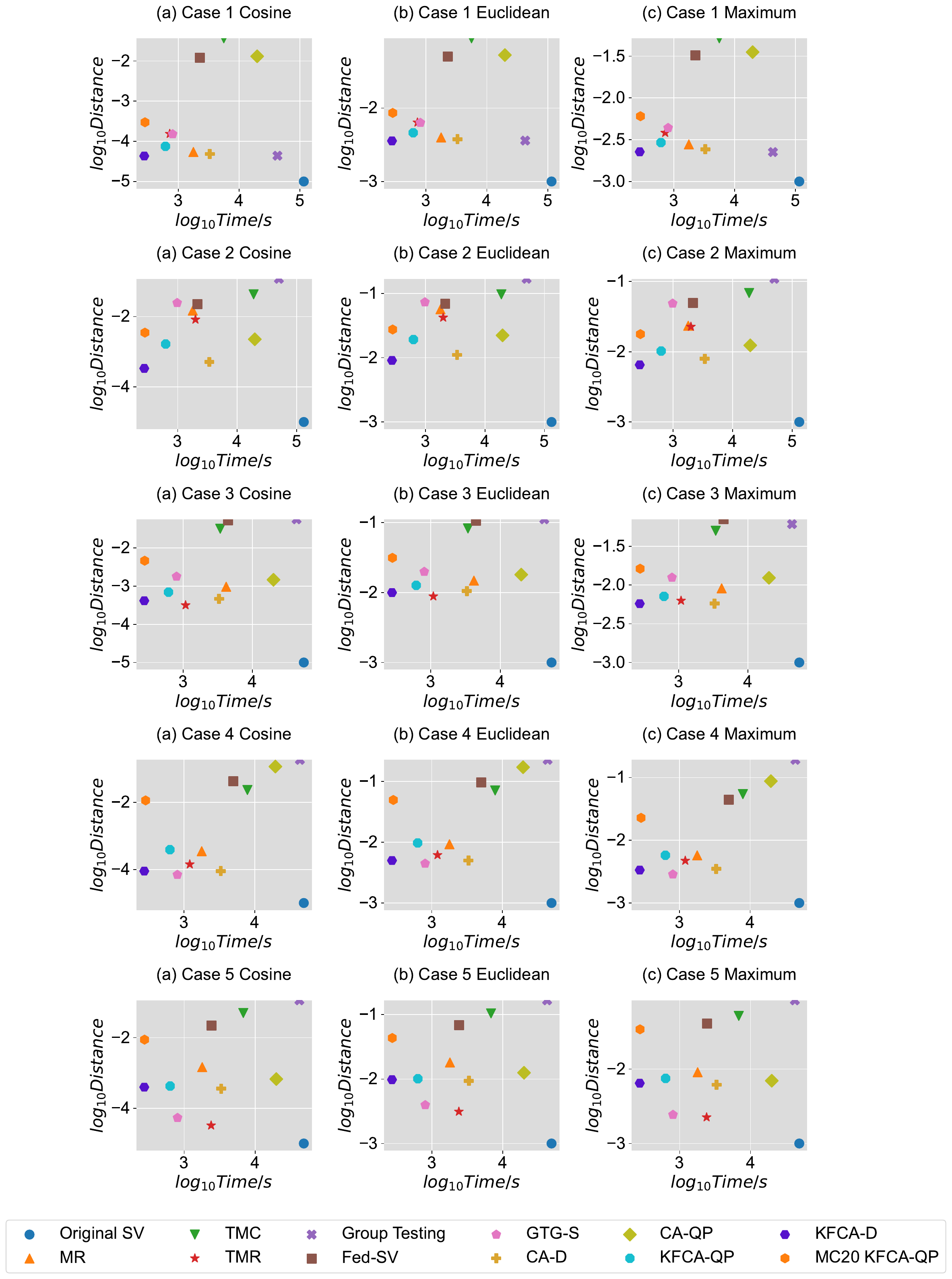}
        \caption{}\label{fig:all_metrics}
    \end{subfigure}
    \caption{(a)~Variance of reward distributions for KFCA, CA, and exact SV (top: public-dataset methods; bottom: quantized-parameter methods). (b)~Reward distance to exact SV plotted against computation time across five cases.}
\end{figure}
In Figure~\ref{fig:sv_flca_comparison} we compare the variance of our KFCA methods with CA-D \cite{LIU_CA} and CA-QP \cite{CA_HONGTAO} with respect to the exact Shapley value. Both KFCA-D and KFCA-QP better resemble the exact Shapley value distribution across all five cases. Details about the individual reward distributions are in Appendix, Figure~\ref{app:rewarddistributionsingleclient}.

Figure~\ref{fig:all_metrics} maps the computation time with its distance to the exact Shapley value for different CA mechanisms and state-of-the-art Shapley value estimation methods. Details about all methods are in Appendix~\ref{app:comparisonmethodsSV}. We measured three distance metrics, namely, (i) cosine distance ($1 - \cos(\phi^{*}, q)$), (ii) Euclidean distance ($\sqrt{\sum_{i=1}^{N} (\phi^{*}_{i} - q_{i})^{2}}$), and (iii) the maximum difference ($\max_{i=1}^{N}|\phi^{*}_{i} - q_{i}|$), where $\phi^{*}$ is the exact Shapley value and $q$ the normalized reward of the respective method. For Shapley value estimation methods, we applied the stopping-threshold $\frac{1}{N * l} \sum_{l=1}^{10} \sum_{i=1}^N \frac{\left|\phi_i^h-\phi_i^{h-l}\right|}{\left|\phi_i^h\right|}<0.05$ from \cite{gtgShapley} where $h$ is the evaluation round number, $l$ is the length of permutation positions that are calculated without random sampling, and $N$ is the total number of clients.
We normalize the reward of each client according to $q_i = \frac{q_i}{\sum_{i=1}^{N} q_i }$. The results show that KFCA is orders of magnitude more efficient than its CA counterpart while resembling the exact Shapley value better on average.



\subsection{KFCA in Quality Inspection: A Real-World Application}
\label{sec:pcb}

\begin{figure}[!htbp]
    \centering
    \begin{subfigure}[c]{0.42\textwidth}
        \centering
        \includegraphics[width=\linewidth]{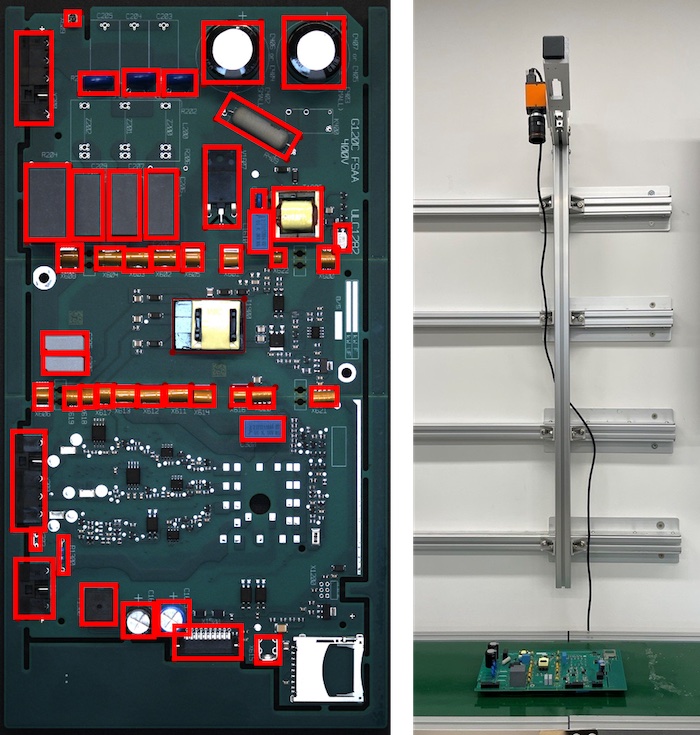}
        \caption{}\label{fig:siemens_pcb}
    \end{subfigure}
    \hfill
    \begin{subfigure}[c]{0.54\textwidth}
        \centering
        \begin{subfigure}[t]{\linewidth}
            \centering
            \includegraphics[width=\linewidth]{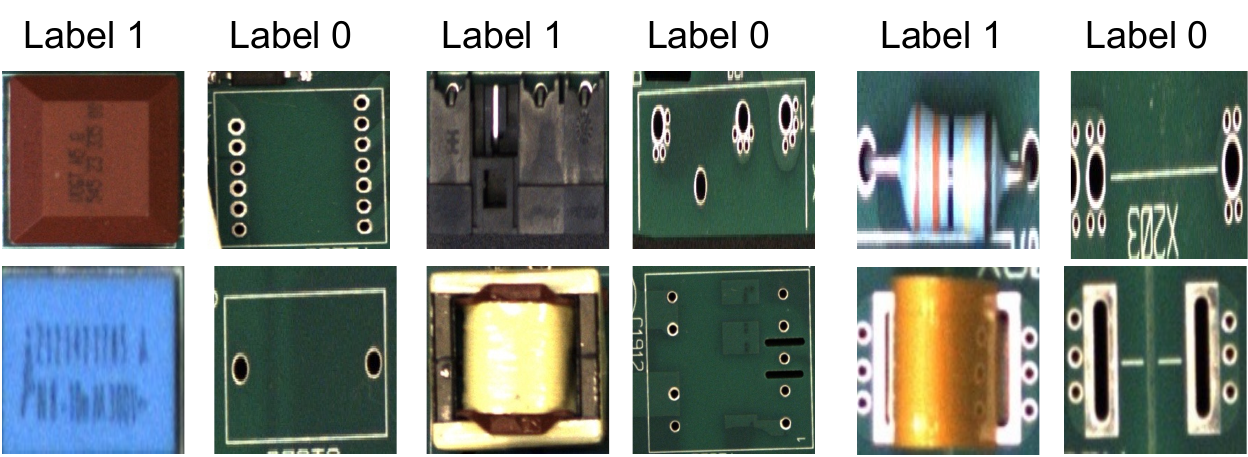}
            \caption{}\label{fig:pcb_binary}
        \end{subfigure}\\[0.3em]
        \begin{subfigure}[t]{\linewidth}
            \centering
            \includegraphics[width=0.49\linewidth]{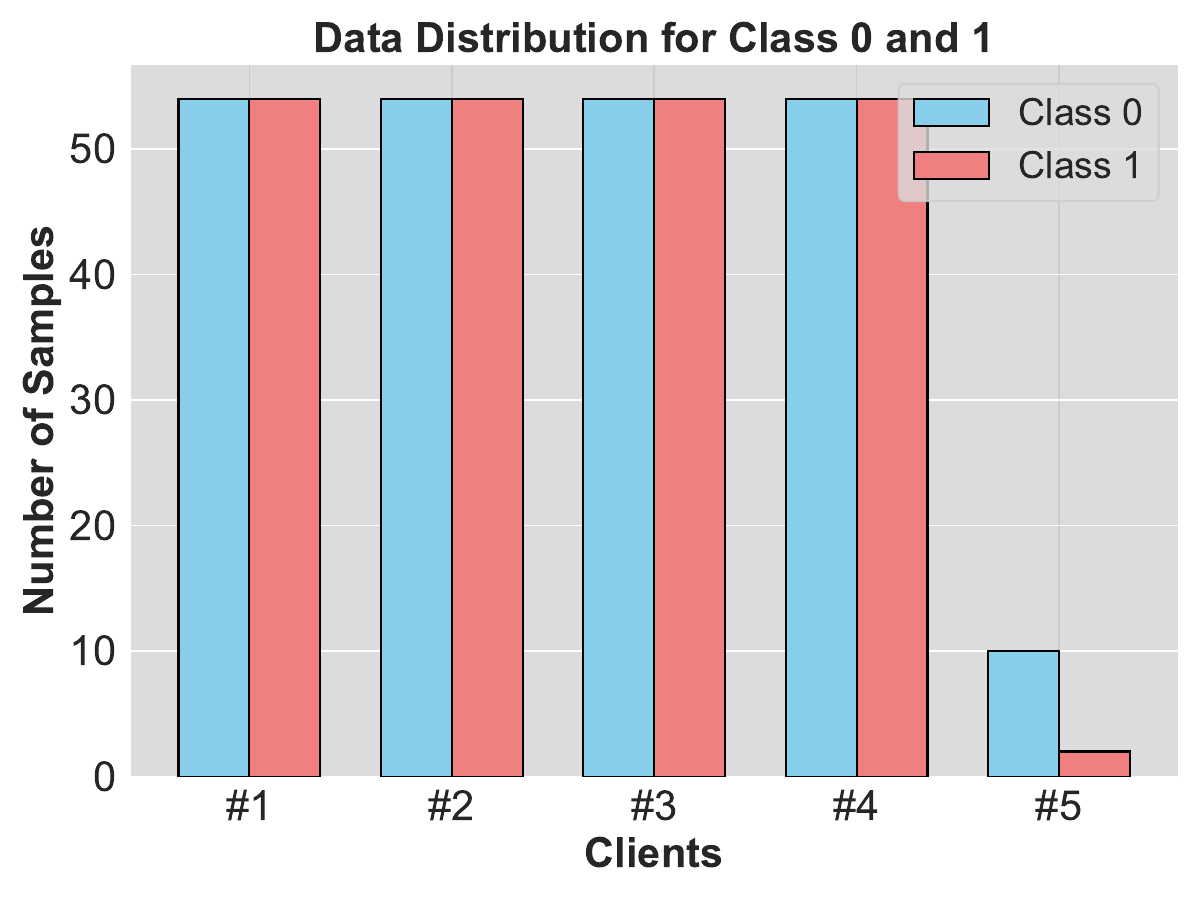}\hfill
            \includegraphics[width=0.49\linewidth]{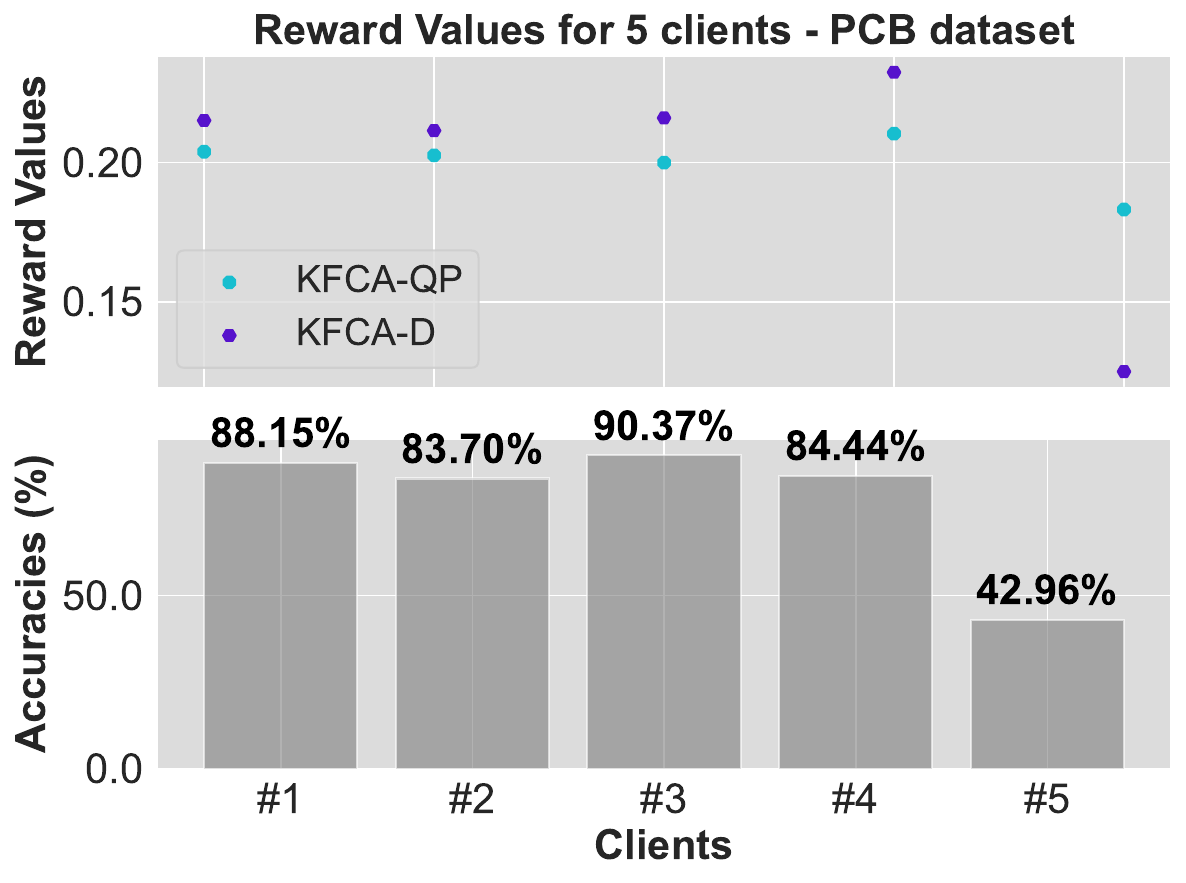}
            \caption{}\label{fig:siemens_comparison}
        \end{subfigure}
    \end{subfigure}
    \caption{Real-world PCB inspection deployment with KFCA-QP. (a)~Hardware: PCB board with manually installed components and camera stand. (b)~Representative PCB component samples. (c)~Per-client data distribution (left) and KFCA-QP rewards (right) across five FL clients.}
\end{figure}
We deployed KFCA-QP in a real-world production PCB assembly inspection pipeline, where line-mounted cameras capture component images for automated defect detection (Fig.~\ref{fig:siemens_pcb}). This setting is a natural fit for incentivized FL: data is siloed across manufacturing sites and cannot be pooled, and maintaining a shared public test set is often impractical, making Shapley-style rewards unrealistic. We evaluate the deployed regime via federated binary classification (component correctly assembled vs.\ missing) by training LeNet~\cite{lecun1989backpropagation} with FedAvg across $5$ clients on $672$ images; representative samples are shown in Fig.~\ref{fig:pcb_binary}. KFCA-QP computes rewards from communicated (quantized) updates alone and down-weights a low-quality client while assigning similar rewards to the remaining sites (Fig.~\ref{fig:siemens_comparison}).

\subsection{KFCA for Federated LLM Fine-Tuning}
\label{sec:flowertune_main}

Federated LLM fine-tuning is an increasingly important FL setting because high-value domain data (e.g., finance, healthcare, and enterprise code) is often inaccessible for centralized training due to privacy, security, or governance constraints. We instantiate KFCA-QP on the top FlowerTune LLM Leaderboard configurations (January 2026)~\cite{flowertune_leaderboard_website} across four domains (general NLP, finance, medical, code), aligning our evaluation with realistic, state-of-the-art federated adapter-tuning scenarios (Figure~\ref{fig:LLM_KFCAQP})~\cite{gao2025flowertune}. Following FlowerTune, clients fine-tune only lightweight LoRA/DoRA adapters and communicate only these adapter updates, reflecting practical deployments where a shared public evaluation set may be unavailable.


\begin{figure}[!htbp]
    \centering
    \begin{subfigure}[c]{0.55\textwidth}
        \centering
        \includegraphics[width=\linewidth]{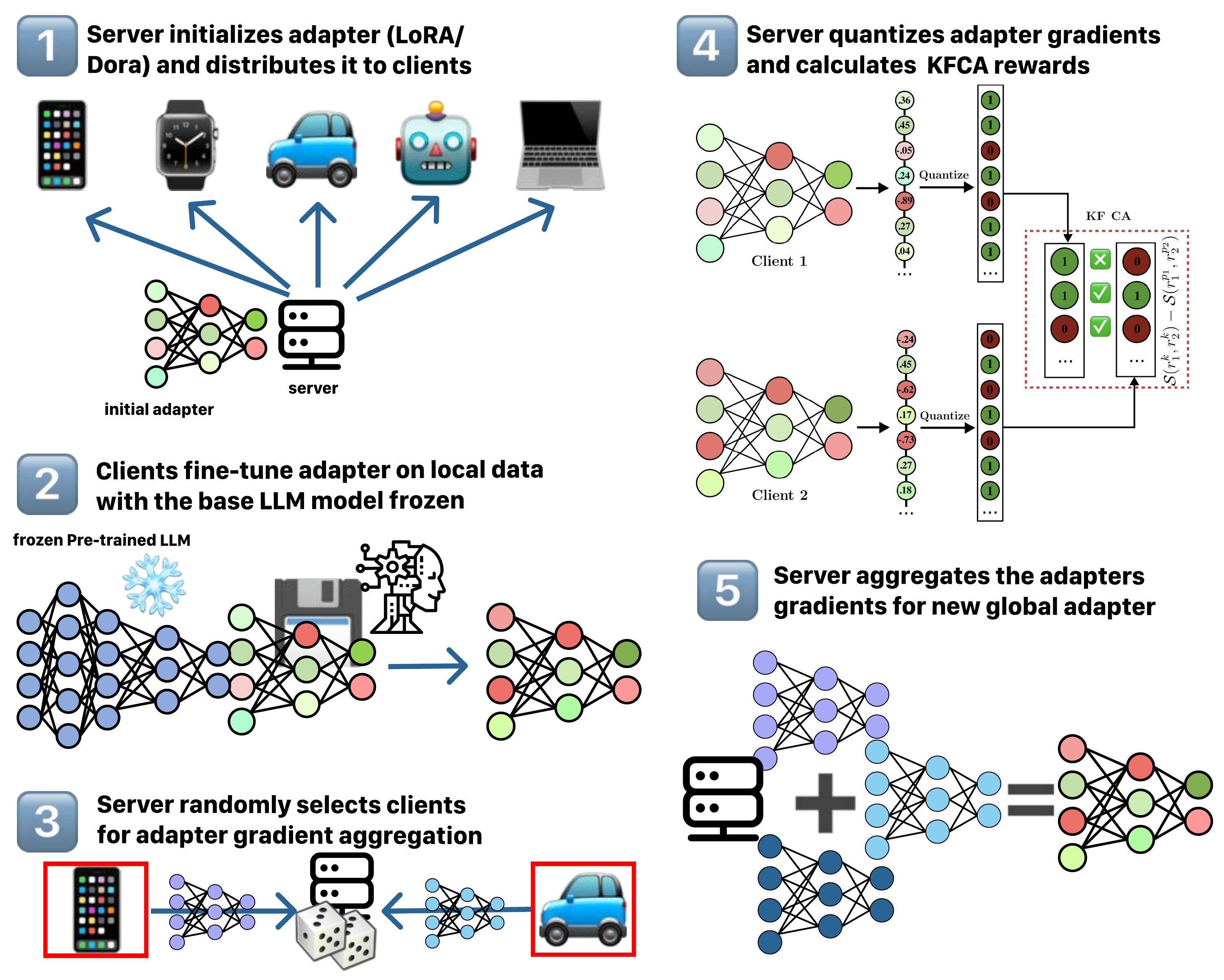}
        \caption{Per-round pipeline.}\label{fig:LLM_KFCAQP}
    \end{subfigure}
    \hfill
    \begin{subfigure}[c]{0.42\textwidth}
        \centering
        \includegraphics[width=\linewidth]{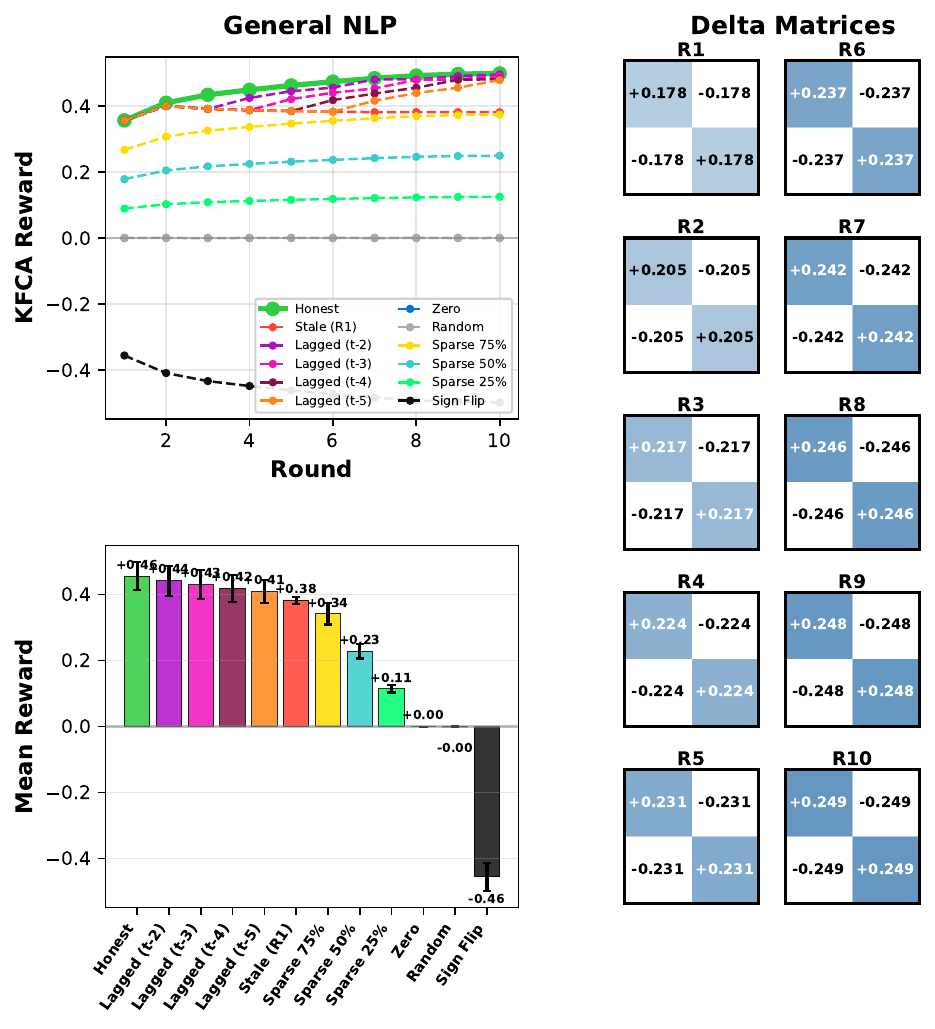}
        \caption{Reward separation (general NLP).}\label{fig:Honest_Reward_Delta_Teaser}
    \end{subfigure}
    \caption{Federated LLM adapter tuning with KFCA-QP. (a)~Per-round pipeline: server distributes LoRA/DoRA adapters, clients fine-tune locally with frozen base LLM, adapter updates are sign-quantized and rewarded by peer agreement, then aggregated. (b)~Reward separation between honest and attack strategies on general NLP (full results in Appendix~\ref{app:llm_finetune}).}
\end{figure}

\textbf{Results:} Across all four domains and all 10 rounds, 1-bit–quantized LoRA/DoRA adapter updates satisfy the categorical-world condition. Under this structure, KFCA-QP yields a strictly incentive-aligned reward ordering: honest updates receive the highest rewards, while deviations are penalized monotonically—free-riding collapses to $\approx 0$ reward and adversarial sign-flips become strongly negative, with intermediate manipulations graded by their departure from honesty. An example for the general NLP category is shown in Figure~\ref{fig:Honest_Reward_Delta_Teaser}. The full experimental results are in Appendix~\ref{app:llm_finetune}.

\FloatBarrier
\section{Conclusion and Future Research}


We introduced Knowledge-Free Correlated Agreement (KFCA), a multi-task peer-prediction reward mechanism for federated learning that does not require knowledge about report distributions or the ground truth. Under a categorical-world condition, KFCA is strongly truthful and eliminates the label-flipping vulnerability of Correlated Agreement (CA), while enabling lightweight, real-time reward computation. Beyond controlled Shapley-value benchmarks, we demonstrate KFCA's practical applicability in state-of-the-art federated LLM adapter fine-tuning and in a real-world PCB quality inspection setting. In future work, we aim to explore how KFCA can be implemented within smart contracts to coordinate and incentivize swarms of autonomous AI agents; we have outlined a decentralized, blockchain-based workflow in Appendix~\ref{app:blockchain}.

\bibliography{references}
\bibliographystyle{plainnat}

\newpage
\appendix

\section{Appendix}
\subsection{Background}

\begin{table*}[h]
    \centering
    \caption{Comparison of Rewarding and Contribution Measuring Methods for Federated Learning.}
    \label{tab:comparison}
    \begin{tabular}{p{.1\textwidth}p{.3\textwidth}p{.3\textwidth}p{.2\textwidth}}
        \toprule
        \textbf{Type} & \textbf{Description} & \textbf{Limitations} & \textbf{Examples} \\
        \midrule
        Honest Report/Full Information & Rewards based on client's reported data, bandwidth, accuracy, loss, cost. & Prone to dishonest behavior; may require Trusted Execution Environments on unsuitable devices. & \cite{LW6_data1, LW11_RewardResponseGame, GTG_28} \\
        \midrule
        Reputation Systems & Rewards based on client's reputation or majority voting. & Quantifying reputation is an open research topic. & \cite{LW5_FL_HomeAppliances_IoT, LW7_Zhang_Reputation} \\
        \midrule
        Leave-one-out & Measures contributions using a performance metric, e.g., accuracy. & Overlooks interactions between datasets. & \cite{leaveoneout} \\
        \midrule
        Shapley Value & Accounts for all interactions and promotes fairness (appendix (\ref{app:shapley_value}). & May be computationally intensive in its original form. & \cite{shapleyMLicml, AISTATSshapleyML, gtgShapley} \\
        \midrule
        Similarities & Rewards based on model similarities or subjective ratings. & Prone to attacks; lacks theoretical and experimental rigor. & \cite{crowdsensing_meets_FL, LW10_refiner, generatedTestset_majorityvote} \\
        \midrule
        Peer Prediction & Uses methods to elicit truthful behavior without direct ground truth. & Requires knowledge of client report distributions. & \cite{LW1_witt2021rewardbased, LIU_CA, CA_HONGTAO} \\
        \bottomrule
    \end{tabular}
\end{table*}

\label{app:background}
In FL, measuring contributions can be challenging due to its inherent privacy-preserving nature. Information about data, such as its quality or distribution, and the quality of training rounds remain private to the respective clients and cannot be directly verified, as only the model updates are visible to the central server. This lack of explicit information poses challenges in determining the influence of a reported model update on the overall quality of FL training and is open research in the FL community \cite{LeonSP, AdvancesAndOpenProblemsFL, LW1_witt2021rewardbased}. The contributions of clients in FL can be (i) self-reported \& reputation, (ii) measured explicitly as a performance metric on a testset or (iii) implicitly by comparing reports \cite{LeonSP}. This section summarizes the existing contribution measurement.

\subsubsection{Honest Report/Full Information}
\textbf{Honest Report/Full Information} rewards are calculated on the basis of the client's report of the amount of data \cite{LW6_data1, LW11_RewardResponseGame, GTG_28, GTG_31, GTG_5, LW4_FL_IIoT}, bandwidth \cite{GTG_31, GTG_4}, local accuracy \cite{MH1_Chai2021, 10_GTG, LW4_FL_IIoT} or local loss \cite{EntropyLossonFullInfo} or cost (data collection, training, etc) \cite{CrowdsourcingFrameworkFL, 10_GTG, GTG_4, GTG_22}. Yet reward systems based on such simplified assumptions may not be applicable in any real-world scenario as the dominant strategy for an individual-rational agent is dishonest behavior (report the best possible outcome without costly model training). Applying Trusted Execution Environments \cite{TXX} might solve the issue, yet it might be an infeasible technical requirement for mobile, edge, or IoT devices.

\textbf{Reputation systems} Reward mechanisms based on the client's reputation \cite{LW5_FL_HomeAppliances_IoT, LW7_Zhang_Reputation, 10_GTG, MH13_Li2020} or majority voting \cite{crowdsensing_meets_FL, CrowdsourcingFrameworkFL, KT02_Toyoda2020Access, KT01_Toyoda2019BigData} promises to relax heavy verification and control mechanics for high reputation clients. How to quantify the reputation in a fair and robust fashion remains open research.

\subsubsection{Direct Comparisons on Performance Metrics}
\textbf{Leave-one-out} Assessing client contributions in a federated learning (FL) setting can be approached by defining a performance metric (e.g., accuracy) and subsequently evaluating these contributions, such as updated models, on a public data- or testset $D^{pub}$. A prevalent method is the Leave-one-out strategy \cite{leaveoneout}, where a client's contribution is determined by the performance differential of the global model with and without its participation. However, this seemingly straightforward methodology overlooks contributions or datasets' interactions between clients.

\textbf{Shapley value} In contrast, the Shapley value, an idea rooted in cooperative game theory, accounts for all conceivable interactions, promoting fairness through principles of efficiency, symmetry, linearity, and null-player \cite{shapley1953value}, by assigning each client a unique value according to their marginal contribution. Even though imposing exponential computation in its original form, optimizations and approximations make SV a suitable method to assess contributions in the FL context \cite{shapley2,shapleyMLicml,ShapleyFLSpringer,AISTATSshapleyML,gtgShapley, ProfitAllocationFL, wang2020principled, wei2020efficient, LW9_FedCoin,MH4_Ma2021}.

\subsubsection{Relative Comparisons}
\textbf{Similarities}
In FL, some works base the reward distribution on proxy metrics like the distance between the model parameters and the average model \cite{crowdsensing_meets_FL, LW5_FL_HomeAppliances_IoT}, subjective rating of others on locally owned data \cite{KT01_Toyoda2019BigData,KT02_Toyoda2020Access, LW10_refiner} or on
majority votes on pairwise simrlarities on artificially generated data \cite{generatedTestset_majorityvote}, local relative accuracy levels \cite{LW11_RewardResponseGame} etc. However, these methods lack theoretical properties and thorough experimental analysis and assume that local models which are similar to the aggregated model are of more value, incentivizing attacks to manipulate the overall outcome for their own reward.

\textbf{Peer prediction}
In FL, valuable contributions are not explicitly visible. However, Multi-task Peer prediction methods can elicit informed and truthful behavior from clients without knowing the ground truth, e.g. what is a ``high-value contribution'' by leveraging correlations between multiple reports. Ideally, a scoring rule is defined that allocates rewards such that clients are incentivized to (i) train the model and (ii) report it truthfully when the ground truth is not directly observable. The Peer Truth Serum \cite{LW1_witt2021rewardbased} and Correlated Agreement (CA) \cite{LIU_CA, CA_HONGTAO} have been applied to incentivize FL. However, designing the scoring rule with said behavior requires knowledge about the report distributions of all clients to be available.

\subsection{Limitations of Correlated Agreement: Details}
\label{app:ca_limitations}

This appendix expands on the two CA limitations summarised in Section~\ref{sec:ca_limitations}: (i)~the practical obstacles introduced by CA's reliance on the global $\Delta$ matrix, and (ii)~a worked binary example showing that label flipping is rewarded identically to truthful reporting under CA.

\subsubsection{Knowledge on the report distribution}
\label{app:ca_knowledge}

CA requires computing the correlation matrix $\Delta$ defined in Definition~\ref{def:delta}, which depends on both the joint distribution $\mathbb{P}(Z_1,Z_2)$ and the marginal distributions $\mathbb{P}(Z_1)$ and $\mathbb{P}(Z_2)$. This introduces three practical obstacles:
\begin{enumerate}
    \item \emph{Centralization requirement.} Estimating $\Delta$ requires access to all client reports, violating the decentralized and privacy-preserving nature of FL.
    \item \emph{Computational inefficiency.} Per pair, the server scans $m$ joint reports to populate an $L{\times}L$ count table and derive $\hat{\Delta}$ ($\mathcal{O}(m+L^2)$); across $\binom{n}{2}$ pairs, $\mathcal{O}(n^2(m+L^2))$ per round, dominated by $\mathcal{O}(n^2 m)$ in FL where $m \gg L^2$.
    \item \emph{Delayed reward computation.} Because $\Delta$ must be globally estimated, CA cannot compute payments in real time, limiting deployability in online or blockchain-based implementations.
\end{enumerate}

\subsubsection{Worked label-flipping example}
\label{app:ca_labelflip_example}

Substituting the CA score matrix from Definition~\ref{def:CA} into Eq.~\eqref{eq:expected_reward_delta}, the expected reward under deterministic reporting functions $f_1, f_2 : [L] \to [L]$ simplifies to:
\begin{equation}
\label{eq:expected_reward_CA}
E(f_1, f_2)
= \sum_{a,b \in [L]} \Delta(a,b)\, \mathcal{S}_{\mathrm{CA}}\big(f_1(a), f_2(b)\big).
\end{equation}
Consider a binary task ($L=2$) where client~1 reports truthfully across six tasks: $(1,0,1,0,1,0)$, and client~2, though informed, flips the labels: $(0,1,0,1,0,1)$. The empirical distributions are
\[
\begin{aligned}
&\mathbb{P}(Z_1 = 0) = \mathbb{P}(Z_1 = 1) = 0.5, \\
&\mathbb{P}(Z_2 = 0) = \mathbb{P}(Z_2 = 1) = 0.5, \\
&\mathbb{P}(Z_1 = 0, Z_2 = 1) = \mathbb{P}(Z_1 = 1, Z_2 = 0) = 0.5,
\end{aligned}
\]
with all other joint probabilities zero. The delta matrix is
\[
\Delta =
\begin{pmatrix}
-0.25 & 0.25 \\
0.25 & -0.25
\end{pmatrix}.
\]
Using the CA scoring rule $\mathcal{S}_{\mathrm{CA}}$ from Definition~\ref{def:CA}, the expected reward becomes
\[
E(f_1, f_2) = (-0.25)\cdot 0 + 0.25\cdot 1 + 0.25\cdot 1 + (-0.25)\cdot 0 = 0.5,
\]
which matches the payoff under truthful reporting. A client who systematically inverts labels therefore receives the maximal reward, exposing CA's susceptibility to label-flipping attacks.

\subsection{Formal Assumptions for Multi-Task Peer Prediction}
\label{app:MTTPassumptions}

We adopt the standard assumptions from the multi-task peer prediction (MTPP) literature~\cite{DasguptaGhosh2013, OriginalCA}. Let \( N \) denote the set of clients and \( M \) the set of tasks.

\begin{enumerate}
    \item \textbf{Payment feasibility.}  
    The mechanism can assign (e.g., monetary) rewards to clients based on their reports.
       \item \textbf{Risk-neutrality.}  
    Clients are risk-neutral and only care about their own payment and effort cost. The utility for client \( i \) is:
    \[
    U_i = E_i - c(e_i), \quad \text{where } c(1) > c(0) = 0,
    \]
    with \( E_i \) the expected total payment across tasks, and \( c(e_i) \) the total effort cost.
    \item \textbf{Binary effort.}  
    Each client \( i \in N \) independently decides whether to exert effort on each task \( k \in M \), denoted:
    \[
    e_i^k \in \{0,1\}, \quad \text{where } 1 = \text{effortful},\; 0 = \text{shirking}.
    \]
    
    \item \textbf{Ex-ante identical tasks.}  
    All tasks are drawn independently from a common prior distribution over ground-truth labels.  
    Clients cannot distinguish tasks before seeing their signals. 
\begin{itemize}
    \item \textbf{Signal generation.}  
    Each task \( k \in M \) has an unknown ground-truth label \( Y^k \in [L] \).  
    Each client \( i \) receives a private signal \( Z_i^k \in [L] \), sampled based on their effort level:
    \[
    \mathbb{P}(Z_i^k = a \mid Y^k = y, e_i^k) =
    \begin{cases}
        P_i(a \mid y), & \text{if } e_i^k = 1, \\
        Q_i(a),        & \text{if } e_i^k = 0,
    \end{cases}
    \]
    where:
    \begin{itemize}
        \item \( P_i(\cdot \mid y) \) is an informative channel, assumed diagonally dominant:  
        \[
        P_i(y \mid y) > P_i(a \mid y), \quad \forall a \ne y;
        \]
        \item \( Q_i(\cdot) \) is an uninformative distribution, independent of \( y \).
    \end{itemize}
    When all clients exert effort (\( e_i^k = 1 \)), signals are conditionally independent given \( Y^k \):
    \[
    \mathbb{P}(Z_1^k = a, Z_2^k = b \mid Y^k = y) = P_1(a \mid y)\, P_2(b \mid y).
    \]

    \item \textbf{Report generation.}  
    After receiving signal \( Z_i^k \), each client reports a label \( R_i^k \in [L] \) using a (possibly randomized) strategy \( F_i(r \mid z) \).  
    The conditional probability of reporting \( r \) is:
    \[
    \mathbb{P}(R_i^k = r \mid Y^k = y, e_i^k)
    = \sum_{a \in [L]} F_i(r \mid a)\, \mathbb{P}(Z_i^k = a \mid Y^k = y, e_i^k).
    \]
\end{itemize}

\end{enumerate}

This framework models strategic agents who decide whether to exert effort and may misreport their signals.
These assumptions are minimal yet expressive, ensuring the signal structure needed for peer-prediction to detect effort and incentivize truthful reporting.
In the main paper, we instantiate this framework with the CA and KFCA mechanisms.

\paragraph{Informed vs.\ uninformed strategies.}
Reporting strategies $F_i$ are classified by whether the report distribution depends on the observed signal:
\begin{itemize}
    \item \emph{Informed strategy:} there exist $a \ne b \in [L]$ such that $F_i(\cdot \mid a) \ne F_i(\cdot \mid b)$ — the report uses information from the signal.
    \item \emph{Uninformed strategy:} $F_i(\cdot \mid a) = F_i(\cdot \mid a')$ for all $a, a' \in [L]$ — the report distribution does not depend on the signal.
\end{itemize}
The truthful strategy $f^\star(a) = a$ is the canonical informed strategy; constant-relabeling and label permutations are the principal informed deviations of interest.
\subsection{Expected Reward}
\label{app:expectedvalue}

Let \( F_1, F_2 : [L] \to \Delta([L]) \) be the reporting strategies of two clients, where \( F_1(r_1 \mid a) \) and \( F_2(r_2 \mid b) \) denote the probability of reporting \( r_1 \) and \( r_2 \), given private signals \( Z_1 = a \) and \( Z_2 = b \), respectively. Then the expected reward under scoring rule \( \mathcal{S} : [L] \times [L] \to \mathbb{R} \) is:

\begin{align}
\begin{split}
\label{eq:expected_reward}
E(F_1, F_2) &= \sum_{a=1}^L \sum_{b=1}^L \mathbb{P}(Z_1 = a, Z_2 = b) \\
&\quad \times \sum_{r_1=1}^L \sum_{r_2=1}^L F_1(r_1 \mid a)\, F_2(r_2 \mid b)\, \mathcal{S}(r_1, r_2) \\
&\quad - \sum_{a=1}^L \sum_{b=1}^L \mathbb{P}(Z_1 = a)\, \mathbb{P}(Z_2 = b) \\
&\quad \times \sum_{r_1=1}^L \sum_{r_2=1}^L F_1(r_1 \mid a)\, F_2(r_2 \mid b)\, \mathcal{S}(r_1, r_2).
\end{split}
\end{align}

Using the delta matrix \(\Delta(a,b) := \mathbb{P}(Z_1 = a, Z_2 = b) - \mathbb{P}(Z_1 = a)\mathbb{P}(Z_2 = b)\), this simplifies to:

\begin{align}
\begin{split}
\label{eq:expected_reward_simplified_appendix}
E(F_1, F_2) &= \sum_{a=1}^L \sum_{b=1}^L \Delta(a, b) \sum_{r_1=1}^L \sum_{r_2=1}^L \mathcal{S}(r_1, r_2)\, F_1(r_1 \mid a)\, F_2(r_2 \mid b).
\end{split}
\end{align}

In the case of deterministic reporting strategies \( f_1, f_2 : [L] \to [L] \), where each client reports a fixed label given its signal, i.e.,
\[
F_1(r \mid a) = \mathds{1}\{r = f_1(a)\}, \quad F_2(r \mid b) = \mathds{1}\{r = f_2(b)\},
\]
the expected reward simplifies to:
\begin{equation}
E(f_1, f_2) = \sum_{a=1}^L \sum_{b=1}^L \Delta(a, b)\, \mathcal{S}(f_1(a), f_2(b)).
\end{equation}
\subsection{Proof of Theorem~\ref{thm:CA_truthful}: Informed Truthfulness of CA}
\label{app:truthfulproof}

We reproduce the informed-truthfulness result of the Correlated Agreement (CA) mechanism following~\cite{OriginalCA}. The goal is to show that truthful reporting yields weakly higher expected reward than any other deterministic strategy profile, with equality only for informed strategies.

\paragraph{Setup and Assumptions.}
We assume the following standard conditions:
\begin{enumerate}
    \item Tasks are ex-ante identical and randomly assigned; clients cannot distinguish between bonus and penalty tasks.
    \item Clients receive conditionally independent signals \( Z_1, Z_2 \in [L] \), given the latent label \( Y \in [L] \).
    \item Clients apply deterministic reporting functions \( f_1, f_2 : [L] \to [L] \).
\end{enumerate}

Define the \emph{delta matrix} \( \Delta \in \mathbb{R}^{L \times L} \) as:
\[
\Delta(a, b) := \mathbb{P}(Z_1 = a, Z_2 = b) - \mathbb{P}(Z_1 = a)\, \mathbb{P}(Z_2 = b),
\]
which is symmetric and mean-centered:
\[
\Delta(a,b) = \Delta(b,a), \quad \sum_b \Delta(a,b) = \sum_a \Delta(a,b) = 0.
\]

The CA mechanism defines the scoring rule:
\[
\mathcal{S}_{\mathrm{CA}}(a,b) :=
\begin{cases}
1, & \text{if } \Delta(a,b) > 0, \\
0, & \text{otherwise}.
\end{cases}
\]

\begin{proof}
Let \( f_1, f_2 : [L] \to [L] \) be arbitrary deterministic reporting functions. 

The expected reward under the CA mechanism is:
\[
E(f_1, f_2) = \sum_{a,b \in [L]} \Delta(a,b) \cdot \mathcal{S}_{\mathrm{CA}}(f_1(a), f_2(b)).
\]

\begin{enumerate}
    \item \textbf{Truthful strategy:} For the truthful profile \( f^\star(a) = a \), we have:
    \[
    E(f^\star, f^\star) = \sum_{a,b \in [L]} \Delta(a,b) \cdot \mathds{1} \{ \Delta(a,b) > 0 \} = \sum_{\Delta(a,b) > 0} \Delta(a,b).
    \]

    \item \textbf{Arbitrary strategy:} For any other strategy pair \( (f_1, f_2) \), the expected reward is:
    \[
    E(f_1, f_2) = \sum_{a,b \in [L]} \Delta(a,b) \cdot \mathds{1} \{ \Delta(f_1(a), f_2(b)) > 0 \} \le \sum_{\Delta(a,b) > 0} \Delta(a,b),
    \]
    with equality only if \( \Delta(f_1(a), f_2(b)) > 0 \) for all \( (a,b) \) such that \( \Delta(a,b) > 0 \).

    \item \textbf{Uninformed strategies get zero:} Suppose \( f_1 \) is uninformed, e.g., \( f_1(a) = r \) for all \( a \in [L] \). Then:
    \[
    E(f_1, f_2) = \sum_{a,b} \Delta(a,b) \cdot \mathcal{S}_{\mathrm{CA}}(r, f_2(b)) = \sum_b \mathcal{S}_{\mathrm{CA}}(r, f_2(b)) \cdot \sum_a \Delta(a,b) = 0,
    \]
    because \( \sum_a \Delta(a,b) = 0 \) for all \( b \).

\end{enumerate}
Therefore the truthful strategy achieves maximal expected reward. Equality holds only for strategy profiles that preserve all positively correlated signal pairs, i.e., both \( f_1 \) and \( f_2 \) must be informed functions aligned with the latent structure.
\end{proof}
\subsection{Robustness Against Malicious Clients}
\label{app:robustnessproof}

\subsubsection*{Binary Case}

\begin{proposition}[Robustness in Binary Case]
\label{prop:robust-binary}
In the binary label setting (\(L = 2\)), suppose the class prior is uniform (\( \mathbb{P}(Y=0) = \mathbb{P}(Y=1) = 0.5 \)), and honest clients follow symmetric noise with error rate \( \alpha \in [0, 0.5) \), i.e.,
\[
\mathbb{P}(r = Y \mid \text{honest}) = 1 - \alpha, \quad
\mathbb{P}(r \ne Y \mid \text{honest}) = \alpha.
\]
Then the KFCA mechanism preserves truthful incentives for honest clients as long as the fraction \( \lambda \) of malicious clients satisfies \( \lambda < 0.5 \), even under worst-case label-flipping.
\end{proposition}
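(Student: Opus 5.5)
We model the situation from the perspective of a single honest client $i$ who is paired, via the random pairing of Definition~\ref{def:MTPP}, with a peer $j$ drawn uniformly from the remaining population. With probability $1-\lambda$ the peer is honest, reporting $Y$ correctly with probability $1-\alpha$. With probability $\lambda$ the peer is malicious. Under worst-case label-flipping, a malicious peer reports $1-Y$ correctly with probability $1-\alpha$; equivalently, it reports $Y$ with probability $\alpha$. Averaging over the pairing, the effective peer report $R_j$ is a symmetric binary channel of $Y$ with error rate
\[
\beta = (1-\lambda)\alpha + \lambda(1-\alpha).
\]
Signals remain conditionally independent given $Y$, and the uniform prior keeps both marginals uniform. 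The binary delta matrix between client $i$'s signal and this effective peer report is therefore
\[
\Delta_{\mathrm{eff}} = \tfrac14 (1-2\alpha)(1-2\beta) \begin{pmatrix} 1 & -1 \\ -1 & 1 \end{pmatrix}.
\]
This follows from $\mathbb{P}(Z_i=a,R_j=a) = \tfrac12\big[(1-\alpha)(1-\beta)+\alpha\beta\big]$ minus $\tfrac14$.

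Next we check the sign. Compute
\[
1-2\beta = (1-2\alpha)(1-2\lambda).
\]
Since $\alpha<\tfrac12$, this quantity is strictly positive exactly when $\lambda<\tfrac12$. In that regime $\Delta_{\mathrm{eff}}$ therefore satisfies the categorical-world condition (Definition~\ref{def:categorical}): positive diagonal, negative off-diagonal.

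We then apply the analysis underlying Proposition~\ref{prop:strong_truthfulness} to client $i$'s own choice. Only client $i$ deviates, and the peer behaviour is held fixed, so we optimize only over $f_i$. In the binary case there are four deterministic maps $f_i$:
\begin{itemize}
\item the identity gives $E = \tfrac12 c$, where $c := (1-2\alpha)^2(1-2\lambda)>0$;
\item the flip gives $E = -\tfrac12 c$;
\item each of the two constant maps gives $E = 0$, because the columns of $\Delta_{\mathrm{eff}}$ sum to zero.
\end{itemize}
Truthful reporting is therefore the strict unique best response, and no-effort reporting (which yields an uninformative $Q_i$, hence a zero $\Delta$) is also strictly worse. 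Strictness requires an effort-cost condition $c/2 > c(1)$ per bonus task when effort is included; we would state this as a scaling of payments.

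The main obstacle is justifying the worst-case modelling of malicious clients. We must argue that label-flipping is indeed worst case among all adversarial report distributions, including randomized and coordinated ones. The approach we would try first is linearity. The honest client's reward is linear in the malicious peers' conditional report distribution $\mathbb{P}(R_j=y\mid Y=y)$. That probability is at least $0$, and the flip strategy attains the minimal agreement achievable by an informed adversary with accuracy $1-\alpha$. Any other adversary yields a larger $1-2\beta$, which keeps the sign positive a fortiori. If adversaries may be perfectly informed (agreement $0$), the threshold becomes $(1-\lambda)(1-2\alpha)>\lambda$. We would note that this is where the symmetric-noise assumption on adversaries is needed to obtain exactly $\lambda<\tfrac12$.
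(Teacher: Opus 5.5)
Your proof is correct, and your truthful payoff $\tfrac12(1-2\alpha)^2(1-2\lambda)$ equals the paper's $(1-2\lambda)\bigl[\tfrac12-2\alpha(1-\alpha)\bigr]$. The arithmetic core is the same, but the argument is organized differently.

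The paper works directly with bonus minus penalty. It uses agreement probability $(1-\alpha)^2+\alpha^2$ against an honest peer and $2\alpha(1-\alpha)$ against a flipping peer, subtracts the chance-agreement penalty $\tfrac12$, and stops once the honest client's expected reward is shown to be positive iff $\lambda<\tfrac12$.

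You instead fold the random choice of peer into a single symmetric channel with error $\beta$. This is legitimate because $\Delta$ is linear in the peer's channel once $Z_i$'s marginal is fixed. You then observe that the malicious minority only rescales $\Delta_{\mathrm{eff}}$ by $(1-2\lambda)$, so the sign pattern of Definition~\ref{def:categorical} survives exactly when $\lambda<\tfrac12$. Finally, you compare all four deterministic maps for client $i$.

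This buys something the paper leaves implicit. Its proof only establishes $E>0$, whereas ``preserves truthful incentives'' really means truth beats every deviation. Your enumeration gives truth $\tfrac12 c$, flip $-\tfrac12 c$, and constants and shirking $0$. That shows positivity of the honest reward is precisely the condition for truth to be the strict unique best response.

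Your last paragraph also surfaces an assumption the paper never states. The $50\%$ threshold relies on malicious clients having the same signal quality $\alpha$ as honest ones. A perfectly informed flipper lowers the threshold to $\lambda<(1-2\alpha)/(2-2\alpha)$.

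\textbf{Two small tightenings.}

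\emph{Asymmetric adversaries.} For asymmetric adversary strategies the effective peer channel is no longer symmetric, so a single $\beta$ does not exist. However, since $Z_i$ is uniform, the truthful payoff depends on the adversary only through $\mathbb{P}(R_j=Y)$; it equals $\alpha+(1-2\alpha)\,\mathbb{P}(R_j=Y)-\tfrac12$. Flip and constant maps still give $-E$ and $0$ by the zero row and column sums. This is the cleanest way to state your linearity and worst-case argument.

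\emph{Notation.} Rename either your constant $c$ or the effort cost $c(1)$. The condition $c/2>c(1)$ currently uses one symbol for two different quantities.
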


\begin{proof}
Let \( \lambda \in [0,1] \) denote the malicious client fraction. KFCA uses the MTPP structure with scoring rule \( \mathcal{S}(r_i, r_j) = \mathds{1}\{r_i = r_j\} \), and the expected payment to an honest client is:
\[
E = \mathbb{E}[\text{bonus}] - \mathbb{E}[\text{penalty}].
\]

\textit{Penalty term.} Under class symmetry and symmetric reporting, the marginal distribution of reports is uniform, so:
\[
\mathbb{E}[\text{penalty}] = \sum_{r \in \{0,1\}} \mathbb{P}(r)^2 = \left(\tfrac{1}{2}\right)^2 + \left(\tfrac{1}{2}\right)^2 = \tfrac{1}{2}.
\]

\textit{Bonus term.} There are two cases. If the peer is honest, then since reports are independent given \( Y \), the joint match probability is:
\[
\mathbb{P}(r_i = r_j \mid \text{both honest}) = (1 - \alpha)^2 + \alpha^2 = 1 - 2\alpha(1 - \alpha).
\]
If the peer is malicious and flips the label (i.e., reports \( 1 - r_j^{\text{honest}} \)), the match probability becomes:
\[
\mathbb{P}(r_i = r_j \mid \text{peer flips}) = \mathbb{P}(r_i \ne r_j^{\text{honest}}) = 2\alpha(1 - \alpha).
\]
Thus the expected bonus is:
\[
\mathbb{E}[\text{bonus}] = (1 - \lambda)\left[1 - 2\alpha(1 - \alpha)\right] + \lambda \cdot 2\alpha(1 - \alpha).
\]
Subtracting the penalty:
\[
E = \mathbb{E}[\text{bonus}] - \tfrac{1}{2}
= \left[1 - 2\alpha(1 - \alpha)\right](1 - \lambda) + 2\alpha(1 - \alpha)\lambda - \tfrac{1}{2}.
\]
This simplifies to:
\[
E = (1 - 2\lambda)\cdot \left[(1 - \alpha)^2 + \alpha^2 - \tfrac{1}{2}\right]
= (1 - 2\lambda)\cdot \left[\tfrac{1}{2} - 2\alpha(1 - \alpha)\right].
\]
Since \( \alpha \in [0, 0.5) \), we have \( \tfrac{1}{2} - 2\alpha(1 - \alpha) > 0 \), so \( E > 0 \) if and only if \( \lambda < 0.5 \).
\end{proof}

\subsubsection*{Multi-Class Case}

\begin{proposition}[Robustness in Multi-Class Case]
\label{prop:robust-multiclass}
In the multi-class setting (\( L > 2 \)), the KFCA mechanism remains robust under adversarial reporting, with the tolerable malicious fraction depending on the confusion matrices of honest and malicious clients.
\end{proposition}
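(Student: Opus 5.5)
The plan is to mirror the binary argument (Proposition~\ref{prop:robust-binary}) and reduce the statement to a sign condition on an \emph{effective} delta matrix seen by an honest client. Fix an honest client $i$ with confusion matrix $P_i(\cdot\mid y)$. Its peer is honest with probability $1-\lambda$, with confusion matrix $P_h$, and malicious with probability $\lambda$, with effective report channel $P_m(r\mid y)=\sum_a F_m(r\mid a)P_m^{\mathrm{sig}}(a\mid y)$. Because the peer is drawn at random from the population, the honest client's expected KFCA payment under MTPP is computed with the mixture peer channel
\[
\bar P(b\mid y) = (1-\lambda)P_h(b\mid y) + \lambda P_m(b\mid y).
\]
This mixture enters Eq.~\eqref{eq:expected_reward_KFCA} through the mixed delta matrix
\[
\bar\Delta(a,b) = \sum_y \mathbb{P}(Y=y)P_i(a\mid y)\bar P(b\mid y) - \mathbb{P}(Z_i=a)\,\mathbb{P}(\bar Z=b).
\]
Since $\Delta$ is bilinear in the two channels, $\bar\Delta = (1-\lambda)\Delta_{ih} + \lambda\Delta_{im}$, where $\Delta_{ih}$ and $\Delta_{im}$ are the delta matrices of client $i$ paired with an honest and with a malicious peer.

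\textbf{Step 1.} Write the honest client's payoff from deviating to $f_i$, holding the peer population fixed, as
\[
E(f_i) = \sum_{a,b}\bar\Delta(a,b)\,\mathds{1}\{f_i(a)=b\}.
\]
Here the peer reports are already folded into $\bar P$, so the peer acts as the identity.

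\textbf{Step 2.} Observe that $E(f_i)=\sum_a \bar\Delta(a,f_i(a))$ is maximized pointwise by choosing $f_i(a)\in\arg\max_b\bar\Delta(a,b)$. Truthful reporting is therefore strictly optimal exactly when
\[
\bar\Delta(a,a) > \bar\Delta(a,b) \quad\text{for all } b\neq a.
\]
This is a strict diagonal-dominance condition, weaker than the full categorical-world condition, and it also makes the expected payment positive because $\sum_b\bar\Delta(a,b)=0$. The content of the statement is thus the set of $\lambda$ for which this condition holds.

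\textbf{Step 3.} For each pair $a\neq b$, write $\bar\Delta(a,a)-\bar\Delta(a,b)=(1-\lambda)D_h(a,b)+\lambda D_m(a,b)$, where $D_h(a,b)=\Delta_{ih}(a,a)-\Delta_{ih}(a,b)$ and similarly for $D_m$. By the categorical-world condition for honest pairs, $D_h(a,b)>0$. Solving gives the tolerance
\[
\lambda < \lambda^\star := \min_{a\neq b:\,D_m(a,b)<0}\frac{D_h(a,b)}{D_h(a,b)-D_m(a,b)},
\]
with $\lambda^\star=1$ if no $D_m$ is negative. This expresses the threshold explicitly through the honest and malicious confusion matrices, which is exactly the dependence the statement claims.

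I would close with two sanity checks. For symmetric noise with $L=2$ and a flipping adversary, $D_m=-D_h$, which recovers $\lambda^\star=1/2$ and matches Proposition~\ref{prop:robust-binary}. An uninformed adversary has $\Delta_{im}=0$, so $\lambda^\star=1$.

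\textbf{Main obstacle.} The hardest part is the modeling in Step 1. MTPP penalty terms use independent tasks, possibly paired with differently typed peers, so the product-of-marginals term may not collapse to the mixture marginal. I would first assume the peer type is drawn independently for bonus and penalty tasks from the same population. Then both terms are linear in the peer channel and the mixture formula is exact. If the paper's convention differs, I would instead handle the bonus term with a per-type delta and bound the discrepancy.
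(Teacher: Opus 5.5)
Your argument is correct, but it takes a different route from the paper.

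\textbf{The paper's route.} It computes the honest client's payment directly from confusion matrices. The bonus is $(1-\lambda)A+\lambda B$, with $A=\sum_{k,\ell}\pi_k\alpha_{k\ell}^2$ and $B=\sum_{k,\ell}\pi_k\alpha_{k\ell}\tilde\alpha_{k\ell}$. The penalty is $\sum_\ell q_\ell^2$, with $q$ the mixed report marginal. The paper reads ``robust'' as ``the honest client's expected payment is positive,'' and arrives at $\lambda<(A-E_{\text{penalty}})/(A-B)$ when $A>B$. What this buys is interpretable aggregate quantities: $A$ as honest self-consistency and $B$ as honest--malicious alignment. It also mirrors the binary computation.

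\textbf{Your route.} You require truthful reporting to be a strict best response to the mixed peer population. Bilinearity of $\Delta$ turns this into row-wise diagonal maximality of $\bar\Delta=(1-\lambda)\Delta_{ih}+\lambda\Delta_{im}$. What this buys:
\begin{enumerate}
\item It is a genuine incentive statement that rules out every deviation, not just non-positive pay. As you note, it also implies positivity.
\item It gives an explicit threshold $\lambda^\star$, since each pairwise condition is linear in $\lambda$.
\item That threshold is strictly positive whenever honest pairs satisfy the categorical-world condition.
\item It handles the penalty correctly. The product-of-marginals term is $\sum_\ell p_\ell q_\ell$, with $p_\ell=\sum_k\pi_k\alpha_{k\ell}$ the honest client's own marginal, and this is what $\bar\Delta$ encodes. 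The paper's $\sum_\ell q_\ell^2$ agrees with it only when the two marginals coincide, as in the uniform binary case. It also makes the paper's bound implicit, because $E_{\text{penalty}}$ itself depends on $\lambda$.
\end{enumerate}

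\textbf{Your flagged obstacle.} It does not arise. Client $i$'s marginal does not depend on the peer's type, so both the joint term and the product term are linear in the peer's channel. Conditioning on the peer type and averaging gives $\bar\Delta$ exactly, using the same peer on bonus and penalty tasks as in Definition~\ref{def:MTPP}. The same holds if penalty peers are drawn independently.
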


\begin{proof}
Let \( \pi_k := \mathbb{P}(Y = k) \) denote the class prior, \( \alpha_{k\ell} := \mathbb{P}(\text{report } \ell \mid Y = k, \text{ honest}) \) the confusion matrix of honest clients, and \( \tilde{\alpha}_{k\ell} := \mathbb{P}(\text{report } \ell \mid Y = k, \text{ malicious}) \) the corresponding matrix for malicious clients. Let \( \lambda \in [0,1] \) be the fraction of malicious clients.

\textit{Bonus term.} When a truthful client is paired with a random peer (honest with probability \( 1 - \lambda \), malicious with probability \( \lambda \)), the expected bonus is:
\[
E_{\text{bonus}} = \sum_{k,\ell} \pi_k\, \alpha_{k\ell} \left[(1 - \lambda)\alpha_{k\ell} + \lambda \tilde{\alpha}_{k\ell}\right]
= (1 - \lambda) A + \lambda B,
\]
where
\[
A := \sum_{k,\ell} \pi_k\, \alpha_{k\ell}^2, \qquad
B := \sum_{k,\ell} \pi_k\, \alpha_{k\ell} \tilde{\alpha}_{k\ell}.
\]

\textit{Penalty term.} Let the marginal report probability be:
\[
q_\ell := (1 - \lambda)\sum_k \pi_k \alpha_{k\ell} + \lambda \sum_k \pi_k \tilde{\alpha}_{k\ell}.
\]
Then the expected penalty is:
\[
E_{\text{penalty}} = \sum_{\ell \in [L]} q_\ell^2.
\]

\textit{Robustness condition.} The total expected reward is:
\[
E_{\text{total}} = E_{\text{bonus}} - E_{\text{penalty}}.
\]
To ensure robustness, we require \( E_{\text{total}} > 0 \). A sufficient condition is:
\[
(1 - \lambda) A + \lambda B > E_{\text{penalty}}.
\]
Solving for \( \lambda \), we obtain:
\[
\lambda < \frac{A - E_{\text{penalty}}}{A - B}, \qquad \text{provided } A > B.
\]

This condition is interpretable: when honest clients exhibit high self-consistency (large \( A \)), malicious clients are less aligned with honest reporting (small \( B \)), and the report marginals are near-uniform (small \( E_{\text{penalty}} \)), then robustness improves.

In particular, under uniform class prior and symmetric confusion (e.g., noisy identity), this condition simplifies and can be estimated empirically.
\end{proof}
\subsection{Why ``up to label permutation'' is a fundamental ceiling, not a KFCA-specific weakness}
\label{app:permutation_indistinguishability}

Proposition~\ref{prop:strong_truthfulness} states that, under the categorical-world condition, KFCA's expected reward is maximised exactly by strategy profiles of the form $f_1 = f_2 = \pi$ for some bijection $\pi : [L] \to [L]$. Since the truthful function $f^\star = \mathrm{id}$ is one such bijection, there are $L!$ reward-maximising profiles, all related by a global relabeling. This subsection clarifies how this fact interacts with KFCA's practical robustness guarantees and explains why the body of the paper describes KFCA as ``strongly truthful under honest majority'' rather than ``strongly truthful up to label permutation.''

\paragraph{Equilibrium uniqueness (population-level fixed points).}
Considered as a set of Bayesian Nash equilibria, KFCA admits one for each global label permutation: if every client agrees to call cats ``dogs'' and dogs ``cats,'' the joint distribution of reports is identical to the honest one and the mechanism cannot tell the two apart. This is a fundamental property of any peer-prediction mechanism without external ground truth---the score function only sees report tuples, so any global relabeling that preserves the joint distribution is reward-equivalent. Both the Correlated Agreement mechanism~\citep{OriginalCA} and the Dasgupta--Ghosh single-task variant~\citep{DasguptaGhosh2013} share this ambiguity, often called \emph{shared-permutation indistinguishability}.

\paragraph{Deviation robustness (out-of-equilibrium analysis).}
What matters for incentive design is not the size of the equilibrium set but whether any individual or coalition can profitably deviate from the prevailing equilibrium. Let $\lambda$ denote the fraction of clients adopting some non-identity permutation $\pi$, with the remaining $(1 - \lambda)$ reporting truthfully. Setting $D := \sum_a \Delta(a, a) > 0$ and $|O| := -\sum_a \Delta(a, \pi(a)) > 0$, the expected reward differential is
\[
E_{\text{honest}} - E_{\pi\text{-flipper}} \;=\; (1 - 2\lambda)\,(D + |O|).
\]
Since $D + |O| > 0$, honest reporting strictly dominates the $\pi$-flipping strategy whenever $\lambda < 1/2$. Permutation attacks therefore offer no advantage over any other minority deviation: they collapse into the same regime covered by Proposition~\ref{prop:robustness}. Above the $50\%$ threshold, $\pi$ becomes the de facto equilibrium and the original honest strategy is the minority deviation---this is the standard Byzantine-fault-tolerance impossibility shared by all peer-prediction mechanisms.

\paragraph{Operational reading.}
In a deployment where fewer than half of the clients collude on any single label permutation---the standard cross-silo FL assumption---KFCA strictly rewards honest reporting over every other deterministic strategy, permutation attacks included. The $L!$ equilibria coexist as theoretical fixed points but are unreachable from honest play under the honest-majority assumption. This is why the body of the paper states KFCA's truthfulness guarantee in terms of a $\lambda < 1/2$ threshold rather than as a caveat about label relabelings.

\subsection{Beyond Ideal Categorical Cases}
\label{app:categoricalpipeline}

In practice, the joint distribution of client signals in federated learning (FL) may deviate from the ideal categorical-world condition (Definition~\ref{def:categorical}). Clients often report continuous gradients, high-dimensional embeddings, or noisy scores derived from heterogeneous local data. In such cases, the empirical correlation matrix \(\Delta\) may not exhibit the diagonal-dominant sign pattern required by KFCA.

In these environments, KFCA cannot be directly applied to raw signals. However, under the following modeling assumptions:
\begin{enumerate}
    \item Signals are conditionally independent given the latent truth;
    \item Each client’s signal is informative about the latent truth,
\end{enumerate}
we can \emph{enforce} a categorical structure through a transformation pipeline that maps arbitrary outputs into categorical representations, producing a correlation matrix
\[
\widetilde{\Delta}(a,b) = \mathbb{P}(\widetilde{Z}_1 = a, \widetilde{Z}_2 = b) - \mathbb{P}(\widetilde{Z}_1 = a)\,\mathbb{P}(\widetilde{Z}_2 = b),
\]
which satisfies the categorical sign condition:
\[
\operatorname{sign}(\widetilde{\Delta}(a,b)) =
\begin{cases}
> 0, & \text{if } a = b, \\
< 0, & \text{if } a \ne b.
\end{cases}
\]
This sign structure ensures that agreement reflects shared latent truth and enables KFCA to remain incentive-compatible and knowledge-free even in complex settings.

We describe a three-stage pipeline that transforms general FL signals into categorical form compatible with KFCA.

\begin{enumerate}
    \item \textbf{Signal-space transformation.}  
    Each client applies a local transformation
    \[
        T_i : \mathcal{Z}_i \to [L'],
    \]
    where \(\mathcal{Z}_i\) is the raw signal space (e.g., vectors, continuous values, or label sets), and \([L']\) is a discrete category space. The transformed report is \(\widetilde{Z}_i = T_i(Z_i)\), designed to retain informativeness about the latent truth while enabling categorical agreement.

    \emph{Common applicable scenarios}:
    \begin{itemize}
        \item \textbf{Continuous or high-dimensional signals:} Raw outputs like gradients, logits, or representations can be discretized or clustered to preserve correlation with underlying labels.
        \item \textbf{Client heterogeneity (non-i.i.d. data):} Clients may have different priors or \(P_i(Z_i \mid Y)\), but if labels are semantically aligned, this heterogeneity is tolerable. When label semantics diverge, alignment is necessary.
    \end{itemize}

    \emph{Typical transformation strategies}:
    \begin{itemize}
        \item \emph{Quantization:} Map vectors to signs, directions, or magnitude bins;
        \item \emph{Clustering:} Use k-means or similar to group semantically similar signals;
        \item \emph{Thresholding:} Binarize performance-related scores (e.g., improvement vs. no improvement).
    \end{itemize}

    \item \textbf{Latent-truth alignment.}  
    Introduce a latent categorical variable \(Y^k \in [L']\) representing the true state for each task. Clients decode this truth via maximum a posteriori (MAP) inference:
    \[
        \widetilde{Z}_i^k = \arg\max_{a \in [L']} \mathbb{P}(Y^k = a \mid Z_i^k) \propto \pi(a)\,P_i(Z_i^k \mid Y^k = a),
    \]
    where \(\pi(a)\) is a prior over latent states. This step ensures cross-client consistency: reports with the same label correspond to the same latent truth, enabling meaningful correlation structure.

    If different clients use inconsistent labels, this stage merges or coarsens categories to create a common latent space.

    \item \textbf{Categorical regularization.}  
    When signals are noisy or sparsely distributed, we apply a smoothing step to the estimated correlation matrix:
    \[
        \widetilde{\Delta}(a,b) := \operatorname{sign}\!\left(\mathbb{E}[\Delta(a,b)]\right) \cdot \left|\mathbb{E}[\Delta(a,b)]\right|^\gamma, \qquad \gamma \in (0,1).
    \]
    This transformation preserves sign structure while enhancing diagonal dominance, ensuring that \(\operatorname{sign}(\widetilde{\Delta}) = \mathbb{I}\).

    \smallskip
    \noindent
    \textbf{Regularization note.}  
    This smoothing step is a modeling device used for stabilization and is not part of the KFCA scoring rule. The matrix \(\mathbb{E}[\Delta(a,b)]\) can be estimated using a small calibration set or prior signal statistics.
\end{enumerate}

\vspace{0.5em}
\noindent
\textbf{Generalized categorical model.}  
After applying the pipeline, each client emits a transformed report \(\widetilde{Z}_i^k \in [L']\) satisfying:
\[
\widetilde{\Delta}(a,a) > 0, \quad \widetilde{\Delta}(a,b) < 0 \text{ for } a \ne b.
\]
Thus, the transformed reports retain all theoretical guarantees of KFCA: strong truthfulness, knowledge-freedom, and computational simplicity.

\begin{proposition}[Categorical Representation Enforcement]
\label{prop:cat_enforce}
Suppose each client signal is transformed into a categorical report \(\widetilde{Z}_i\) via a pipeline that (i) aligns outputs to a common latent truth and (ii) preserves informativeness. Then, under conditional independence of signals given the truth:
\[
    \operatorname{sign}(\widetilde{\Delta}(a,b)) =
    \begin{cases}
        > 0, & \text{if } a = b, \\
        < 0, & \text{if } a \ne b.
    \end{cases}
\]
\end{proposition}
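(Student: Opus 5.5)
The plan is to write $\widetilde{\Delta}$ as a covariance over the latent truth and read off the sign pattern from informativeness. Let $\pi(y)=\mathbb{P}(Y=y)$ and $\widetilde P_i(a\mid y):=\mathbb{P}(\widetilde Z_i=a\mid Y=y)$. Hypothesis (i), alignment to a common latent space $[L']$, lets us index both clients' reports by the same labels. Conditional independence then gives
\[
\widetilde{\Delta}(a,b)=\sum_y \pi(y)\,\widetilde P_1(a\mid y)\widetilde P_2(b\mid y)-\Big(\sum_y \pi(y)\widetilde P_1(a\mid y)\Big)\Big(\sum_y \pi(y)\widetilde P_2(b\mid y)\Big)=\mathrm{Cov}_{Y\sim\pi}\big(g_a(Y),h_b(Y)\big),
\]
with $g_a(y)=\widetilde P_1(a\mid y)$ and $h_b(y)=\widetilde P_2(b\mid y)$. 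So the statement reduces to showing that $\mathrm{Cov}(g_a,h_a)>0$ and $\mathrm{Cov}(g_a,h_b)<0$ for $a\neq b$.

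Next I would make hypothesis (ii) concrete. Since the pipeline uses MAP decoding and the paper already assumes diagonal dominance in its formal assumptions, I would take informativeness to mean that the transformed channel is a noisy identity: $\widetilde P_i(a\mid y)=\beta_i$ if $a=y$ and $\gamma_i$ otherwise, with $\beta_i>\gamma_i$. More generally, $\widetilde P_i(a\mid y)=\gamma_i(a)+(\beta_i-\gamma_i)\mathds{1}\{a=y\}$ with a common excess $\beta_i-\gamma_i>0$.

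Under this form, $g_a(Y)=\text{const}+(\beta_1-\gamma_1)\mathds{1}\{Y=a\}$, and similarly for $h_b$. Hence
\[
\widetilde\Delta(a,b)=(\beta_1-\gamma_1)(\beta_2-\gamma_2)\,\mathrm{Cov}(\mathds{1}\{Y=a\},\mathds{1}\{Y=b\}).
\]
This equals a positive constant times $\pi(a)(1-\pi(a))>0$ when $a=b$, and times $-\pi(a)\pi(b)<0$ when $a\neq b$, provided every category has positive prior mass. If some category has zero mass, the pipeline's merging step removes it. This yields exactly the claimed sign pattern, and the binary case $L'=2$ reduces to the per-client condition $\alpha_i<\tfrac12$ already mentioned in the main text.

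The main obstacle is that for a general informative channel the claim is false as stated. With $L'\ge 3$ and asymmetric confusion, $\mathrm{Cov}(g_a,h_b)$ for $a\ne b$ can be positive, for example when both clients systematically confuse $y$ with a particular $b$. So diagonal dominance alone does not suffice. I would therefore state explicitly the structural assumption the proof needs: a symmetric, or more generally uniform-excess, noise model after alignment. As a first attempt at a weaker sufficient condition, I would require $\widetilde P_i(b\mid y)$ to be constant in $y$ over $y\ne b$. Under that condition $g_b$ depends on $Y$ only through $\mathds{1}\{Y=b\}$, and the computation above goes through unchanged.
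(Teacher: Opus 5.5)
Your first step matches the paper's. Its sketch also writes $\widetilde{\Delta}(a,b)$ as $\mathrm{Cov}_{Y}\big(\widetilde P_1(a\mid Y),\widetilde P_2(b\mid Y)\big)$, up to the factor $\eta_1\eta_2$ from Lemma~\ref{lem:shirking}.

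The routes split at the sign step. The paper argues from the informal hypotheses alone: for $a=b$ ``both terms increase when $Y=a$'', and for $a\neq b$ ``no latent state increases both terms''. You instead impose a uniform-excess noise model, under which the covariance factors as $c_1c_2\,\mathrm{Cov}(\mathds{1}\{Y=a\},\mathds{1}\{Y=b\})$.

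Your caution is warranted: the paper's sign argument fails for $L'\ge 3$.
\begin{itemize}
\item \emph{Off-diagonal failure.} Take a uniform prior on $\{1,2,3\}$ and let both clients use the channel with rows $(0.5,0.45,0.05)$, $(0.45,0.5,0.05)$, $(0.05,0.05,0.9)$. It is diagonally dominant, and every report's posterior mode is the reported label. Yet $\widetilde\Delta(1,2)=0.3575/9>0$, because $Y\in\{1,2\}$ raises both $\widetilde P_1(1\mid Y)$ and $\widetilde P_2(2\mid Y)$ relative to $Y=3$.
\item \emph{Diagonal failure.} With heterogeneous clients even the diagonal can fail. Suppose client~1 confuses $2\to1$ and client~2 confuses $3\to1$, with $(\widetilde P_i(1\mid y))_{y=1,2,3}$ equal to $(0.4,0.39,0.01)$ and $(0.4,0.01,0.39)$. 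Both columns can be completed to diagonally dominant, MAP-consistent channels, and then $\widetilde\Delta(1,1)=-0.1366/9<0$.
\end{itemize}
So the paper's route claims a generality it cannot deliver, while yours proves a correctly scoped statement. For $L'=2$ the column condition in your last paragraph is automatic, so with diagonal dominance nothing is lost in the sign-quantized (KFCA-QP) case. For $L'\ge 3$ your hypothesis marks the honest scope of the result.

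One correction to your final paragraph. Requiring $\widetilde P_i(b\mid y)=\gamma_i(b)$ for all $y\neq b$ is not weaker than your uniform-excess form; it is equivalent to it. The row sums then force $\widetilde P_i(y\mid y)=\gamma_i(y)+c_i$, with the common excess $c_i=1-\sum_{c}\gamma_i(c)$.

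What your phrase ``goes through unchanged'' skips is showing $c_i>0$. This follows from diagonal dominance: if $c_i\le 0$, then $\gamma_i(y)+c_i>\gamma_i(a)$ for all $a\neq y$ would give both $\gamma_i(y)>\gamma_i(a)$ and $\gamma_i(a)>\gamma_i(y)$. Strictly, the sign pattern only needs $c_1c_2>0$ together with a prior of full support.
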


\begin{proof}[Sketch.]
By Lemma~\ref{lem:shirking} and the conditional independence assumption:
\[
    \widetilde{\Delta}(a,b)
    \propto \mathrm{Cov}_Y\big(P_1(\widetilde{Z}_1 = a \mid Y),\,P_2(\widetilde{Z}_2 = b \mid Y)\big).
\]
If \(a = b\), both terms increase when \(Y = a\), giving positive covariance. If \(a \ne b\), no latent state increases both terms, resulting in negative covariance. Alignment ensures a one-to-one mapping between report symbols and latent truths, preserving this sign structure.
\end{proof}

With categorical enforcement in place, KFCA can be applied as:
\[
    s_i^k \xrightarrow{\,T_i\,} \widetilde{Z}_i^k \xrightarrow{\,\text{KFCA}\,} \text{Reward}.
\]
This transformation enables KFCA to handle raw data (e.g. continuous or heterogeneous) by converting signals into categorical reports that satisfy the required correlation structure. As a result, KFCA extends to a broad class of federated learning tasks while preserving its core guarantees: strong truthfulness (up to relabeling), knowledge-freedom, privacy preservation, and applicability to both classification and optimization without relying on external supervision or access to ground truth.
\subsection{The Categorical-World Condition under Non-IID Client Data}
\label{app:non_iid_categorical}

The categorical-world condition (Definition~\ref{def:categorical}) was the single most discussed assumption during the prior review cycle, with reviewers asking whether it remains realistic when clients hold non-IID data. This appendix unpacks the question in detail.

\subsubsection{Non-IID partition vs.\ categorical-world: different objects}

The two concepts live on different layers of the FL pipeline:
\begin{itemize}
    \item \textbf{Non-IID} is a property of the \emph{client data partition}---how training examples are distributed across clients (label skew, quantity skew, feature/domain skew, local label noise).
    \item \textbf{Categorical-world} is a property of the \emph{post-training signal correlation matrix} $\Delta$---specifically, $\Delta(a,a) > 0$ and $\Delta(a,b) < 0$ for $a \ne b$.
\end{itemize}
Non-IID is an upstream input that affects how informative each client's signal is; the categorical-world condition is a downstream sign condition on the joint distribution that informativeness produces.

\subsubsection{When does the condition hold? (binary reduction)}

In the binary case ($L = 2$), the condition reduces to a single inequality on each client's noise rate:
\[
\Delta(a,a) > 0,\ \Delta(a,b) < 0 \quad \iff \quad \alpha_1, \alpha_2 < \tfrac{1}{2},
\]
where $\alpha_i = \mathbb{P}(\widetilde{Z}_i \ne Y)$ is the probability that client $i$'s quantized signal disagrees with the latent truth. Plainly: each client's local training must produce signals that are right more often than wrong---``better than random.''
(Notation note: $\alpha_i$ here is a per-client noise rate; it is distinct from the Dirichlet concentration parameter $\alpha_{\mathrm{dir}}$ used in the empirical sweep below.)

Two structural FL facts make this easy to satisfy even under heavy non-IID:
\begin{enumerate}
    \item \textbf{Shared optimization surface.} Clients descend the \emph{same} loss from the \emph{same} global checkpoint. Gradient magnitudes may differ, but signs tend to agree on most coordinates---the optimum lives in the same direction for everyone.
    \item \textbf{Conditional independence by construction.} Within a round, clients run local SGD without inter-client communication. Conditional on the global checkpoint and the optimum $Y$, signals are independent.
\end{enumerate}

\subsubsection{Mapping from non-IID scenarios to $\alpha_i$}

\begin{table}[h]
\small
\centering
\setlength{\tabcolsep}{4pt}
\renewcommand{\arraystretch}{1.15}
\begin{tabular}{@{}p{3.4cm}p{4.4cm}p{4.4cm}@{}}
\toprule
\textbf{Non-IID scenario} & \textbf{Mechanism} & \textbf{Effect on $\alpha_i$} \\
\midrule
Label skew (Dirichlet $\alpha_{\mathrm{dir}}$) & Imbalanced class gradients on shared layers & Higher on coordinates tied to under-represented classes \\
Quantity skew & Small local $n_i$ raises SGD variance ($\sim 1/\sqrt{n_i}$) & Higher on informative coordinates \\
Feature / domain skew & Local distribution shifts vs.\ global target & Higher where local signal misaligns with optimum \\
Local label noise & Corrupted labels in local loss & Direct linear increase \\
\bottomrule
\end{tabular}
\caption{Each non-IID partition mode raises $\alpha_i$, but the categorical-world condition only fails when $\alpha_i \ge 1/2$---i.e., when local training is actively misleading rather than merely noisy.}
\label{tab:noniid_alpha_mapping}
\end{table}

In every case, the non-IID partition raises noise without flipping signal direction; this rescales $|\Delta|$ while preserving $\mathrm{sign}(\Delta)$.

\subsubsection{Empirical verification}

\textbf{MNIST + LeNet, Dirichlet label skew.} We partition MNIST across 5 clients using Dirichlet sampling at $\alpha_{\mathrm{dir}} \in \{0.1, 0.5, 1, 5, 100\}$ ($0.1$ = severe skew; $100$ = near-IID). Across all $\binom{5}{2} = 10$ client pairs and all 5 non-IID levels (50 pairs total), the empirical $\hat{\Delta}$ satisfies the categorical-world condition. At the harshest skew ($\alpha_{\mathrm{dir}} = 0.1$, global accuracy $56\%$), honest reward is $+0.039$ and any single-client deviation receives $-0.039$ in expectation---a clear separation even under extreme heterogeneity.

\textbf{DistilBERT + AG News.} For $\alpha_{\mathrm{dir}} \ge 0.5$, the condition holds for all 10 client pairs (e.g., $\alpha_{\mathrm{dir}} = 0.5$: honest $+0.150$, attack $-0.150$, accuracy $70.2\%$).

\subsubsection{Where KFCA \emph{should} fail}

The condition has a hard limit. If Client~1 trains only on Class~A and Client~2 only on Class~B, both models become near-constant predictors ($P_1(A \mid y) \approx 1$ for all $y$), so $\mathrm{Cov}_Y[\,\cdot\,] \approx 0$, $\Delta \to 0$, and the KFCA reward collapses to zero. This is the correct behaviour: without overlapping knowledge, mutual validation is impossible. It is an information-theoretic limit shared by Correlated Agreement, Dasgupta--Ghosh, and every peer-prediction mechanism without ground truth---not a KFCA-specific failure mode.

\subsection{Effort and Shirking}
\label{app:effortandshrinking}
The categorical-world structure is robust to partial effort.  
Suppose each client $i$ exerts effort independently with probability \(\eta_i \in [0,1]\).  
When exerting effort (\(e_i^k=1\)), an informative signal is drawn from \(P_i(\cdot \mid Y)\);  
when shirking (\(e_i^k=0\)), an uninformative signal is drawn independently from \(Q_i(\cdot)\),  
which is assumed to be independent of \(Y\) and of other clients' reports.  
Under the standard assumption of conditional independence given \(Y\),  
the observed signal correlation scales linearly with the joint probability that both clients exert effort.

\begin{lemma}[Covariance structure under shirking]
\label{lem:shirking}
Let \(\Delta(a,b)\) denote the signal correlation matrix
\[
    \Delta(a,b) = P(Z_1=a,Z_2=b) - P(Z_1=a)\,P(Z_2=b).
\]
If clients exert effort independently with probabilities \(\eta_1,\eta_2 \in [0,1]\), then
$$
    \Delta(a,b)
    = \eta_1 \eta_2\,
      Cov_{Y \sim \pi}\!\big(P_1(a\mid Y),\,P_2(b\mid Y)\big)
    = \eta_1 \eta_2\, \Delta_{\mathrm{inf}}(a,b),
$$
where \(\pi\) denotes the prior distribution of \(Y\),  
and \(\Delta_{\mathrm{inf}}\) is the correlation matrix under full effort (\(\eta_1=\eta_2=1\)).
\end{lemma}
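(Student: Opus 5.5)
The plan is to condition on the independent effort indicators and split the joint and marginal signal laws into effort/shirk mixtures. Write $e_1,e_2$ for the (independent) effort indicators, with $\mathbb{P}(e_i=1)=\eta_i$. I assume they are independent of $Y$ and of each other. The marginal of client $i$ is then
\[
\mathbb{P}(Z_i=a)=\eta_i\,\mathbb{E}_{Y\sim\pi}[P_i(a\mid Y)]+(1-\eta_i)\,Q_i(a).
\]
The joint law decomposes over the four effort configurations $(e_1,e_2)\in\{0,1\}^2$. Conditional on $Y$, the signals are independent in every configuration: under effort this is the standing assumption, and under shirking $Q_i$ is drawn independently of $Y$ and of the peer. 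Hence, with $u_i(a\mid Y)$ equal to $P_i(a\mid Y)$ or $Q_i(a)$ according to $e_i$,
\[
\mathbb{P}(Z_1=a,Z_2=b)=\sum_{e_1,e_2}\mathbb{P}(e_1)\mathbb{P}(e_2)\,\mathbb{E}_Y\big[u_1(a\mid Y)\,u_2(b\mid Y)\big].
\]

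The key step is a cleaner rewriting. Set $p_i(a\mid Y):=\eta_i P_i(a\mid Y)+(1-\eta_i)Q_i(a)$, the effort-averaged channel; this is legitimate because effort is independent of $Y$ and of the other client. Then
\[
\mathbb{P}(Z_1=a,Z_2=b)=\mathbb{E}_Y[p_1(a\mid Y)\,p_2(b\mid Y)], \qquad \mathbb{P}(Z_i=a)=\mathbb{E}_Y[p_i(a\mid Y)],
\]
so that $\Delta(a,b)=\mathrm{Cov}_Y\big(p_1(a\mid Y),p_2(b\mid Y)\big)$. Covariance is bilinear and vanishes on constants. Since $(1-\eta_i)Q_i(a)$ does not depend on $Y$, it drops out, and we get
\[
\Delta(a,b)=\eta_1\eta_2\,\mathrm{Cov}_Y\big(P_1(a\mid Y),P_2(b\mid Y)\big).
\]
Specialising to $\eta_1=\eta_2=1$ gives exactly $\Delta_{\mathrm{inf}}(a,b)$, which yields the second equality.

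The main obstacle is justifying the conditional-independence bookkeeping across the mixed configurations, namely that $Z_1\perp Z_2\mid Y$ still holds after averaging over efforts. This requires $e_1,e_2$ to be mutually independent and independent of $Y$, and the shirking draws to be independent of everything else. I will state these explicitly as the interpretation of ``exert effort independently'' and of the shirking model in the preceding paragraph. The per-task interpretation (effort drawn per task with probability $\eta_i$) makes this immediate. Once it is in place, the remaining steps are the routine bilinearity computation above.
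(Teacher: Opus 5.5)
Your proof is correct and follows essentially the paper's reasoning. The paper gives no formal proof, only the remark that, with shirking signals drawn from $Q_i$ independently of $Y$ and of peers, the correlation scales with the probability $\eta_1\eta_2$ that both clients exert effort; your effort-averaged channels $p_i(a\mid Y)=\eta_i P_i(a\mid Y)+(1-\eta_i)Q_i(a)$, combined with bilinearity of $\mathrm{Cov}_Y$ and the fact that covariance vanishes on constants, make that remark rigorous, and the independence assumptions you state (efforts independent of each other and of $Y$) are exactly the ones the paper's setup paragraph uses implicitly.
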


Hence, shirking uniformly scales down the magnitude of all correlations by \(\eta_1\eta_2\) while preserving their sign pattern:  
\[
    \Delta(a,a) > 0, \qquad \Delta(a,b) < 0, \quad a \neq b.
\]
If either client never exerts effort (\(\eta_i=0\)), then \(\Delta(a,b)=0\) for all \(a,b\),  
meaning a completely uninformative participant contributes no expected reward.

\subsection{Shapley Value}
\label{app:shapley_value}

\subsubsection{Definition and axioms}
The Shapley value assigns each client a unique score according to their marginal contribution to the overall outcome as a comprehensive assessment of each client's contribution. Formally, in a cooperative game $(N, v)$, where $N$ represents the set of clients and $v: 2^N \rightarrow \mathbb{R}$ is the characteristic function mapping a coalition of clients to a real value, the Shapley value $\phi_i$ for client $i \in N$ is defined as:

\begin{equation}
\phi_i(v)=\frac{1}{N!} \sum_{\Pi}\left[v\left(S_i^{\Pi} \cup\{i\}\right)-v\left(S_i^{\Pi}\right)\right]
\label{eq:shapley}
\end{equation}
where the sum ranges over all $N!$ permutations $\Pi$ of the clients. $S_{i}^{\Pi}$ is the set of clients in $N$ which precede $i$ in the respective permutation $\Pi$.\footnote{Let there be four clients 1, 2, 3, and 4. Given the permutation $\Pi=4213$, then $S_3^{\Pi} = \{4, 2, 1\}$.}  $v(S)$ denotes the contribution value (e.g. accuracy or loss in FL) of a given subset of clients $S \in N$.

The Shapley consists of four axioms of fairness that are believed a to be a fair allocation measure:

\begin{enumerate}
    \item \textbf{Efficiency/Pareto Optimality:} The total payout to all players in the coalition \(N\) should be equal to the total value generated by the coalition, i.e., 
    \[
    \sum_{i \in N} \phi_i(v) = v(N).
    \]
    This ensures that no value is left unallocated.
    
    \item \textbf{Symmetry:} If two players \(i\) and \(j\) contribute equally to all coalitions they are part of, they should receive the same payoff, i.e., 
    if $v(S \cup \{i\}) = v(S \cup \{j\})$ for all $S \subseteq N \setminus \{i, j\}$  then $\phi_i(v) = \phi_j(v)$.
    This ensures equal pay for equal contribution.
    
    \item \textbf{Null Player:} If a player \(i\) does not contribute any value to any coalition it is part of, the player should receive a payoff of 0, i.e., 
   if $v(S \cup \{i\}) = v(S)$ for all $S \subseteq N \setminus \{i\}$, then $\phi_i(v) = 0$.
    This axiom ensures that players that do not contribute to any coalition receive nothing.
    
    \item \textbf{Additivity:} If \(u\) and \(v\) are characteristic functions, then the payoff for a player under the sum of these games is equal to the sum of the player's payoffs under each of these games, i.e., 
    \[
    \phi(u + v) = \phi(u) + \phi(v).
    \]
    This axiom ensures the Shapley value is consistent when games are combined.
\end{enumerate}

\subsubsection{Shapley Value in Federated Learning}
Let $ \mathbf{A}$ be a learning algorithm and let $\theta_{S}$ be trained on the combined dataset $ D^{S} := \{X^S, Y^S\}$ where 
$ X^S = \{ X^i \cup X^j \cup X^k \cup \dots \}$ is the feature space and $ Y^S = \{ Y^i \cup Y^j \cup Y^k \cup \dots \}$ label space of clients $S={i, j, k \dots} \subseteq N$ on a testset $D^{test}$, then

\begin{equation}
\label{eq:marginal_contribution}
v(S) = V(\mathbf{A}(\theta_{init}, D^S), D^{test})
\end{equation}

for $S = {i,j,k } \in N$. 

\subsubsection{Example: Shapley value in FL with three clients}

Consider a FL system with three clients, denoted by $N = \{1, 2, 3\}$.  The model is initialized to classify MNIST with 10 labels and is not pretrained, resulting in an initial accuracy of $v(\emptyset)= 0.1$. Let's assume the following example accuracies for each possible subset of client $S$:

\begin{table}[t]
\caption{Accuracy $v(S)$ for Coalition $S$}
\centering
\begin{tabular}{lr}
    \toprule
    $\textbf{S}$ & $\textbf{v(S)}$ \\
    \midrule
    $\{\emptyset\}$ & 0.1 \\
    $\{1\}$ & 0.7 \\
    $\{2\}$ & 0.75 \\
    $\{3\}$ & 0.8 \\
    $\{1,2\}$ & 0.85 \\
    $\{1,3\}$ & 0.9 \\
    $\{2,3\}$ & 0.95 \\
    $\{1,2,3\}$ & 0.98 \\
    \bottomrule
\end{tabular}

\label{table:Shapley_example}
\end{table}

To compute the Shapley values for each client, we need to evaluate all possible coalitions and their marginal contributions. Following equation~\ref{eq:shapley} to calculate the Shapley value with the $v(S)$ values given in Table~\ref{table:Shapley_example}. The Shapley values for clients 1, 2, and 3 are

\begin{align}
  \phi_{1} &= \frac{1}{6}\left[\underbrace{0.6}_{0.7 - 0.1} + \frac{1}{2}\cdot\underbrace{0.1}_{0.85 - 0.75} + \frac{1}{2}\cdot\underbrace{0.1}_{0.9 - 0.8} + \underbrace{0.03}_{0.98 - 0.95}\right], \nonumber \\
  \phi_{2} &= \frac{1}{6}\left[\underbrace{0.65}_{0.75 - 0.1} + \frac{1}{2}\cdot\underbrace{0.15}_{0.85 - 0.7} + \frac{1}{2}\cdot\underbrace{0.15}_{0.95 - 0.8} + \underbrace{0.08}_{0.98 - 0.9}\right], \nonumber \\
  \phi_{3} &= \frac{1}{6}\left[\underbrace{0.7}_{0.8 - 0.1} + \underbrace{0.2}_{0.9 - 0.7} + \frac{1}{2}\cdot\underbrace{0.2}_{0.95 - 0.75} + \frac{1}{2}\cdot\underbrace{0.13}_{0.98 - 0.85}\right]. \nonumber
\end{align}

\subsubsection{Limitations of Shapley value}
Shapley value calculation in FL faces two major limitations, computational overhead and consensus on a testset.

Computing the Shapley value for each participant in FL is both computationally demanding and practically infeasible when the number of clients is large. The process necessitates (i) retraining and (ii) reevaluating the model for $N!$ subsets $S$ to obtain the accuracies $v(S)$ for all possible permutations of clients. For instance, for 10 clients, Shapley value computation would require retraining the model $10! = 3,628,800$ times. Given these challenges, existing literature primarily focuses on maintain computational feasibility while approximating the true Shapley value, through Monte Carlo estimation of permutations \cite{shapleyeval, shapleyMLicml, gtgShapley} and model reconstruction techniques 
\cite{ProfitAllocationFL, wei2020efficient, wang2020principled, gtgShapley}, and truncation \cite{gtgShapley}.

\begin{enumerate}
    \item \textit{Avoiding retraining}: Shapley value computation can be expedited by reconstructing submodels $\theta^{S}$ from gradients instead of retraining the model on $D^{S}$ for every subset $S$ (equation~\ref{eq:marginal_contribution}), according to \begin{equation}
v(S)=V\left((\theta^{global}+\sum_{i \in S} \frac{\left|D^i\right|}{\left|D^S\right|} \Delta\theta^i), D^{pub}\right),
\end{equation}
omitting computationally expensive training. However, the evaluating entity has to have access to gradients of all participating clients $i \in N$.

\item \textit{Avoiding low-impact calculations}: The marginal contribution of a client heavily depends on its position in the permutation $\Pi$. As the marginal utility usually decreases the later it joins the subset, \textit{truncating} the Shapley value calculation if its marginal contribution is not significantly different from the previous one, e.g. $|v(D^{N} )- v(S^{\Pi}_{i})| \leq \epsilon $ significantly speeds up the calculation process.
\item \textit{Reducing subsets S}
To further improve the efficiency of Shapley value computation, instead of going over all possible permutations, Monte Carlo estimation \cite{MonteCarloSimulation} can be applied to randomly sample permutations $\Pi$ \cite{shapleyMLicml,AISTATSshapleyML,ShapleyFLSpringer, gtgShapley}, and then calculate the expected SV according to 
\begin{equation}
     \phi_i=\mathbb{E}_{\pi \sim \uppi}\left[V\left(S_i^{\pi} \cup\{i\}\right)-V\left(S_i^{\pi}\right)\right]
\end{equation}

where $\uppi$ is the uniform distribution over all permutations $N!$. 
\end{enumerate}

Despite these optimization techniques, it still requires $3N$ Monte Carlo simulations for sufficient convergence, which equates to $3N$ model reconstructions and inferences on the testset $D^{pub}$ \cite{shapleyMLicml}. In addition to the computational overhead, SV requires a central authority that (i) determines the testset on which the SV is being evaluated and (ii) is entrusted to calculate the contributions and distribute the rewards fairly.

\subsubsection{Scalability of KFCA: Wall-Clock Measurements}
\label{app:scalability}

In contrast to the cost analysis of Shapley-value estimators above, KFCA itself imposes only $\mathcal{O}(npm)$ work per round (Sec.~\ref{KFCA}). Figure~\ref{fig:time} reports the corresponding wall-clock measurements: computation time scales linearly with the number of clients $n$ for a fixed number of peers per client $p$, and the choice of $p$ trades runtime for the standard error $\sigma/\sqrt{p}$ of the per-pair score. KFCA-QP was run on a simple CNN with 5{,}280 parameters and KFCA-D on a 10{,}000-unit dataset.

\begin{figure}[h]
    \centering
    \includegraphics[width=.5\linewidth]{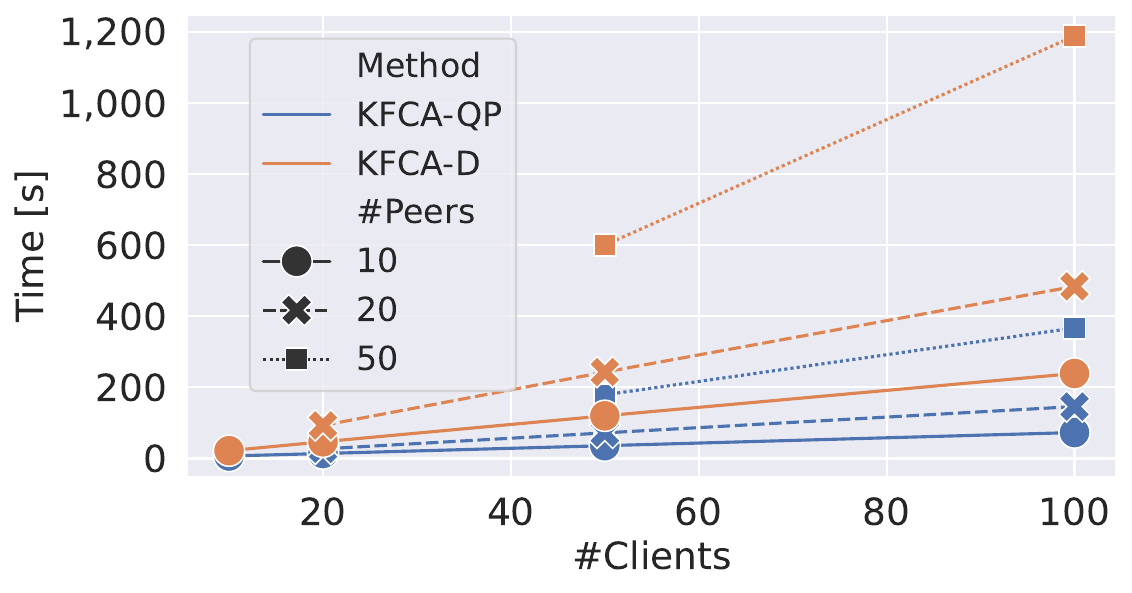}
    \caption{\small Computation time vs.\ \#Clients and \#Peers. KFCA-QP on a simple CNN with 5{,}280 parameters and KFCA-D on a 10{,}000-unit dataset, measured on a single Apple~Silicon M2~Pro (16~GB RAM).}
    \label{fig:time}
\end{figure}

\subsubsection{Experiments: Comparison Analysis Details}
\label{app:comparisonmethodsSV}

\begin{enumerate}
    \item \textbf{Shapley Value:} Original SV calculation for FL client contribution evaluation, based on Equation~\ref{eq:shapley}.
    \item \textbf{GTG-Shapley:} \cite{gtgShapley,GTGAAAI_Medical} Utilizes sub-model reconstruction with gradient updates from clients, guided sampling, in-round truncation of client permutations, and between-round truncation that drops an entire round of SV calculation if the remaining marginal utility or accuracy gain is small.
    \item \textbf{TMC-Shapley:} \cite{shapleyMLicml} Estimates the Shapley values by employing Monte Carlo sampling of permutations and selectively truncating the sub-model training and evaluations of irrelevant FL clients.
    \item \textbf{Group Testing:} \cite{AISTATSshapleyML} Uses Shapley differences instead of Shapley values, with the original Shapley value derived from the Shapley differences by solving a feasibility problem.
    \item \textbf{MR:} \cite{ProfitAllocationFL} In each round of the FL process, reconstructs the model of a subset of participants using their gradient updates. The final SV for a participant is obtained by summing up their SVs across all rounds.
    \item \textbf{Fed-SV:} \cite{wang2020principled} A group testing-based estimation approach. Performance of subsets used for estimating Shapley differences is evaluated on a sub-model reconstructed using participants' model parameters, and Shapley values are independently estimated in each round and subsequently aggregated.
    \item \textbf{TMR:} \cite{wei2020efficient} A gradient update-based method for SV calculation, incorporating a decay factor to include SV from previous rounds and a truncation factor to omit unimportant sub-model reconstructions.
    \item \textbf{Correlated Agreement (CA-QP):} \cite{CA_HONGTAO} Calculates the delta matrix based on all quantized model parameters of clients.
    \item \textbf{CA-D:} \cite{LIU_CA} Calculates the delta matrix using all labels in the public dataset.
    \item \textbf{KFCA-QP:} Simplified CA based on quantized model parameters of clients.
    \item \textbf{KFCA-D:} Simplified CA on the Test Dataset, assuming the delta matrix is the identity matrix and rewards are based on predictions on the test set.
    \item \textbf{MC KFCA-QP:} Monte Carlo version of KFCA-QP, randomly choosing \(X\) parameters \(Y\) times out of all parameters and averaging the rewards.
\end{enumerate}

\subsubsection{Experiments: Additional Details}
\label{app:experiments_details}

\paragraph{MNIST Shapley-value comparison setup (details).}
We follow the federated evaluation protocol of \cite{wei2020efficient} and use a simple CNN (21,840 parameters) trained on MNIST \cite{MH_MNIST}. The dataset is partitioned into five equal silos, and we form 10 clients by sampling client pairs from each silo (clients 1--2 from silo 1, clients 3--4 from silo 2, etc.).

\paragraph{Client heterogeneity/noise cases.}
We evaluate five settings:
\begin{enumerate}
    \item \textbf{Case 1 (i.i.d., same size):} Each client receives 10,840 images sampled to be balanced across digits.
    \item \textbf{Case 2 (label skew, same size):} Each client still has 10,840 images. Clients 1--2 have 80\% digits \texttt{1}/\texttt{2} and 20\% spread over remaining digits; clients 3--4 are skewed to \texttt{3}/\texttt{4}, etc.
    \item \textbf{Case 3 (size skew, i.i.d. labels):} All clients have the same label distribution, but different dataset sizes with ratios: 10\% (clients 1--2), 15\% (3--4), 20\% (5--6), 25\% (7--8), 30\% (9--10).
    \item \textbf{Case 4 (label noise, same size):} We flip labels at increasing rates: 0\% (clients 1--2), 5\% (3--4), 10\% (5--6), 15\% (7--8), 20\% (9--10).
    \item \textbf{Case 5 (feature noise, same size):} We add Gaussian feature noise at increasing rates: 0\% (clients 1--2), 5\% (3--4), 10\% (5--6), 15\% (7--8), 20\% (9--10).
\end{enumerate}

\paragraph{Methods compared.}
We compare KFCA to CA variants and to efficient Shapley value estimators; a brief description of each baseline is provided in Subsubsection~\ref{app:comparisonmethodsSV}.

\begin{figure*}[h]
  \centering
  \begin{subfigure}[b]{0.33\textwidth}
    \centering
    \includegraphics[width=\textwidth]{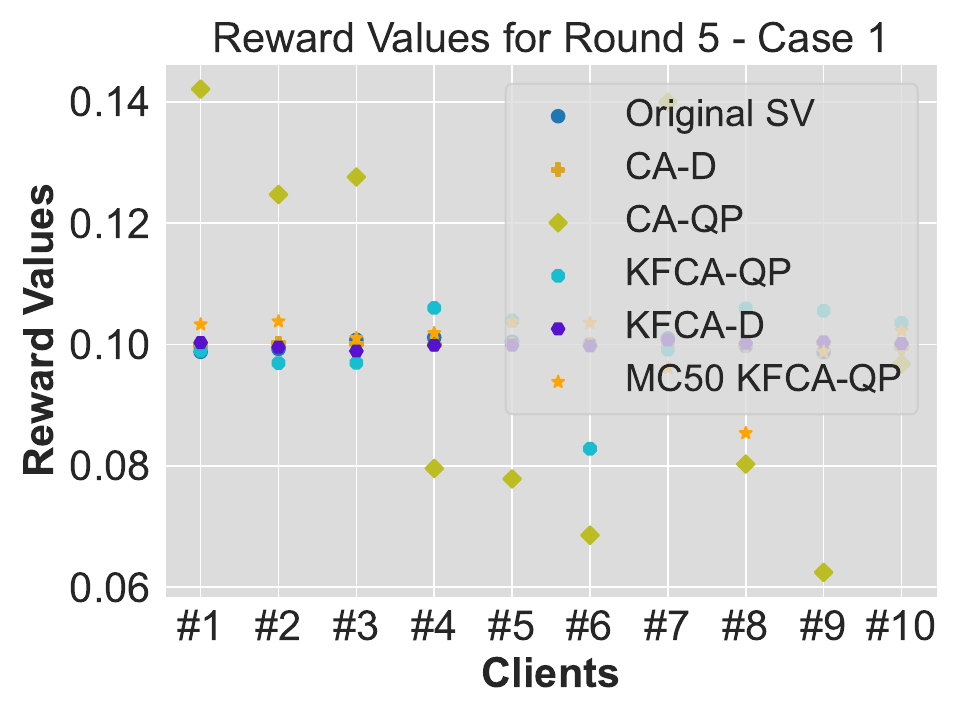}
    \caption{Case 1}
    \label{fig:graph1}
  \end{subfigure}
  \begin{subfigure}[b]{0.33\textwidth}
    \centering
    \includegraphics[width=\textwidth]{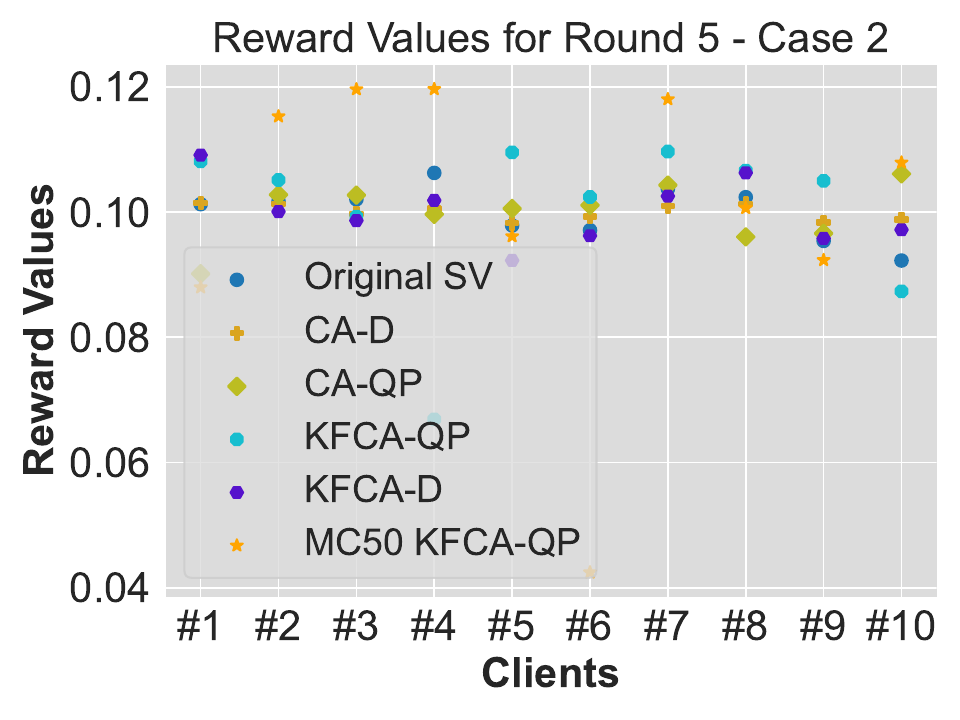}
    \caption{Case 2}
    \label{fig:graph2}
  \end{subfigure}
  \begin{subfigure}[b]{0.33\textwidth}
    \centering
    \includegraphics[width=\textwidth]{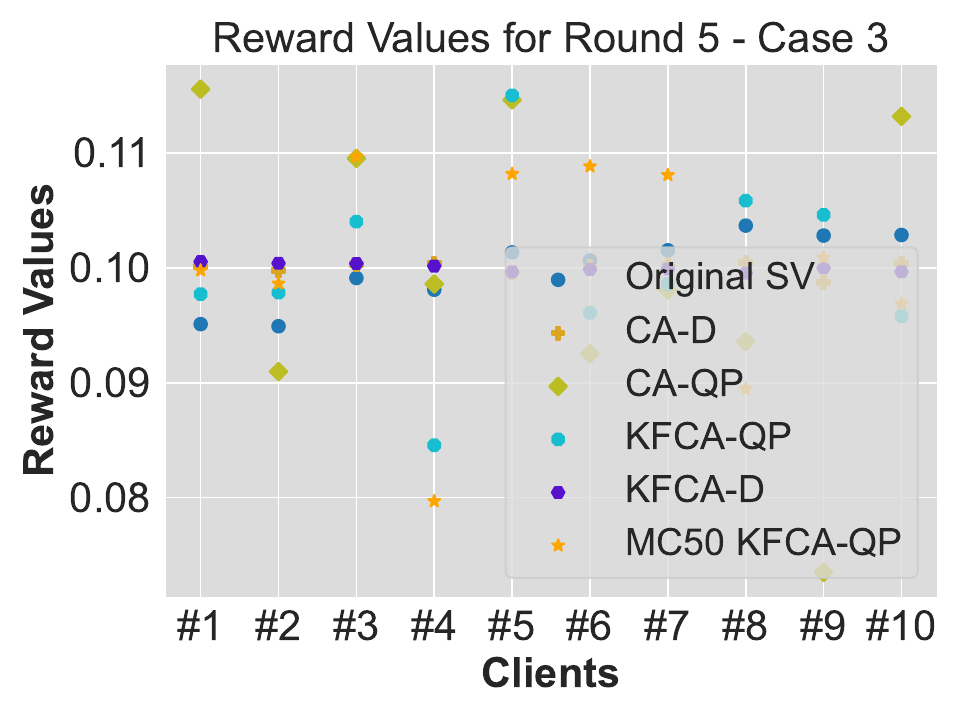}
    \caption{Case 3}
    \label{fig:graph3}
  \end{subfigure}
  \begin{subfigure}[b]{0.33\textwidth}
    \centering
    \includegraphics[width=\textwidth]{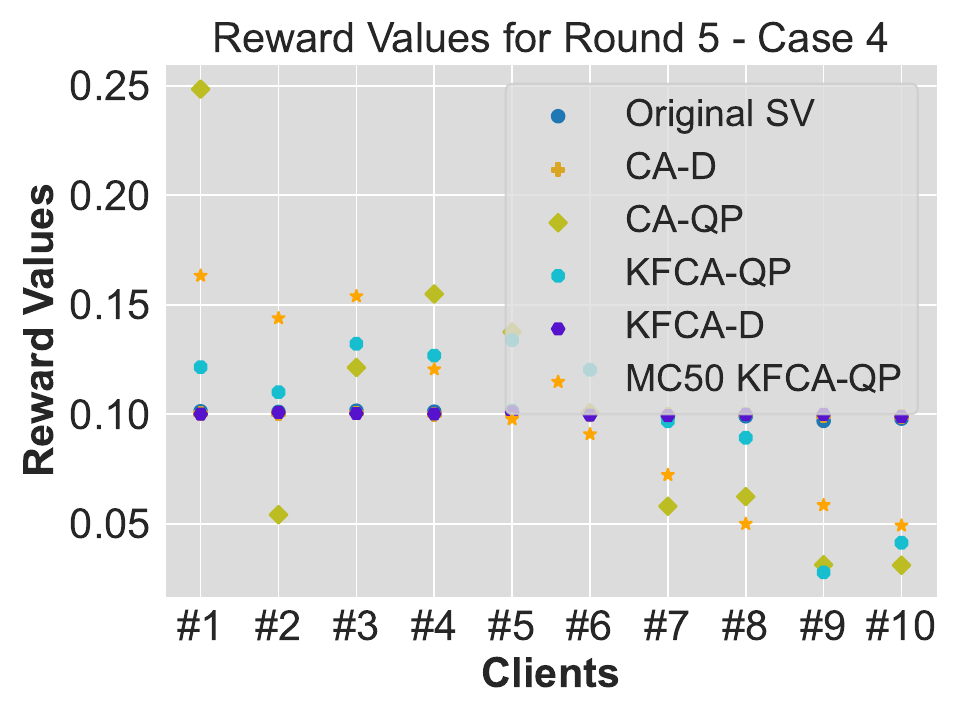}
    \caption{Case 4}
    \label{fig:graph4}
  \end{subfigure}
  \begin{subfigure}[b]{0.33\textwidth}
    \centering
    \includegraphics[width=\textwidth]{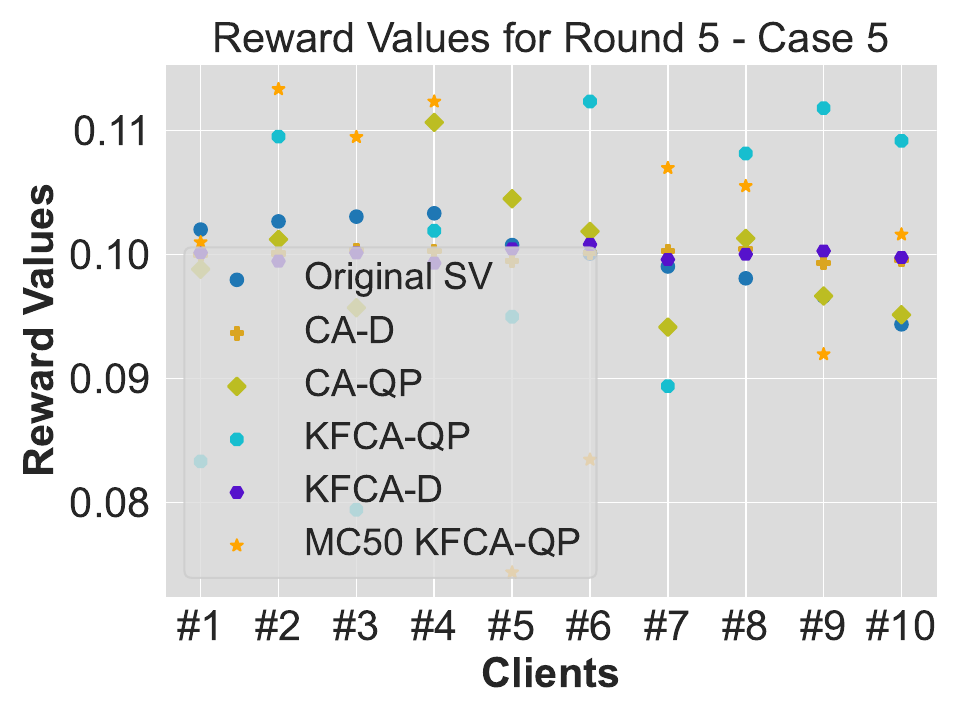}
    \caption{Case 5}
    \label{fig:graph5}
  \end{subfigure}
  \caption{Reward distribution for individual clients}
  \label{app:rewarddistributionsingleclient}
\end{figure*}

\subsection{KFCA-QP for Federated LLM Fine-Tuning}
\label{app:llm_finetune}
\subsubsection{Motivation: Federated LLM Fine-Tuning}

Large language models (LLMs) now deliver state-of-the-art performance across a wide range of tasks and domains \cite{gaoflowertune_2025}. 
Yet this progress faces two structural bottlenecks: modern LLM development remains heavily dependent on vast public corpora, and the supply of high-quality public text may be exhausted within a few years \cite{spiceworks_2024_data,villalobos_2022_runout}. 
At the same time, the most valuable domain knowledge—particularly in healthcare and finance—often sits inside institutions where privacy and security requirements make data centralization difficult, compounded by practical barriers in storage, transfer, and communication infrastructure \cite{gaoflowertune_2025}. 
Federated learning offers a practical alternative: it enables collaborative fine-tuning of pre-trained LLMs across decentralized data holders without sharing raw data, expanding access to sensitive datasets while preserving privacy and supporting domain-specialized models \cite{ye_2024_openfedllm,ye_2024_fedllmbench,gaoflowertune_2025}.

Federated LLM fine-tuning typically communicates only lightweight adapters (e.g., LoRA~\cite{hu2022lora} /DoRA~\cite{liu2024dora}), updating \(W\in\mathbb{R}^{d\times k}\) via \(W+\Delta W\) with \(\Delta W=BA\), \(A\in\mathbb{R}^{r\times k}\), \(B\in\mathbb{R}^{d\times r}\), \(r\ll\min(d,k)\) (and DoRA further decomposes weights into magnitude and direction) usually with no shared evaluation/test set available.

We validate KFCA-QP on the FlowerTune LLM Leaderboard~\cite{flowertune_leaderboard_website}, which— to the best of our knowledge—constitutes a first-of-its-kind, public, cross-domain benchmark suite for federated fine-tuning of LLMs.

\begin{figure}[!htbp]
\centering
\includegraphics[width=\textwidth]{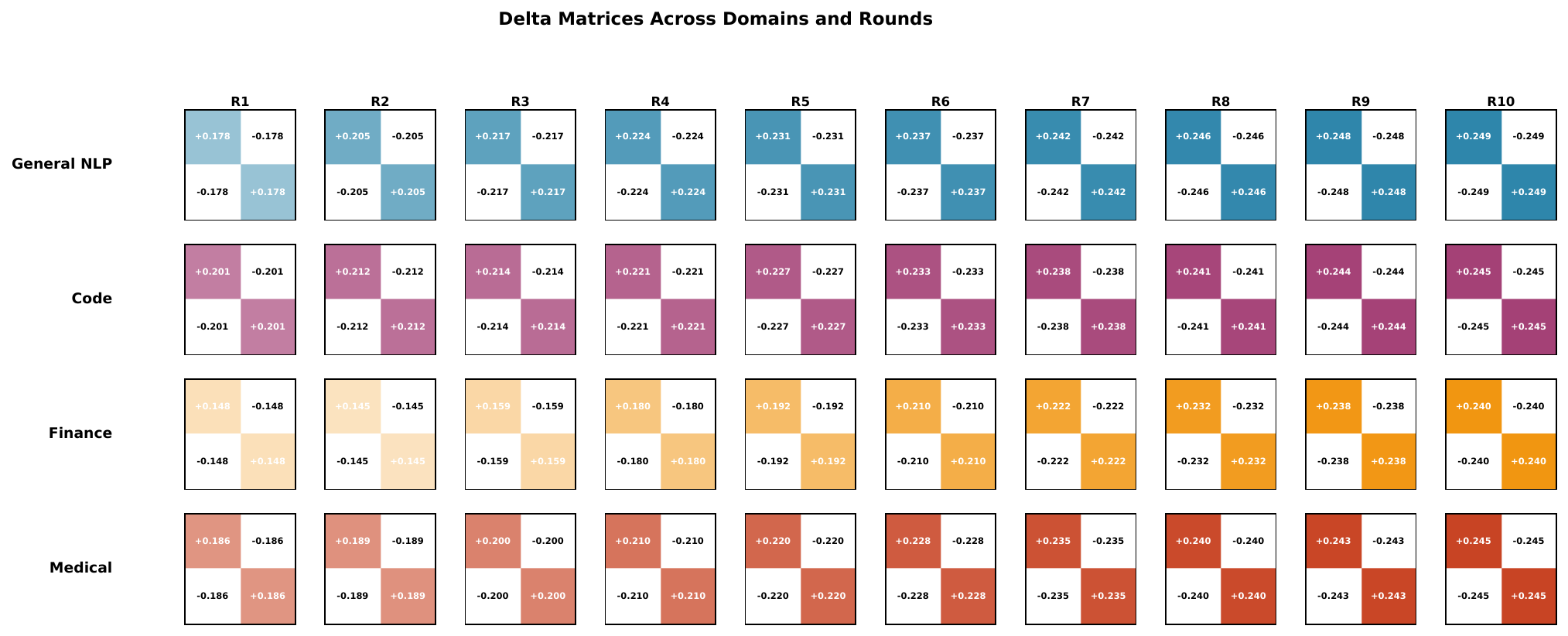}
\caption{Empirical $\hat{\Delta}$ matrices for quantized adapter updates across domains (rows) and rounds (columns). Each cell is a $2\times2$ matrix over $\{-1,+1\}$. All matrices satisfy the categorical-world condition: $\Delta(-1,-1),\Delta(+1,+1)>0$ and $\Delta(-1,+1),\Delta(+1,-1)<0$.}
\label{fig:delta_matrices}
\end{figure}

\subsubsection{Experimental Setup}
We instantiate KFCA-QP on the four winner configurations (General NLP, Finance, Medical, Code) from the FlowerTune LLM Leaderboard~\cite{gao2025flowertune}. Following FlowerTune, we operate in a parameter-efficient tuning regime: the base LLM is frozen and clients train only adapter parameters (LoRA/DoRA) locally, communicating only adapter updates for FedAvg aggregation each round. Table~\ref{tab:flowertune_setup} summarizes training and federation hyperparameters, and Table~\ref{tab:flowertune_eval_results} reports the evaluation suites/metrics together with downstream performance after 10 rounds (including total communication).

\begin{table}[t]
\centering
\caption{FlowerTune federated adapter-tuning setup (winner configurations). All runs use FedAvg, 4-bit quantization, sequence length 512, and 3 local epochs (capped at 10 local steps per round). Client splits are i.i.d.\ and equally sized.}
\label{tab:flowertune_setup}
\small
\setlength{\tabcolsep}{4pt}
\renewcommand{\arraystretch}{1.15}
\resizebox{\columnwidth}{!}{%
\begin{tabular}{lllccccccc}
\toprule
\textbf{Domain} & \textbf{Base model} & \textbf{Train data} & \textbf{\#Clients} & \textbf{frac\_fit} & \textbf{Per rnd} & \textbf{Adapter} & \textbf{Targets} & \textbf{Batch$\times$accum} & \textbf{LR (max/min)} \\
\midrule
General NLP & InternLM3-8B-Instruct & alpaca-gpt4~\cite{peng2023instruction} & 20 & 0.1 & 2 & LoRA ($r$=16, $\alpha$=32) & q/k/v/o proj & $1\times4$  & $5{\times}10^{-5}/10^{-6}$ \\
Finance     & Qwen2.5-7B             & fingpt-sentiment-train~\cite{yang2023fingpt} & 20 & 0.1 & 2 & LoRA ($r$=32, $\alpha$=64) & all (proj+MLP) & $16\times1$ & $5{\times}10^{-4}/5{\times}10^{-5}$ \\
Medical     & Llama3.1-Aloe-Beta-8B  & medical-flashcards~\cite{han2023medalpaca} & 20 & 0.1 & 2 & LoRA ($r$=32, $\alpha$=128) & q\_proj, v\_proj & $16\times2$ & $10^{-4}/10^{-5}$ \\
Code        & Qwen3-8B               & code-alpaca-20k~\cite{chaudhary2023code} & 10 & 0.2 & 2 & LoRA+DoRA ($r$=8, $\alpha$=16) & all (proj+MLP) & $2\times4$  & $5{\times}10^{-5}/5{\times}10^{-6}$ \\
\bottomrule
\end{tabular}%
}
\end{table}

\begin{table}[t]
\centering
\caption{FlowerTune evaluation suites and downstream performance after 10 federated rounds (winner configurations). MQA = multiple-choice question answering. \textbf{Comm.} is total communicated adapter payload (upload+download) aggregated over all rounds.}
\label{tab:flowertune_eval_results}
\small
\setlength{\tabcolsep}{3.5pt}
\renewcommand{\arraystretch}{1.15}

\begin{tabular}{l p{0.32\columnwidth} c cc p{0.30\columnwidth}}
\toprule
\textbf{Domain} &
\textbf{Eval datasets} &
\textbf{Metric} &
\textbf{Comm.} &
\textbf{Avg.} &
\textbf{Suite scores (labeled)} \\
\midrule
General NLP &
MMLU (STEM/Hum./Soc.\ Sci.)~\cite{hendrycks2021measuring} &
Acc. &
2.9\,GB &
69.25 &
STEM 66.13; Soc.\ Sci.\ 80.76; Hum.\ 60.87 \\
\addlinespace[1pt]

Finance &
FPB~\cite{malo2014good}, FIQA~\cite{maia2018fiqa}, TFNS~\cite{yang2020finbert} &
Acc. &
30.1\,GB &
84.58 &
FPB 85.89; FIQA 83.22; TFNS 84.63 \\
\addlinespace[1pt]

Medical &
PubMedQA~\cite{jin2019pubmedqa}, MedMCQA~\cite{pal2022medmcqa}, MedQA~\cite{jin2021medqa}, CareQA~\cite{arias-duart-etal-2025-automatic} &
Acc. &
2.0\,GB &
63.57 &
PubMedQA 74.80; MedMCQA 55.39; MedQA 59.31; CareQA 64.79 \\
\addlinespace[1pt]

Code &
MBPP~\cite{austin2021program}, HumanEval~\cite{chen2021evaluating}, MultiPL-E (JS/C++)~\cite{cassano2023multiple} &
Pass@1 &
3.5\,GB &
65.27 &
MBPP 56.20; HumanEval 70.73; MultiPL-E JS 68.32; C++ 65.84 \\
\bottomrule
\end{tabular}
\end{table}

\paragraph{KFCA-QP signal construction.}
After each round $t$, client $i$ uploads its adapter update $\Delta\theta_i^{t} \in \mathbb{R}^{d}$, where $d$ is the number of trainable LoRA/DoRA adapter parameters. We construct categorical signals via 1-bit quantization:
\[
\widetilde{Z}_i^{t}[p] = \mathrm{sign}(\Delta\theta_i^{t}[p]) \in \{-1,+1\}, \quad \forall p \in [d],
\]
treating each parameter coordinate as a task. KFCA-QP rewards are computed from pairwise agreement between sampled clients' quantized reports.

\subsubsection{Results: Categorical-World Condition and Incentive Compatibility}

Figure~\ref{fig:delta_matrices} visualizes the empirical delta matrices for quantized LoRA/DoRA adapter updates across all four domains and 10 rounds. The diagonal entries (colored by domain) are consistently positive while off-diagonal entries are negative, confirming $\text{Sign}(\hat{\Delta}) = \mathbb{I}$---the categorical-world condition required for KFCA-QP across all four domains and every single round.

\begin{figure}[!htbp]
\centering
\includegraphics[width=\textwidth]{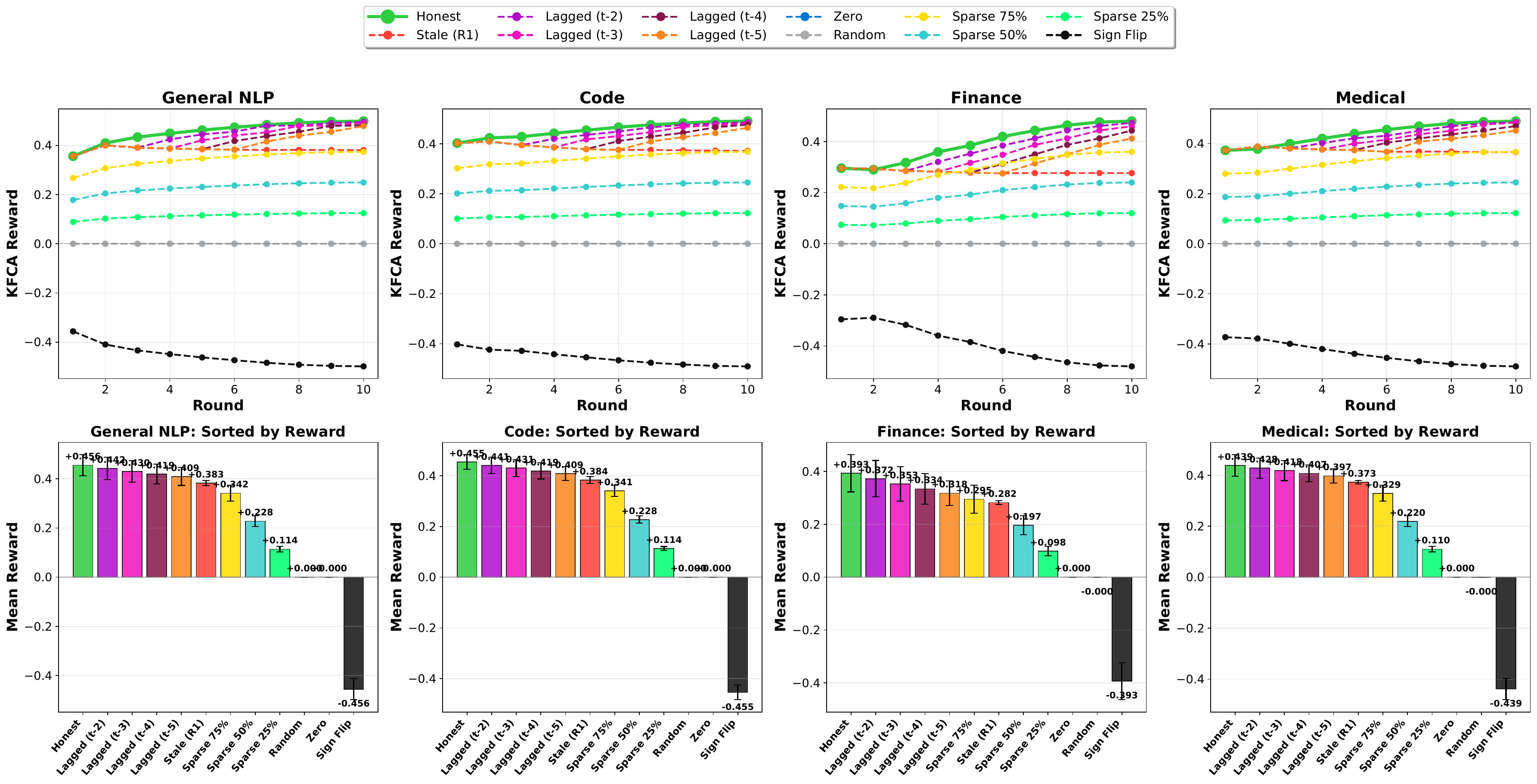}
\caption{KFCA-QP rewards under various attack strategies across four federated LLM fine-tuning domains. \textbf{Top row:} Reward trajectories over 10 communication rounds. \textbf{Bottom row:} Mean rewards sorted by magnitude. Honest reporting consistently achieves the highest reward, while adversarial strategies (sign flip, zero, random) receive zero or negative rewards. Lagged and sparse attacks receive intermediate rewards proportional to their deviation from honest behavior, demonstrating KFCA-QP's incentive compatibility.}
\label{fig:attack_analysis}
\end{figure}

\paragraph{1. Temporal Manipulation Attacks.}
\begin{itemize}
    \item \textbf{Stale Update (R1):} The client submits the round-1 adapter update $\Delta\theta_{s}^{1}$ for all subsequent rounds, ignoring local training thereafter.
    \item \textbf{Lagged Update (t-$k$):} The client submits $\Delta\theta_{s}^{t-k}$ in round $t$, where $k \in \{2,3,4,5\}$.
\end{itemize}

\paragraph{2. Free-Rider Attacks.}
\begin{itemize}
    \item \textbf{Zero Attack:} The client submits an all-zero update $\Delta\theta_{s}^{t}=\mathbf{0}$. After sign quantization, this maps to a constant report, breaking correlation with honest peers.
    \item \textbf{Random Attack:} The client submits random noise $\Delta\theta_{s}^{t}\sim \mathcal{N}(0,\sigma^{2}I)$ scaled to match the honest update's variance.
\end{itemize}

\paragraph{3. Partial Manipulation Attacks.}
\begin{itemize}
    \item \textbf{Sparse Attack ($p$\%):} The client reports honestly for $p$\% of coordinates and replaces the remaining $(100-p)$\% with random values. We test $p\in\{25,50,75\}$.
    \item \textbf{Sign Flip Attack:} The client inverts all signs, i.e., $\Delta\theta_{s}^{t}\leftarrow -\Delta\theta_{s}^{t}$.
\end{itemize}

\paragraph{Results.}
Figure~\ref{fig:attack_analysis} reports the KFCA-QP rewards across all four domains using peer-to-peer comparison (client vs.\ another honest client from the same round). The results demonstrate strong incentive compatibility:

\begin{itemize}
    \item \textbf{Honest reporting dominates:} Honest participation consistently achieves the highest reward across all domains (approximately $+0.21$ to $+0.45$).
    
    \item \textbf{Complete attacks severely penalized:} Sign flip receives strongly negative rewards ($\approx -0.37$ to $-0.46$), confirming that adversarial manipulation is punished. Zero and random attacks yield zero expected reward, as predicted by theory.
    
    \item \textbf{Temporal attacks partially penalized:} Lagged updates receive diminishing rewards as the lag increases (t-2 $>$ t-3 $>$ t-4 $>$ t-5 $>$ stale), demonstrating that KFCA-QP detects staleness proportionally to deviation from current consensus.
    
    \item \textbf{Sparse attacks interpolate:} Rewards scale approximately linearly with the honest fraction---Sparse 75\% achieves roughly 50\% of honest reward, Sparse 50\% achieves 25\%, and Sparse 25\% approaches zero. This confirms KFCA-QP rewards truthfulness proportionally.
\end{itemize}

These results validate KFCA-QP's incentive compatibility: the mechanism rewards honest participation, penalizes adversarial behavior, and appropriately scales penalties for partial deviations. Importantly, no attack strategy achieves higher reward than honest reporting, eliminating rational incentives for strategic manipulation in federated LLM fine-tuning.

\subsection{Decentralized FL: KFCA on Blockchain}
\label{app:blockchain}

\paragraph{Motivation.}
Although FL is decentralized at the data level, standard deployments still rely on a central server to (i) decide who participates, (ii) act as a single point of failure, and (iii) arbitrate and execute payments in incentivized FL.
A blockchain can replace this trusted coordinator with a public, append-only ledger and \emph{immutable} smart contracts that enforce protocol rules and payouts without relying on any single party.
Modern platforms (e.g., Ethereum, Hyperledger) provide programmable contracts whose state transitions are transparent and auditable, yet tamper-resistant under decentralized consensus.

\paragraph{Why KFCA fits the on-chain setting.}
Smart contracts operate under strict computation and storage budgets (every validator re-executes transactions).
This clashes with reward mechanisms that require heavy evaluation, repeated retraining, or estimating global report distributions.
KFCA is well matched to this environment because it is (i) lightweight, (ii) one-shot (rewards from a single matched comparison), and (iii) distribution-knowledge free.
Consequently, KFCA enables \emph{verifiable, real-time} contribution scoring and automated payouts with minimal on-chain burden.

\subsubsection{KFCA on Blockchain: Design Draft}
KFCA is particularly promising for \emph{decentralized incentivized FL} because its scoring rule is simple enough to be enforced by smart contracts.
In contrast, existing decentralized incentive schemes often either (i) require specialized, application-specific chains, or (ii) face severe scalability limits due to the overhead of decentralized reward computation \cite{LeonSP, LW1_witt2021rewardbased}.
Figure~\ref{fig:KFCA_X_Blockchain} sketches a practical integration of KFCA (e.g., KFCA-D; similarly KFCA-QP) on an EVM-like platform:

\begin{enumerate}
    \item \textbf{Client registration (identity \& rules).}
    Clients register a public address in a smart contract and authenticate via signature.
    Optionally, clients lock a stake (Sybil resistance / slashable misbehavior) or registration is permissioned (known consortium participants).

    \item \textbf{Local training (off-chain).}
    Clients train locally under the agreed FL specification (model, optimizer, rounds, etc.).
    This step remains off-chain.

    \item \textbf{Public evaluation signal (client-side).}
    Given a public reference set $X^{pub}$, each client produces a report:
    predicted labels (KFCA-D) or a quantized update / adapter signature (KFCA-QP).
    Figure~\ref{fig:KFCA_X_Blockchain} illustrates the KFCA-D instance.

    \item \textbf{Commitment (privacy-preserving commit--reveal).}
    Raw reports are not posted on-chain.
    Instead, clients store the report off-chain (e.g., IPFS \cite{benet2014ipfs}) and commit to it on-chain via
    \[
        c_i = H(\text{CID}_i \,\|\, s_i),
    \]
    where $\text{CID}_i$ is the IPFS content identifier and $s_i$ is a client-chosen salt.
    The commit is signed and recorded, preventing post-hoc manipulation while hiding the report from brute-force guessing.

    \item \textbf{Unbiased pairing via VRF (oracle randomness).}
    KFCA requires pairing clients.
    To avoid manipulable randomness, a decentralized VRF (e.g., Chainlink VRF) outputs a verifiable random seed used to sample a pair of committed clients uniformly at random.

    \item \textbf{Reveal \& verification.}
    The selected clients reveal $(\text{CID}_i, s_i)$.
    The contract verifies $H(\text{CID}_i \,\|\, s_i)=c_i$ before accepting the report pointer.
    This guarantees integrity: the revealed report matches the earlier commitment.

    \item \textbf{On-chain KFCA scoring (randomized set selection).}
    Using VRF-derived randomness, the contract samples the index sets required by KFCA (bonus set $M_b$ and penalty sets $M_1, M_2$),
    fetches the corresponding report entries from the off-chain object (or via an agreed data-availability interface),
    and computes the KFCA score for the paired clients transparently.

    \item \textbf{Automatic payout (and optional slashing).}
    Scores are mapped to payments by the protocol-defined transfer rule.
    Funds are released atomically to client addresses; if staking is enabled, provable protocol violations can trigger slashing.
\end{enumerate}

\paragraph{Prototype status and scope.}
We implemented a proof-of-concept KFCA contract in Solidity for the EVM\footnote{We will release the repository after acceptance to preserve double-blind review.}.
A full decentralized FL stack is beyond this paper; the above is a \emph{design draft} that highlights the minimal building blocks needed to make KFCA verifiable and incentive-compatible on-chain.
Key engineering choices left for future work include:
(i) how to integrate aggregation (e.g., FedAvg) without reintroducing a trusted coordinator,
(ii) who instantiates tasks and funds rewards (single sponsor vs.\ DAO/consortium),
and (iii) robustness evaluation under realistic adversaries (Sybil attacks, censorship, data-availability failures, and report-privacy leakage).
\begin{figure*}[t]
    \centering
    \includegraphics[width=\textwidth]{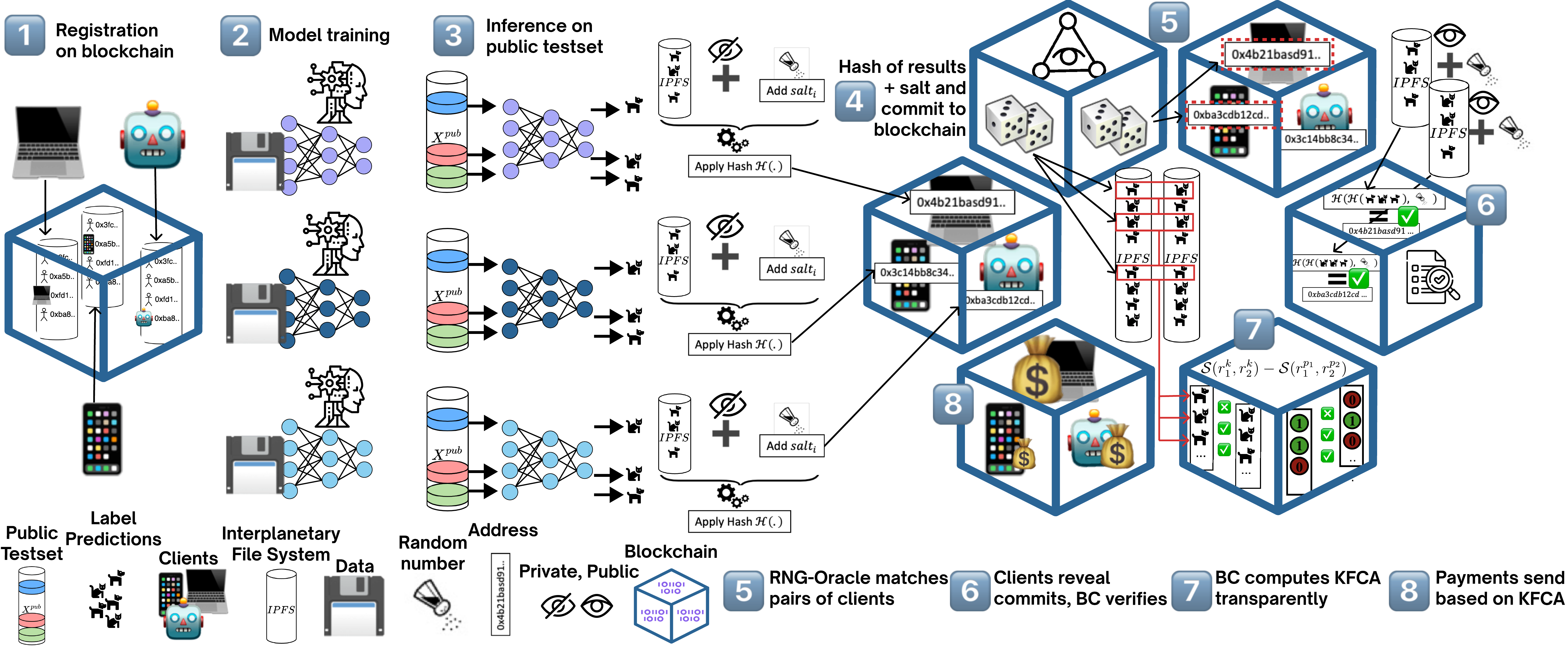}
    \caption{Decentralized and Incentivized Federated Learning: Application of KFCA-D on blockchain.}
    \label{fig:KFCA_X_Blockchain}
\end{figure*}

\subsection{KFCA-QP and KFCA-D Algorithms}
\label{app:algorithms}
In communication round \(t\), let \(M\) denote the set of task indices used for scoring (e.g., public test-set indices for KFCA-D, or parameter-coordinate indices for KFCA-QP). Following the multi-task peer prediction (MTPP) mechanism in Definition~\ref{def:MTPP}, we sample disjoint subsets \(M_b \subset M\) (bonus tasks) and \(M_1, M_2 \subset M \setminus M_b\) (penalty task sets for the two compared clients).

Let \(r_{i,t}^k \in [L]\) denote client \(i\)'s reported label for task \(k \in M\) in round \(t\). Under KFCA, the scoring rule is \(\mathcal{S}_{\mathrm{KFCA}}(r_1,r_2)=\mathds{1}\{r_1=r_2\}\) (Definition~\ref{def:KFCA}). For a given client \(i\), we sample \(P\) peers (denoted \(\mathrm{Peers}(i)\)) and define the final reward as the average KFCA-MTPP payment across peers and bonus tasks:
\[
Q_t^i
= \frac{1}{P\,|M_b|}
\sum_{j \in \mathrm{Peers}(i)}
\sum_{k \in M_b}
\Big(\mathds{1}\{r_{i,t}^k=r_{j,t}^k\} - \mathds{1}\{r_{i,t}^{p_1}=r_{j,t}^{p_2}\}\Big),
\]
where for each \(k \in M_b\) we draw \(p_1 \sim \mathrm{Unif}(M_1)\) and \(p_2 \sim \mathrm{Unif}(M_2)\). Algorithm~\ref{alg:kfca-fl_body} implements this computation.

\begin{algorithm}[H]
\caption{Calculate Final Reward for Client $i$ at Round $t$}
\label{alg:kfca-fl_body}
\begin{algorithmic}[1]
\REQUIRE Task-index set \(M\); disjoint subsets \(M_b \subset M\), \(M_1, M_2 \subset M \setminus M_b\); round \(t\); client set \(N\); target client \(i \in N\); reports \(\{r_{u,t}^k\}_{u \in N,\, k \in M}\); number of sampled peers \(P\)
\ENSURE Final reward/payment of client \(i\) in round \(t\): \(Q_t^i\)

\STATE \(\mathrm{Peers}(i) \leftarrow\) sample \(P\) peers uniformly from \(N \setminus \{i\}\)
\STATE \(Q_t^i \leftarrow 0\)

\FOR{each peer \(j \in \mathrm{Peers}(i)\)}
    \FOR{each bonus task \(k \in M_b\)}
        \STATE \(p_1 \leftarrow\) sample uniformly from \(M_1\); \(p_2 \leftarrow\) sample uniformly from \(M_2\)
        \STATE \(Q_t^i \leftarrow Q_t^i + \mathds{1}\{r_{i,t}^k = r_{j,t}^k\} - \mathds{1}\{r_{i,t}^{p_1} = r_{j,t}^{p_2}\}\)
    \ENDFOR
\ENDFOR
\STATE \(Q_t^i \leftarrow \frac{1}{P\,|M_b|} Q_t^i\)
\end{algorithmic}
\end{algorithm}

\begin{algorithm}[H]
\caption{KFCA-FedAvg Algorithm}
\label{alg:kfca-fedavg}
\begin{algorithmic}[1]
\REQUIRE Client set \(N\); number of rounds \(T\); initial global model \(\theta^0\); method \(\in\{\mathrm{KFCA\text{-}D},\mathrm{KFCA\text{-}QP}\}\); task-index set \(M\); number of sampled peers \(P\)
\ENSURE Global model $\theta^*$

\FOR{\(t = 1\) to \(T\)}
    \FOR{each client \(i \in N\)}
        \STATE \(\theta_{i,t} \leftarrow\) train local model starting from \(\theta^{t-1}\)
        \IF{method is KFCA-D}
            \STATE \(r_{i,t}^k \leftarrow\) predicted label of \(\theta_{i,t}\) on public test example \(k\), for all \(k \in M\)
        \ELSIF{method is KFCA-QP}
            \STATE \(\Delta\theta_{i,t} \leftarrow\) local update (e.g., model/adapter update) produced in round \(t\)
            \STATE \(r_{i,t}^k \leftarrow \mathrm{sign}(\Delta\theta_{i,t}[k])\), for all \(k \in M\)
        \ENDIF
    \ENDFOR
    \STATE Sample disjoint subsets \(M_b \subset M\) and \(M_1, M_2 \subset M \setminus M_b\)
    \FOR{each client \(i \in N\)}
        \STATE \(Q_t^i \leftarrow\) Algorithm~\ref{alg:kfca-fl_body}(\(M, M_b, M_1, M_2, t, N, i, \{r_{u,t}^k\}, P\))
    \ENDFOR
    \STATE \(\theta^{t} \leftarrow \mathrm{FedAvg}(\{\theta_{i,t}\}_{i \in N})\)
\ENDFOR
\STATE \(\theta^* \leftarrow \theta^{T}\)
\end{algorithmic}
\end{algorithm}



\end{document}